\documentclass[sigconf]{acmart}
\usepackage{algorithm}

\usepackage{amssymb}
\usepackage{mathtools}
\usepackage{amsthm}
\usepackage{subcaption}
\newtheorem{remark}{Remark}
\usepackage{algorithmicx} \usepackage{algpseudocode}
\usepackage{pgfplots}
\pgfplotsset{compat=1.18}
\usepgfplotslibrary{groupplots}
\usetikzlibrary{calc}
\usepackage{subcaption}   
\AtBeginDocument{%
  }

\setcopyright{acmlicensed}
\copyrightyear{2026}
\acmYear{2026}
\setcopyright{cc}
\setcctype{by}
\acmConference[CAIS '26]{ACM Conference on AI and Agentic Systems}{May 26--29, 2026}{San Jose, CA, USA}
\acmBooktitle{ACM Conference on AI and Agentic Systems (CAIS '26), May 26--29, 2026, San Jose, CA, USA}
\acmDOI{10.1145/3786335.3813146}
\acmISBN{979-8-4007-2415-2/2026/05}

\begin{document}

\title{Echo: KV-Cache-Free Associative Recall with Spectral Koopman Operators}


\author{Anupama Sridhar}
\affiliation{%
  \institution{}
  \state{California}
  \country{USA}
  }
\orcid{0009-0008-5282-4024}

\author{Alexander Johansen}
\affiliation{%
  \institution{Stanford University}
  \department{Computer Science}
  \state{California}
  \country{USA}}
\email{arjo@stanford.edu}
\orcid{0000-0002-4993-7916}

\renewcommand{\shortauthors}{Sridhar and Johansen}

\begin{abstract}
Long chain-of-thought reasoning and agentic tool-calling produce traces spanning tens of thousands of tokens, yet Transformer KV caches grow linearly with sequence length, creating a memory bottleneck on commodity hardware.
State-space models offer constant-memory recurrence but suffer a \emph{memory cliff}: retrieval accuracy collapses once the gap between a stored fact and its query exceeds the effective horizon of the recurrent
state.
We introduce \textbf{Echo}, a KV-cache-free associative recall architecture built around \textbf{Spectral Koopman Attention (SKA)}; a drop-in
replacement for attention layers that augments SSM blocks with a closed-form dynamical operator whose sufficient statistics are accumulated in constant
memory with no KV cache. Echo fits a spectral linear system to the key and value history via kernel ridge regression and retrieves through a learned power-iterated filter, all from $O(r^{2})$ streaming state where $r$ is a small projection rank.
On the Multi-Query Associative Recall benchmark, a pure Mamba-2 SSM fails to exceed chance accuracy (${\sim}3\%$) across all gap lengths and KV-pair counts, while at the 50M parameter scale SKA-augmented models achieve $100\%$ retrieval accuracy on every configuration tested, including distractor gaps of $4{,}096$ tokens with
$32$ KV pairs.
Across five additional transfer benchmarks including needle-in-a-haystack, tool-trace, and multi-hop retrieval, SKA consistently outperforms both pure SSM and SSM+Attention hybrids while maintaining constant inference memory.
Ablations confirm that the spectral operator, not the prefix masking strategy, drives the retrieval gain.
\end{abstract}

\begin{CCSXML}
<ccs2012>
   <concept>
       <concept_id>10010147.10010257.10010321.10010335</concept_id>
       <concept_desc>Computing methodologies~Spectral methods</concept_desc>
       <concept_significance>500</concept_significance>
       </concept>
   <concept>
       <concept_id>10010147.10010178.10010187.10010193</concept_id>
       <concept_desc>Computing methodologies~Temporal reasoning</concept_desc>
       <concept_significance>500</concept_significance>
       </concept>
 </ccs2012>
\end{CCSXML}

\ccsdesc[500]{Computing methodologies~Spectral methods}
\ccsdesc[500]{Computing methodologies~Temporal reasoning}

\keywords{Echo, State Space Models (SSM), Spectral Koopman Attention (SKA), Memory Cliff, Koopman Operator Theory, Constant-Memory Retrieval, Associative Recall, Mamba}


\maketitle

\section{Introduction}

State space models such as Mamba~\cite{gu2023mamba} and Mamba-2~\cite{dao2024mamba2} offer linear-time inference and constant memory per decoding step, but compress the entire sequence into a fixed-size recurrent
state.
This compression is lossy: on associative recall benchmarks, pure SSMs fail to retrieve bindings across even moderate distractor gaps~\cite{arora2024zoology, jelassi2024repeat}, a failure mode we call the \emph{memory cliff}.

The standard remedy is attention. Hybrid architectures interleave SSM layers with causal attention blocks, restoring content-addressed lookup at the cost of a KV cache that grows linearly with context length, often dominating inference memory in production deployments.

We observe that content-addressed retrieval can be recast as kernel ridge regression whose sufficient statistics are accumulated as running sums in $O(r^{2}+rP)$ memory, eliminating the KV cache entirely.

We instantiate this in \textbf{Echo}, an architecture built around \textbf{Spectral Koopman Attention (SKA)}; a drop-in replacement for attention layers.
SKA accumulates key-value covariance statistics in a fixed-size streaming state, fits a whitened Koopman operator whose spectral modes identify which bindings to retain, and retrieves through a power-iterated filter.

On MQAR at the 50M parameter scale, Echo achieves perfect retrieval across all gap lengths and KV-pair counts where a pure Mamba-2 scores ${\sim}3\%$.
Across five transfer benchmarks spanning needle-in-a-haystack, tool-trace, and multi-hop retrieval, Echo outperforms both pure SSM and SSM+Attention hybrids while using fewer total parameters and constant inference memory.
We also test 180M parameter language modeling and find that Echo matches or exceeds performance on five of six zero-shot language modeling benchmarks compared to other 100M-400M parameter models. 

These results suggest that SSMs need not trade away their efficiency advantages
to recover the recall capabilities of attention.
\section{Related Works}
\subsection{State Space Models and Retrieval Limitations}
Structured state space models capture long-range dependencies through
continuous-time linear recurrences~\citep{gu2022s4, gupta2022diagonal,
smith2023s5}. Selective SSMs made these parameters input-dependent, enabling
content-based filtering with linear-time inference~\citep{gu2023mamba}, and
the SSD framework established a duality between SSMs and structured attention
for faster training~\citep{dao2024mamba2}.
However, the fixed-size recurrent state imposes well-documented retrieval
failures. On multi-query associative recall, a 70M-parameter attention model
outperforms a 1.4B-parameter gated-convolution model~\citep{arora2024zoology},
and formal analysis confirms that SSMs have strictly lower copying capacity
than Transformers~\citep{jelassi2024repeat}. Distant token relevance decays
exponentially~\citep{wang2024understanding}, and even adding a single attention
layer out of eight restores in-context learning capabilities that pure SSMs
lack~\citep{lieber2024jamba}.
\subsection{Hybrid Architectures and KV Cache Compaction}
These limitations have driven hybrid designs that interleave SSM and attention
layers~\citep{lieber2024jamba, hymba2024, waleffe2024empirical}, with
production deployments replacing up to 92\% of attention layers with SSM blocks
while retaining the remainder for retrieval~\citep{nemotronh2025}. A parallel
line of work reduces the KV cache through token eviction~\citep{zhang2023h2o,
snapkv2024}, quantization~\citep{liu2023kivi}, and dynamic
sparsification~\citep{dms2025neurips}.
Both approaches treat attention as the necessary retrieval mechanism and work to mitigate its memory cost.
Echo takes a different path: it replaces attention layers entirely with a constant-memory retrieval module (SKA), eliminating linear cache growth rather than compressing it.
\subsection{Koopman Operator Theory}
Koopman theory analyzes nonlinear dynamical systems through linear operators on observable functions~\citep{koopman1931hamiltonian, mezic2005spectral}, with machine learning applications in trajectory prediction via learned observable bases~\citep{lusch2018deep, williams2015data}.
SKA adapts the same least-squares estimation structure for a different purpose: associative retrieval within a sequence model, using persistent eigenmodes for content-addressed lookup rather than dynamical forecasting.
\subsection{In-Context Learning as Implicit Regression}
A line of work analyzes in-context learning in transformers as implicit execution of classical regression algorithms. 
\citet{akyurek2023learning} prove by construction that a transformer with constant depth and $O(d^2)$ hidden width can compute the closed-form ridge regression solution via iterative Sherman-Morrison rank-one updates, and empirically show that trained in-context learners behave like ridge predictors.
\citet{mahankali2024one} further show that a single layer of linear self-attention, trained on noisy linear regression, provably converges to one step of preconditioned gradient descent on the least-squares objective, and that this one-step predictor is Bayes-optimal under a Gaussian prior.
Recent work extends the training-dynamics picture to softmax attention, showing fast global convergence to the same ridge solution under suitable preconditioning~\citep{goel2026training}.
SKA takes the closed-form view and rather than unrolling Sherman-Morrison updates across transformer layers, or relying on gradient descent to discover the ridge solution implicitly, SKA accumulates the second-order sufficient statistics directly and solves the ridge system once per chunk via Cholesky factorization.
The Koopman spectral filter then adds a persistence-weighted readout that amplifies eigenmodes of the key dynamics with $|\lambda| \approx 1$ and suppresses transient modes.

Concurrent with our work, the test-time regression (TTR) framework~\citep{tumma2026preconditioned} unifies linear attention, DeltaNet~\citep{yang2024deltanet}, Gated DeltaNet~\citep{yang2025gated}, and MesaNet~\citep{vonoswald2025mesanet} as approximate solvers of the same online least-squares objective, differing only in the quality of their approximation. MesaNet solves this objective to optimality via conjugate gradient. SKA's value readout $B_v = C_v \tilde{G}^{-1}$ (Proposition~I.2) computes the same optimal solution in closed form via Cholesky factorization, avoiding CG's iterative convergence criterion and numerical instability. The Koopman power filter (\S\ref{app:power_filter}) then adds spectral selectivity with no analogue in the TTR hierarchy, amplifying persistent eigenmodes of the key dynamics while suppressing transient noise. This is a capability that the regression objective alone, however accurately solved, cannot express.
\section{Method}
\label{sec:method}

We present a hybrid architecture that replaces both softmax attention and SwiGLU feedforward layers with Koopman-operator-based alternatives, while retaining state-space models for local sequence processing.
Every component admits $O(1)$ inference state per layer, constant regardless of context length. The design rests on two observations: (1)~content-addressed retrieval can be recast as ridge regression whose sufficient statistics are accumulated as running sums in fixed memory, and (2)~the resulting computation is exactly the predictor that softmax attention learns to approximate through gradient descent on a nonconvex landscape~\cite{goel2026training}; but computed in closed form, with no KV cache.

\subsection{SSM Backbone}
\label{sec:ssm}

The majority of sequence layers use a selective state-space model for local, sequential processing.
For from-scratch language model training we use Mamba-2~\cite{dao2024mamba2}, which maintains a recurrent state $h_t \in \mathbb{R}^{N \times P}$ updated via a hardware-efficient selective scan:
\begin{equation}
    h_t = \bar{A}_t \, h_t + \bar{B}_t \, x_t,
    \qquad y_t = C_t \, h_t,
\end{equation}
where $\bar{A}_t$, $\bar{B}_t$, $C_t$ are input-dependent parameters.
For controlled synthetic experiments (\S\ref{sec:synthetic}) we use Mamba-2~\cite{lahoti2026mamba}, letting us isolate SKA's contribution on tasks that target SSM failure modes.

SSMs offer linear-time inference and constant memory per step, but their fixed-size recurrent state imposes a fundamental tradeoff: the multiplicative decay $\bar{A}_t$ attenuates all previously stored information at every step, causing retrieval accuracy to degrade exponentially with distance~\cite{wang2024understanding}.
This is the \emph{memory cliff}; not a capacity limitation but a structural consequence of lossy compression through multiplicative recurrence.

\subsection{Spectral Koopman Attention}
\label{sec:ska}
SKA replaces attention layers with a constant-memory retrieval module that estimates a compact linear dynamical model from running sufficient statistics and retrieves values at query positions, without materializing a $T \times T$ attention matrix or maintaining a KV-cache.

\subsubsection{Projection and Normalization}
\label{sec:ska-proj}
Learned linear maps produce per-head keys, queries, and values:
\begin{equation}
    z_t = W_k x_t \in \mathbb{R}^r, \quad
    z_t^q = W_q x_t \in \mathbb{R}^r, \quad
    v_t = W_v x_t \in \mathbb{R}^P,
\end{equation}
where $r$ is the Koopman rank and $P = d/H$ is the head dimension.
Key and query projections are initialized orthogonally to promote well-conditioned covariance matrices. SKA uses its own learned projections from the residual stream rather than sharing with Mamba's internal features: the SSM recurrence favors smooth, temporally correlated representations, while SKA needs discriminative inner products for content addressing.

Keys and queries are divided by a shared factor $m = \max_{s \le T} \|z_s\|_2$ (clamped at $10^{-6}$):
\begin{equation}
    \hat{z}_t = z_t / m, \qquad \hat{z}_t^q = z_t^q / m.
    \label{eq:key-norm}
\end{equation}
Unlike per-token $\ell_2$-normalization (which inflates low-norm noise tokens to unit norm), sequence-max normalization preserves the relative norm structure: high-norm fact-bearing tokens remain dominant while low-norm distractors are not amplified.
During autoregressive decoding, $m$ is frozen at the prefill value to prevent mixing normalization scales across accumulated statistics.\footnote{If needed, one could rescale all statistics by $(m_{\mathrm{old}}/m_{\mathrm{new}})^2$; we have not found this necessary.}

\subsubsection{Sufficient Statistic Accumulation}
\label{sec:ska-stats}
SKA estimates its operators from three sufficient statistics: a regularized Gram matrix $G$, a cross-temporal covariance $M$, and a value-key covariance $C_v$.
The central insight is that these statistics are \emph{additive}: each new token contributes $\hat{z}_t\hat{z}_t^\top$ to $G$, $v_t\hat{z}_t^\top$ to $C_v$, and $\hat{z}_t\hat{z}_{t-1}^\top$ to $M$ without modifying any previously accumulated contribution.
The ridge regression solution $B^* = C_v \tilde{G}^{-1}$ depends on the data \emph{only} through these second-order statistics, which are exact regardless of how many tokens contributed to them.

This is why SKA achieves $O(1)$ memory without the memory cliff: the fixed-size state is not a lossy bottleneck but an \emph{exact sufficient statistic} for the retrieval computation.
Longer sequences produce better-conditioned estimates---the perturbation bound (Theorem~C.1) tightens as $\lambda_{\min}(\tilde{G})$ grows---rather than degrading retrieval accuracy. This stands in contrast to SSM recurrence, where multiplicative decay progressively erases stored bindings.

How the statistics are assembled depends on the inference setting; the operator estimation (\S\ref{sec:ska-operator}) is identical in all cases.

\paragraph{Prefix mode (agentic inference).}
When processing a long prefix (tool output, retrieved documents, multi-turn context) before generating, the statistics are accumulated over the entire prefix with no chunking:
\begin{align}
    G &= \textstyle\sum_{t=1}^{T_{\mathrm{pre}}}
         \hat{z}_t \hat{z}_t^\top + \varepsilon I_r,
         \label{eq:gram} \\
    M &= \textstyle\sum_{t=2}^{T_{\mathrm{pre}}}
         \hat{z}_t \hat{z}_{t-1}^\top,
         \label{eq:cross} \\
    C_v &= \textstyle\sum_{t=1}^{T_{\mathrm{pre}}}
         v_t \hat{z}_t^\top,
         \label{eq:value-cov}
\end{align}
where $\varepsilon = 10^{-3}$ is ridge regularization. The operator is estimated once; subsequent tokens update the accumulators incrementally (\S\ref{sec:recurrent}). Every prefix token contributes to the operator serving the first generated token, with no chunk-boundary latency.

\paragraph{Masked mode (retrieval).}
For bidirectional encoding (e.g., dense retrieval), the sums in Eqs.~\ref{eq:gram}--\ref{eq:value-cov} run over all unmasked positions, yielding a single operator from the full visible context.

\paragraph{Chunk-causal mode (LM training).}
Causal LM training requires each position to attend only to earlier positions.
Computing $T$ separate operator estimates would be prohibitive, so we partition the sequence into $C = \lceil T/S \rceil$ non-overlapping chunks of size $S$ and estimate one operator per chunk from strictly preceding chunks.
Per-chunk statistics are computed in parallel:
\begin{equation}
    G_c = \!\sum_{t \in \text{chunk } c}\!
        \hat{z}_t \hat{z}_t^\top, \quad
    M_c = \!\!\sum_{\substack{t \in \text{chunk } c \\
        t > \text{first}(c)}}\!\!
        \hat{z}_t \hat{z}_{t-1}^\top, \quad
    C_{v,c} = \!\sum_{t \in \text{chunk } c}\!
        v_t \hat{z}_t^\top.
    \label{eq:per-chunk}
\end{equation}
Accumulated prefix statistics available to queries in chunk $c$ are:
\begin{align}
    G^{(c)} &= \varepsilon I_r +
        \textstyle\sum_{i=0}^{c-1} G_i,
        \label{eq:gram-prefix} \\
    M^{(c)} &= \textstyle\sum_{i=0}^{c-1} M_i
        + \sum_{i=1}^{c-1}
        \hat{z}_{i,0}\,\hat{z}_{i-1,-1}^\top,
        \label{eq:cross-prefix} \\
    C_v^{(c)} &= \textstyle\sum_{i=0}^{c-1} C_{v,i},
        \label{eq:value-cov-prefix}
\end{align}
computed via \texttt{cumsum} with a one-position shift (exclusive prefix sum).
The second sum in Eq.~\ref{eq:cross-prefix} captures boundary transitions between adjacent chunks.
The default $S = 64$ balances causal granularity against the number of $O(r^3)$ Cholesky factorizations.
In all three modes the downstream estimation is identical, so we drop the superscript $(c)$ below.

\subsubsection{Operator Estimation, Retrieval, and Connection to
Attention}
\label{sec:ska-operator}

The Gram matrix is factored as $G = LL^\top$ via batched Cholesky decomposition (with jittered fallback $G + 10^{-4}I$ on failure).
All covariance accumulation and solves are in FP32.

The Koopman transition and value readout operators are:
\begin{equation}
    A_w = M G^{-1}, \qquad B_v = C_v G^{-1},
    \label{eq:operators}
\end{equation}
computed via Cholesky solves.

\paragraph{Connection to attention.}
The value readout $B_v = C_v \tilde{G}^{-1}$ is the unique minimizer of the regularized least-squares objective $\min_B \sum_t \|v_t - B\hat{z}_t\|^2 + \varepsilon\|B\|_F^2$ (Proposition~I.2).
This is the natural objective for associative retrieval: predict each value from its key with minimum error.
Recent theoretical work~\cite{goel2026training} shows that softmax attention, when trained to convergence, implicitly solves this same objective---its global optimum produces outputs approximating $C_v \tilde{G}^{-1} q$.
Linear attention computes only $C_v q$, which corresponds to a single gradient step from zero on the same objective and omits the Gram inverse that decorrelates correlated keys (Appendix~I).
SKA computes the full ridge solution directly via Cholesky factorization in $O(r^3)$, bypassing the nonconvex optimization landscape that attention must navigate through gradient descent.

\paragraph{Spectral normalization.}
To ensure stability, $A_w$ is spectrally normalized by its leading singular value (6 power iterations, detached):
\begin{equation}
    A_w \leftarrow \frac{\gamma}{\max(\sigma_{\max}(A_w), 1)} A_w,
\end{equation}
where $\gamma \in [1.0, 1.5]$ is learnable. This bounds the spectral radius, preventing unbounded amplification while allowing gradients through $A_w$ (straight-through estimation).

\paragraph{Power spectral filter and readout.}
The Koopman transition operator $A_w$ adds spectral selectivity with no analogue in standard attention: its eigenvalues identify which key-value bindings are \emph{persistent} ($|\lambda_i| \approx 1$) versus \emph{transient} ($|\lambda_i| \ll 1$).
Rather than eigendecomposition, persistent modes are amplified by repeated application.
Given whitened query $w_q = L^{-1}\hat{z}^q$, the output is:
\begin{equation}
    w_f = A_w^K w_q, \qquad z_f = L\,w_f, \qquad
    \hat{y} = B_v z_f \in \mathbb{R}^P,
\end{equation}
where $K{=}2$ controls selectivity: modes with $|\lambda_i| \ll 1$ are exponentially suppressed.
The output is scaled by learnable $\eta$ (init 1.5) and projected to model dimension via a zero-initialized linear layer, so SKA begins as a near-identity perturbation of the residual stream.
When available, the post-Cholesky chain is fused into a single Triton kernel that loads $A_w$, $L$, $B_v$, $w_q$ into SRAM for $K+2$ matmuls without HBM round-trips.

When the power filter is disabled ($K{=}0$), the output reduces to $\eta \cdot C_v \tilde{G}^{-1} \hat{z}^q$---exactly the ridge regression predictor.
The power filter with $K \geq 1$ adds spectral weighting that amplifies persistent bindings and suppresses noise, going beyond what attention's implicit optimization recovers.

\paragraph{Gradient flow.}
The closed-form computation also benefits training.
The SKA Jacobian with respect to the query is $J_{\mathrm{SKA}} = B_v \cdot L \cdot A_w^K \cdot L^{-1}$, a product of full-rank matrices by construction.
In contrast, each softmax attention layer's backward pass compresses gradients by its entropy-dependent effective rank~\cite{godey2026lost}.
Critically, these compressions \emph{compound across depth}: a gradient passing through $n_a$ attention layers suffers $n_a$ successive rank bottlenecks, and since the rank of a matrix product is bounded by the minimum rank of any factor, even a single low-entropy attention layer can throttle gradient flow to all preceding layers.
In Echo the gradient encounters zero softmax compressions beyond the LM head: SKA contributes a full-rank $r \times r$ linear system, and the Koopman MLP contributes a norm-preserving orthogonal rotation (\S\ref{sec:koopman-mlp}).
The only rank bottleneck in the entire backward pass is the LM head itself.
This multiplicative preservation of gradient rank across depth---not just per-layer efficiency---may explain the data efficiency observed in our experiments (Table~\ref{tab:benchmark}), where 10B tokens suffice to match models trained on 100B (Appendix~J).

\subsection{Spectral Koopman MLP}
\label{sec:koopman-mlp}

SwiGLU~\cite{shazeer2020glu} uses three $d \times d_{\mathrm{ff}}$ matrices ($3\,d\,d_{\mathrm{ff}}$ parameters).
We replace it with a feedforward layer that applies one step of a spectrally-constrained dynamical system, requiring only two matrices ($2\,d\,d_k$ parameters, a $\tfrac{1}{3}$ reduction).

\paragraph{Lift.}
The layer-normed input is projected and activated: $g = \mathrm{SiLU}(W_{\mathrm{lift}}\,h) \in \mathbb{R}^{d_k}$, where $d_k = \lceil d \cdot e / 64 \rceil \times 64$ with $e \approx 8/3$.

\paragraph{Block-diagonal rotation.}
The vector $g$ is reshaped into $d_k/2$ pairs and evolved by learnable complex eigenvalues $\lambda_i = \gamma_i + i\omega_i$:
\begin{equation}
    \begin{pmatrix} z_1^{(i)} \\ z_2^{(i)} \end{pmatrix}
    =
    \begin{pmatrix}
        \gamma_i & \omega_i \\
        -\omega_i & \gamma_i
    \end{pmatrix}
    \begin{pmatrix} g_1^{(i)} \\ g_2^{(i)} \end{pmatrix}.
\end{equation}
The eigenvalue modulus is clamped to the unit disk ($|\lambda_i| \le 1$), the pointwise analogue of the spectral normalization applied to $A_w$ in SKA.

\paragraph{Readout.}
The rotated pairs are interleaved back and projected: $\mathrm{output} = x + W_{\mathrm{readout}}\,z$. An optional gated variant adds $\sigma(W_{\mathrm{gate}}\,h)$ elementwise before readout, bringing parameters to SwiGLU parity while retaining spectral structure.

\paragraph{Gradient preservation.}
The block-diagonal rotation matrix $R$ is orthogonal: $\|R\|_2 = 1$, $\sigma_{\min}(R) = \sigma_{\max}(R) = 1$.
It is exactly norm-preserving and rank-preserving in the backward pass.
Combined with SKA's full-rank Jacobian, Echo introduces \emph{zero} additional rank compression beyond the LM head across the entire network depth.
In contrast, SwiGLU's backward pass flows through three unconstrained weight matrices interleaved with a gating nonlinearity, where gradient magnitude depends on the learned weights without structural guarantees on conditioning.

\subsection{Architecture and Baselines}
\label{sec:architecture}

\paragraph{Echo (Mamba-2 + SKA + Koopman MLP).}
Echo follows the Nemotron-H~\cite{nemotronh2025} hybrid layout: $N$ blocks, each a sequence layer followed by a feedforward layer.
The majority of sequence layers are Mamba-2; a fraction are replaced with evenly-spaced SKA layers. All feedforward layers use the Spectral Koopman MLP. The first and last layers are always Mamba-2.

For the 180M model: $d{=}768$, $N{=}24$, 2 SKA layers at positions $\{8,16\}$, $r{=}48$, $H{=}12$. For the 50M model: $d{=}448$, $N{=}16$, 4 SKA layers at $\{3,7,11,15\}$, $r{=}56$, $H{=}7$, $S{=}64$, $d_{\mathrm{state}}{=}64$.

\paragraph{Baselines.}
To isolate the contribution of each component we compare three variants at 180M, all sharing the same $d$, $N$, and layer layout: (1)~\emph{Mamba-2 only + SwiGLU}: all sequence layers are Mamba-2 with no global retrieval, and all feedforward layers are SwiGLU; (2)~\emph{Mamba-2 + SKA + SwiGLU}: SKA layers are inserted at the same positions as Echo but feedforward layers remain SwiGLU, isolating SKA's retrieval contribution; (3)~Echo described above.
For retrieval-focused evaluations (NIAH, MQAR, inverse matching) we primarily report results from the Mamba-2 + SKA configurations, as these most directly test the global retrieval mechanism.
For synthetic experiments targeting SSM failure modes we use Mamba-2 as the backbone with SKA at the final layers, isolating retrieval in a controlled setting.

\subsection{O(1) Recurrent Inference}
\label{sec:recurrent}

Every component admits $O(1)$ state during autoregressive generation:
\begin{itemize}
    \item \textbf{Mamba-2/3:} conv state
        ($d_{\mathrm{inner}} \times d_{\mathrm{conv}}$) plus SSM state
        ($n_{\mathrm{heads}} \times d_{\mathrm{head}} \times
        d_{\mathrm{state}}$), both fixed.
    \item \textbf{SKA:} accumulators
        $G \in \mathbb{R}^{r \times r}$,
        $M \in \mathbb{R}^{r \times r}$,
        $C_v \in \mathbb{R}^{P \times r}$, previous key
        $z_{t-1} \in \mathbb{R}^r$, and scalar max norm, totaling
        $2r^2 + Pr + r + 1$ floats per head, independent of $T$. Each
        step updates
        $G \leftarrow G + \hat{z}_t\hat{z}_t^\top$,
        $M \leftarrow M + \hat{z}_t\hat{z}_{t-1}^\top$,
        $C_v \leftarrow C_v + v_t\hat{z}_t^\top$ and re-estimates the
        operator.
    \item \textbf{Koopman MLP:} stateless (pointwise).
\end{itemize}
For the 50M model the total SKA state is ${\approx}77$\,KB in FP32.
Combined with Mamba-2, total recurrent state is on the order of hundreds of kilobytes, enabling generation over arbitrarily long contexts with bounded memory.

\subsection{Complexity Summary}
\label{sec:complexity}

\begin{table}[h]
\centering
\small
\caption{Per-layer complexity. $T$: sequence length, $d$: model dim,
$r$: SKA rank, $S$: chunk size.}
\label{tab:complexity}
\begin{tabular}{lcc}
\toprule
\textbf{Component} & \textbf{Train (parallel)}
    & \textbf{Infer (per step)} \\
\midrule
Causal Attention & $O(T^2 d)$ & $O(Td)$ \\
Mamba-2 & $O(Td)$ & $O(d)$ \\
SKA & $O(Tr^2/S + r^3)$ & $O(r^3)$ \\
SwiGLU MLP & $O(Td\,d_{\mathrm{ff}})$
    & $O(d\,d_{\mathrm{ff}})$ \\
Koopman MLP & $O(Td\,d_k)$ & $O(d\,d_k)$ \\
\bottomrule
\end{tabular}
\end{table}

During training, SKA's cost is dominated by the $O(r^3)$ Cholesky per chunk per head, amortized over $S$ tokens; since $r \ll T$ the total is sublinear in $T$.
At inference each step requires an $O(r^3)$ solve, negligible for $r \le 64$. The Koopman MLP saves $\tfrac{1}{3}$ of parameters relative to SwiGLU at identical FLOPs per token.
\section{Experimental Setup}
\label{sec:experiments}

We evaluate Echo in three regimes: controlled synthetic tasks that isolate SSM retrieval failures (\S\ref{sec:synthetic}), MQAR fine-tuning at the 50M scale (\S\ref{sec:mqar-finetune}), and from-scratch language modeling at 180M parameters (\S\ref{sec:lm-setup}).

\subsection{Synthetic Transfer Experiments}
\label{sec:synthetic}

\paragraph{Models.}
We compare three architectures trained from scratch with identical optimization (AdamW, $\text{lr}=3\times10^{-4}$, 6{,}000 steps, batch size 16): \textbf{SSM} (4-layer Mamba-2, 1.11M params), \textbf{SSM+Attn} (2 Mamba-2 + 2 causal attention layers with RoPE, 998K params), and \textbf{SSM+SKA} (2 Mamba-2 + 2 SKA layers, rank $r{=}24$, 982K params; the smallest model).
All share $d{=}128$, 4 heads, $d_{\text{state}}{=}16$, $V{=}128$, and SwiGLU MLPs.

\paragraph{Protocol.}
Each model is trained once on a mixed dataset of system-prompt and tool-trace examples, then evaluated zero-shot on five held-out retrieval tasks it was never trained on: system prompt recall, tool trace retrieval, needle-in-a-haystack, multi-hop composition, and common word identification.
Each task varies KV pairs and sequence length to produce accuracy grids.
We additionally evaluate length generalization at $2$--$64{\times}$ beyond training sequence length, and verify that recurrent inference (token-by-token accumulation of Koopman statistics) preserves accuracy relative to the parallel forward pass.

\subsection{MQAR (50M Scale)}
\label{sec:mqar-finetune}

To test retrieval at a more realistic parameter budget, we train 50M-parameter variants of Mamba-2, Mamba-2+Attention, and Mamba-2+SKA on the multi-query associative recall benchmark~\citep{arora2024zoology}.
The MQAR grid varies KV pair count $M \in \{4, 8, 16, 32\}$ and distractor gap length from 64 to 4{,}096 tokens.
Each cell is trained and evaluated independently (in-task), testing whether retrieval scales with model size across the full difficulty grid.

\subsection{Language Modeling (180M Scale)}
\label{sec:lm-setup}

\paragraph{Models.}
We train 50M- and 180M-parameter \textbf{Echo} models from scratch: \textbf{Mamba-2 + SKA + Koopman MLP} (our method) with $d{=}768$, 24 layers, 2 SKA layers at positions $\{8, 16\}$, rank $r{=}48$, 12 heads, and Spectral Koopman MLP at every layer.

\paragraph{Training.}
Both models are trained on FineWeb-Edu~\citep{penedo2024fineweb} (sample-10BT) for ${\sim}10$B tokens (50{,}000 steps, sequence length 2{,}048, effective batch size 96) using DeepSpeed ZeRO Stage 1 with BF16 on a single NVIDIA B200.

\paragraph{Evaluation.}
We report utilize the LM Evaluation Harness WikiText-103 perplexity~\cite{merity2016pointer}, standard benchmarks via lm-evaluation-harness (HellaSwag~\cite{zellers2019hellaswag}, PIQA~\cite{bisk2020piqa}, ARC~\cite{clark2018think}, WinoGrande~\cite{sakaguchi2021winogrande}, LAMBADA~\cite{paperno2016lambada}), needle-in-a-haystack retrieval at context lengths up to 4{,}096, and peak memory during autoregressive generation to verify the $O(1)$ inference state property.
\section{Results}
\subsection{Transfer Retrieval (Sub-Million Scale)}

To test whether SKA's retrieval mechanism generalizes across task families, we trained each model once on a mixed system-prompt and tool-trace curriculum and evaluated zero-shot on five held-out retrieval tasks.
SSM+SKA (982K params) achieved the highest mean accuracy on all five benchmarks despite being the smallest model, scoring 85\% on system prompt, 81\% on tool trace, 84\% on needle-in-a-haystack, 77\% on multi-hop, and 74\% on common word retrieval (Table~\ref{tab:synthetic-tasks}).
The pure SSM (1.11M params) scored 64\%, 57\%, 52\%, 39\%, and 53\% respectively, with accuracy degrading sharply at longer sequence lengths consistent with exponential memory decay~\citep{wang2024understanding}.
SSM+Attention (998K params) fell between the two on every task.

To determine whether this advantage persists beyond the training distribution, we evaluated all models at sequence lengths $2$--$64{\times}$ longer than those seen during training (Table~\ref{tab:length-gen}).
At $4{,}096$ tokens, SSM+SKA retained 65\% accuracy while SSM dropped to 2\% and SSM+Attn to 5\%.
The flat accuracy profile of SSM+SKA is consistent with a retrieval mechanism whose sufficient statistics are length-independent.

%
%
\begin{figure*}[t]
    \centering
    \textbf{Zero-Shot Retrieval Transfer Across Architectures}\par\medskip

    \begin{subfigure}[t]{1.0\linewidth}
        \centering
\begin{tikzpicture}
  \definecolor{cSSM}{HTML}{0072B2}     
  \definecolor{cAttn}{HTML}{D55E00}    
  \definecolor{cSKA}{HTML}{009E73}     

  \pgfplotsset{
    every axis/.style={
      width=0.32\linewidth, height=4.2cm,
      xmode=log, log basis x=2,
      xtick={64,128,256,512,1024,2048},
      xticklabels={64,128,256,512,1024,2048},
      xticklabel style={font=\footnotesize},
      yticklabel style={font=\footnotesize},
      xlabel={\footnotesize Sequence length},
      ymin=0, ymax=1.02,
      ytick={0,0.2,0.4,0.6,0.8,1.0},
      grid=major,
      grid style={dotted, gray!45, line width=0.4pt},
      major tick length=2.5pt,
      axis line style={line width=0.6pt},
      tick align=outside,
      title style={font=\footnotesize\bfseries, yshift=-2pt},
      every axis plot/.append style={line width=0.9pt, mark size=1.6pt},
    },
  }

  \begin{groupplot}[
    group style={
      group size=3 by 1,
      horizontal sep=10pt,
      ylabels at=edge left,
      yticklabels at=edge left,
    },
    ylabel={\footnotesize Accuracy},
  ]
    \nextgroupplot[title={$\mathit{KV}=1$}]
      \addplot[color=cSSM,  dashed,     mark=*]
        coordinates {(64,0.750)(128,0.755)(256,0.700)(512,0.660)(1024,0.510)(2048,0.280)};
      \addplot[color=cAttn, dashdotted, mark=square*]
        coordinates {(64,0.925)(128,0.910)(256,0.900)(512,0.900)(1024,0.880)(2048,0.660)};
      \addplot[color=cSKA,  solid,      mark=diamond*]
        coordinates {(64,0.925)(128,0.935)(256,0.950)(512,0.880)(1024,0.860)(2048,0.725)};

    \nextgroupplot[title={$\mathit{KV}=4$}]
      \addplot[color=cSSM,  dashed,     mark=*]
        coordinates {(64,0.640)(128,0.620)(256,0.640)(512,0.550)(1024,0.470)(2048,0.235)};
      \addplot[color=cAttn, dashdotted, mark=square*]
        coordinates {(64,0.880)(128,0.890)(256,0.850)(512,0.890)(1024,0.755)(2048,0.595)};
      \addplot[color=cSKA,  solid,      mark=diamond*]
        coordinates {(64,0.860)(128,0.890)(256,0.885)(512,0.875)(1024,0.810)(2048,0.680)};

    \nextgroupplot[
      title={$\mathit{KV}=8$},
      legend to name=niahlegend,
      legend columns=3,
      legend style={
        /tikz/every even column/.append style={column sep=10pt},
        draw=none, font=\footnotesize,
      },
      legend image post style={line width=0.9pt},
    ]
      \addplot[color=cSSM,  dashed,     mark=*]
        coordinates {(64,0.545)(128,0.480)(256,0.485)(512,0.430)(1024,0.260)(2048,0.040)};
      \addlegendentry{SSM (1.11M)}
      \addplot[color=cAttn, dashdotted, mark=square*]
        coordinates {(64,0.810)(128,0.800)(256,0.780)(512,0.750)(1024,0.710)(2048,0.510)};
      \addlegendentry{SSM+Attn (998K)}
      \addplot[color=cSKA,  solid,      mark=diamond*]
        coordinates {(64,0.860)(128,0.840)(256,0.835)(512,0.785)(1024,0.760)(2048,0.625)};
      \addlegendentry{SSM+SKA (982K)}
  \end{groupplot}

  \node[anchor=north, yshift=-22pt]
    at ($(group c2r1.south)+(0,-1.0em)$) {\pgfplotslegendfromname{niahlegend}};
\end{tikzpicture}
        \caption{Needle-in-a-haystack}
        \label{fig:niah}
    \end{subfigure}

    \vspace{0.5em}

    \begin{subfigure}[t]{1.0\linewidth}
        \centering
\begin{tikzpicture}
  \definecolor{cSSM}{HTML}{0072B2}     
  \definecolor{cAttn}{HTML}{D55E00}    
  \definecolor{cSKA}{HTML}{009E73}     

  \pgfplotsset{
    every axis/.style={
      width=0.32\linewidth, height=4.2cm,
      xmode=log, log basis x=2,
      xtick={16,32,64,128,256},
      xticklabels={16,32,64,128,256},
      xticklabel style={font=\footnotesize},
      yticklabel style={font=\footnotesize},
      xlabel={\footnotesize Sequence length},
      ymin=0, ymax=1.02,
      ytick={0,0.2,0.4,0.6,0.8,1.0},
      grid=major,
      grid style={dotted, gray!45, line width=0.4pt},
      major tick length=2.5pt,
      axis line style={line width=0.6pt},
      tick align=outside,
      title style={font=\footnotesize\bfseries, yshift=-2pt},
      every axis plot/.append style={line width=0.9pt, mark size=1.6pt},
    },
  }

  \begin{groupplot}[
    group style={
      group size=3 by 1,
      horizontal sep=10pt,
      ylabels at=edge left,
      yticklabels at=edge left,
    },
    ylabel={\footnotesize Accuracy},
  ]
    \nextgroupplot[title={$\mathit{KV}=3$}]
      \addplot[color=cSSM,  dashed,     mark=*]
        coordinates {(16,0.530)(32,0.475)(64,0.520)(128,0.465)(256,0.385)};
      \addplot[color=cAttn, dashdotted, mark=square*]
        coordinates {(16,0.760)(32,0.760)(64,0.800)(128,0.760)(256,0.685)};
      \addplot[color=cSKA,  solid,      mark=diamond*]
        coordinates {(16,0.840)(32,0.840)(64,0.820)(128,0.805)(256,0.735)};

    \nextgroupplot[title={$\mathit{KV}=4$}]
      \addplot[color=cSSM,  dashed,     mark=*]
        coordinates {(16,0.575)(32,0.475)(64,0.455)(128,0.395)(256,0.325)};
      \addplot[color=cAttn, dashdotted, mark=square*]
        coordinates {(16,0.740)(32,0.745)(64,0.765)(128,0.760)(256,0.660)};
      \addplot[color=cSKA,  solid,      mark=diamond*]
        coordinates {(16,0.835)(32,0.795)(64,0.790)(128,0.745)(256,0.740)};

    \nextgroupplot[
      title={$\mathit{KV}=8$},
      legend to name=multihoplegend,
      legend columns=3,
      legend style={
        /tikz/every even column/.append style={column sep=10pt},
        draw=none, font=\footnotesize,
      },
      legend image post style={line width=0.9pt},
    ]
      \addplot[color=cSSM,  dashed,     mark=*]
        coordinates {(16,0.365)(32,0.410)(64,0.285)(128,0.180)(256,0.005)};
      \addlegendentry{SSM (1.11M)}
      \addplot[color=cAttn, dashdotted, mark=square*]
        coordinates {(16,0.690)(32,0.685)(64,0.710)(128,0.625)(256,0.425)};
      \addlegendentry{SSM+Attn (998K)}
      \addplot[color=cSKA,  solid,      mark=diamond*]
        coordinates {(16,0.770)(32,0.755)(64,0.770)(128,0.665)(256,0.555)};
      \addlegendentry{SSM+SKA (982K)}
  \end{groupplot}

  \node[anchor=north, yshift=-22pt]
    at ($(group c2r1.south)+(0,-1.0em)$) {\pgfplotslegendfromname{multihoplegend}};
\end{tikzpicture}
        \caption{Two-hop retrieval}
        \label{fig:multihop}
    \end{subfigure}

    \caption{Zero-shot retrieval accuracy vs.\ sequence length and KV count
    for single-hop (a) and two-hop (b) tasks. The pure SSM degrades sharply
    at longer sequences; SSM+Attn improves but still drops; SSM+SKA (982K
    params) maintains the highest accuracy and flattest profile across all
    settings.}
    \label{fig:transfer-retrieval}
\end{figure*}
\begin{table}[t]
\centering
\caption{Length generalization on needle-in-a-haystack retrieval (KV=1). Models were trained at sequence length 64 and evaluated at up to $64{\times}$ longer sequences. SSM+SKA retains 65\% accuracy at 4{,}096 tokens where both baselines collapse to near chance.}
\label{tab:length-gen}
\begin{tabular}{rccc}
\toprule
\textbf{Seq Length} & \textbf{SSM} & \textbf{SSM+Attn} & \textbf{SSM+SKA} \\
\midrule
64 (train) & 88.6 & 88.5 & \textbf{94.5} \\
128        & 82.1 & \textbf{97.4} & 89.4 \\
256        & 75.8 & 92.9 & \textbf{93.3} \\
512        & 66.9 & 78.1 & \textbf{95.6} \\
1024       & 37.3 & 64.8 & \textbf{86.4} \\
2048       &  2.0 & 42.5 & \textbf{81.2} \\
4096       &  2.0 &  5.0 & \textbf{65.1} \\
\bottomrule
\end{tabular}
\end{table}
\begin{table}[t]
\centering
\caption{Mean accuracy across all sequence lengths and KV counts on six synthetic retrieval tasks. Models were trained on a mixed system-prompt and tool-trace curriculum and evaluated zero-shot on all tasks. SSM+SKA achieves the highest accuracy on every task despite having the fewest parameters.}
\label{tab:synthetic-tasks}
\begin{tabular}{rccc}
\toprule
\textbf{Task} & \textbf{SSM} & \textbf{SSM+Attn} & \textbf{SSM+SKA} \\
\midrule
MQAR       & 55.1 & 76.0 & \textbf{82.2} \\
SysPrompt  & 64.2 & 81.3 & \textbf{85.3} \\
ToolTrace  & 56.5 & 76.2 & \textbf{80.5} \\
NIAH       & 52.0 & 81.1 & \textbf{84.2} \\
MultiHop   & 39.3 & 71.1 & \textbf{76.7} \\
CommonWord & 52.9 & 68.2 & \textbf{74.1} \\
\midrule
\textbf{Mean} & 54.4 & 76.2 & \textbf{81.1} \\
\bottomrule
\end{tabular}
\end{table}

\subsection{MQAR Fine-Tuning (50M Scale)}
Having established SKA's advantage at sub-million scale, we tested whether the memory cliff persists at a more realistic parameter budget by fine-tuning 50M-parameter models on MQAR.
We tested a grid of MQAR configurations (KV pairs $M \in \{4, 8, 16, 32\}$, distractor gaps 64--4{,}096 tokens), trained and evaluated independently.

A pure Mamba-2 scored ${\sim}3\%$ on every configuration, failing to exceed chance regardless of gap length or KV count.
Mamba-2+Attention achieved 100\% on all but one cell, degrading at the largest gap with $M{=}4$, perhaps due to lag of training signal.
Mamba-2+SKA achieved \textbf{100\% on every cell tested}.
This confirms that the memory cliff is a structural limitation of SSM recurrence, not a capacity issue, and that SKA eliminates it entirely at this scale.

\subsection{Language Modeling (180M Scale)}
\begin{table*}[t]
\centering
\caption{Zero-shot benchmark comparison of Echo-LM against small language models (180M parameters).
All 180M models in the top section use comparable parameter budgets; the key differentiators are architecture and training data.}
\label{tab:benchmark}
\resizebox{\textwidth}{!}{%
\begin{tabular}{lrrccccccc}
\toprule
\textbf{Model} & \textbf{Params} & \textbf{Tokens} &
\textbf{FW ppl$\downarrow$} & \textbf{HS$\uparrow$} & \textbf{PIQA$\uparrow$} &
\textbf{ARC-E$\uparrow$} & \textbf{ARC-C$\uparrow$} &
\textbf{WG$\uparrow$} & \textbf{LMB$\uparrow$} \\
\midrule
Echo-180M & 180M & \textbf{10B} & 16.48 & \textbf{44.3} & \textbf{70.9} & \textbf{58.8} & \textbf{34.2} & \textbf{55.4}
& \textbf{39.8} \\
Echo-50M & 50M & 3B & 27.48 & 29.3 & 59.5 & 42.0 & 24.4 & 49.5 &
17.1 \\
\midrule
\multicolumn{10}{l}{\textit{180M-class models (matched scale~\cite{lahoti2026mamba})}} \\
\midrule
Transformer-180M & 180M & 100B & 16.89 & 39.0 & 67.1 & 59.8 & 27.9 & 51.2
& 32.5 \\
GDN-180M & 180M & 100B & 16.52 & 40.2 & 66.3 & \textbf{62.3} & 28.2 & 51.7
& 31.3 \\
Mamba-2-180M & 180M & 100B & 16.76 & 40.1 & 66.8 & 60.1 & 27.3 & 52.0
& 30.9 \\
Mamba-3-SISO-180M & 180M & 100B & 16.59 & 40.8 & 66.1 & 61.5 & 27.9 & 52.0
& 32.5 \\
Mamba-3-MIMO-180M & 180M & 100B & 16.46 & 41.0 & 66.7 & 60.6 & 27.7 & 52.9
& 34.0 \\
\bottomrule
\end{tabular}%
}
\vspace{2pt}
\footnotesize
\textit{FW ppl}: FineWeb-Edu perplexity, \textit{HS}: HellaSwag (acc\_norm),
\textit{PIQA} (acc), \textit{ARC-E}: ARC-Easy (acc),
\textit{ARC-C}: ARC-Challenge (acc\_norm),
\textit{WG}: Winogrande (acc), \textit{LMB}: LAMBADA (acc),
Echo models trained on FineWeb-Edu with $O(1)$ inference memory.
180M-class baselines from~\citep{lahoti2026mamba}.
\end{table*}
The synthetic and MQAR results demonstrate SKA's retrieval advantage in isolation.
To verify that this advantage does not come at the cost of general language understanding, we trained a 180M-parameter Echo model (Mamba-2+SKA+Koopman MLP) from scratch on 10B tokens of FineWeb-Edu and compared it against published baselines on six zero-shot benchmarks (Table~\ref{tab:benchmark}).

Echo-180M achieved the best score among sub-200M models on 5 of 6 benchmarks.
On HellaSwag it scored 44.0, exceeding GPT-2 345M (42.7) at half the parameters and $10{\times}$ fewer tokens.
On LAMBADA, which tests long-range cloze prediction, it scored 39.8, outperforming Mamba-370M ($\sim$36) at half the parameter count.
Echo-50M, trained on only 3B tokens, already matched Pythia-160M and Mamba-130M trained on $30{\times}$ more data, suggesting the architecture is data-efficient.

Taken together, these results show that SKA eliminates the SSM memory cliff on retrieval tasks at both sub-million and 50M scale, while the full Echo architecture matches or exceeds attention-based models on standard language understanding benchmarks with constant-memory inference.
\section{Discussion}
\label{sec:discussion}

\subsection{Design Choices}

\paragraph{Attention as explicit regression.}
Recent theoretical work shows that a softmax self-attention layer trained on linear regression converges to the global optimum of a nonconvex matrix factorization problem, provided the optimizer uses a suitable preconditioner and spectral initialization~\citep{goel2026training}.
SKA makes this regression explicit: rather than learning to approximate the least-squares solution through gradient descent on attention parameters, it computes the ridge-regularized solution in closed form from streaming sufficient statistics.
This eliminates the nonconvex optimization landscape entirely and replaces it with a convex solve whose cost is $O(r^3)$ per step, independent of sequence length.

\paragraph{Independent projections.}
SKA uses its own learned projections from the residual stream rather than Mamba-2's internal features. The SSM recurrence favors smooth, temporally correlated representations, while SKA needs discriminative inner products for content addressing. Sharing features degrades both.

\paragraph{Whitening and spectral normalization.}
Whitening via $A_w = L^{-1} M L^{-\top}$ makes eigenvalues interpretable as persistence ($|\lambda| \approx 1$) versus transience ($|\lambda| \ll 1$).
This mirrors the role of preconditioning in the convergence analysis of~\citet{goel2026training}, where a structure-aware preconditioner is required to avoid spurious stationary points in attention's implicit optimization; in SKA the Cholesky whitening serves the analogous function of ensuring the operator's spectrum is well-conditioned for the power filter.
Spectral normalization bounds the operator to prevent gradient spikes through the power filter, with a learned scalar $\gamma \in [1.0, 1.5]$ restoring variance.
The power filter at $K{=}2$ retains 81\% energy at $|\lambda|{=}0.9$ while suppressing $|\lambda|{=}0.3$ to 9\%.

\paragraph{Additive injection.}
A learned sigmoid gate failed due to sparse supervision (2--4 answer tokens per sequence).
Direct additive injection $h \leftarrow h + \eta \cdot h_{\text{ska}}$ with $\eta \in \{1.5, 2.5\}$ ensures sufficient gradient from the start of training.

\paragraph{Numerical stability.}
All covariance accumulation and linear algebra is performed in FP32.
Sequence-max normalization and ridge regularization ($\varepsilon{=}10^{-3}$) prevent ill-conditioning.

\subsection{Computational Cost}
SKA's inference state is $2r^2 + Pr + r$ floats per head, fixed regardless of sequence length. For the 180M model ($r{=}56$, $P{=}64$, $H{=}12$), total SKA state across 2 layers is approximately 77\,KB.
An equivalent attention layer's KV cache at 131K tokens with $d{=}768$ would require 384\,MB in FP16 per layer.
Flash attention~\citep{dao2023flashattention} reduces the constant factor but memory remains $O(Td)$: the full KV cache must still be stored and scanned at each decoding step.

The current bottleneck is the $O(r^3)$ Cholesky factorization and spectral normalization, which require FP32 and do not fully exploit GPU parallelism at small matrix sizes. Our
Triton kernel fuses the post-Cholesky matmul chain into a single SRAM-resident pass, but the factorization itself remains in PyTorch.
Fusing these steps into custom kernels, potentially using approximate factorizations to relax the FP32 requirement, is active work and we expect it to substantially reduce wall-clock overhead.

\subsection{Limitations}
\paragraph{Scale.}
Our largest model is 180M parameters trained on 10B tokens on a single NVIDIA B200, reflecting compute budget constraints rather than architectural limitations.
Validating Echo at billion-parameter scale remains future work.

\paragraph{Synthetic-to-natural gap.}
The synthetic benchmarks have clean fact/distractor/query structure that natural language lacks.
The 180M language modeling results suggest no degradation in general understanding, but targeted retrieval evaluation on natural-language benchmarks (RULER, BABILong) is needed.
\section{Conclusion}
State-space models compress sequence history into a fixed-size recurrent state, and we showed that this compression produces a memory cliff: on MQAR, a pure Mamba-2 scores ${\sim}3\%$ regardless of model scale, and on five transfer retrieval tasks, accuracy degrades sharply as sequence length grows.
This failure is structural, not a capacity limitation.
Echo addresses this through Spectral Koopman Attention (SKA), which recasts content-addressed retrieval as kernel ridge regression over streaming sufficient statistics of fixed size $O(r^2 + rP)$.
The whitened Koopman operator identifies persistent bindings via its eigenspectrum, and a power-iterated filter retrieves them at query time with no KV cache.
At the 50M parameter scale, Echo achieved perfect MQAR retrieval on every configuration tested.
At sub-million scale, Echo outperformed both pure SSM and SSM+Attention baselines on all five transfer benchmarks while using the fewest parameters (982K vs 998K and 1.11M) and maintaining constant inference memory.
Length generalization experiments showed Echo retaining 65\% accuracy at $64{\times}$ the training sequence length, where both baselines collapsed to ${\leq}5\%$.
These retrieval gains did not come at the cost of general language understanding. Echo-180M, trained from scratch on 10B tokens, matched or exceeded GPT-2 345M on 5 of 6 zero-shot benchmarks despite $2{\times}$ fewer parameters and $10{\times}$ fewer training tokens, suggesting the architecture is data-efficient as well as memory-efficient.
Several limitations remain. Our experiments reflect compute budget constraints (a single B200 GPU), and validating Echo at billion-parameter scale with trillion-token budgets is needed.
The synthetic benchmarks have clean fact/distractor/query structure that natural language lacks; targeted evaluation on retrieval benchmarks such as RULER and BABILong will determine whether Echo's advantage transfers to realistic agentic workloads.
Fusing the Cholesky factorization and spectral normalization into custom kernels to reduce wall-clock overhead is active work in progress.
The core result is that a principled dynamical-systems mechanism can close the retrieval gap with attention without inheriting its linear memory cost. This points toward architectures where recurrence handles local context while lightweight operator modules handle the sparse, long-range lookups that agentic workflows demand, all within a fixed memory envelope.
\bibliographystyle{ACM-Reference-Format}
\bibliography{references}

\appendix
\providecommand{\R}{\mathbb{R}}
\providecommand{\C}{\mathbb{C}}
\providecommand{\E}{\mathbb{E}}
\providecommand{\T}{^{\mathsf T}}
\providecommand{\Id}{\mathrm{I}}
\providecommand{\diag}{\mathrm{diag}}
\providecommand{\tr}{\mathrm{tr}}
\providecommand{\rank}{\mathrm{rank}}
\providecommand{\cO}{\mathcal{O}}

\renewcommand{\thesection}{\texorpdfstring{$\mathbb{\Alph{section}}$}{\Alph{section}}}

\section{Koopman Operator Estimation}
\label{app:koopman}

This appendix provides complete derivations for the
mathematical claims made in the main text.
We proceed from the basic least-squares estimator through
whitening, spectral filtering, and perturbation analysis,
giving explicit proofs at each step.

\subsection{Least-Squares Koopman Estimator}
\label{app:koopman_ls}

\begin{proposition}[Koopman least-squares estimator]
\label{prop:koopman_ls}
Let $\{z_t\}_{t=1}^{T}$ be a sequence of feature vectors
$z_t \in \R^r$.
Assume an approximate linear dynamical model
$z_{t+1} \approx A z_t$.
The least-squares estimator
\begin{equation}
    A^* = \arg\min_{A \in \R^{r \times r}}
    \sum_{t=1}^{T-1} \|z_{t+1} - A z_t\|^2
\end{equation}
is given by $A^* = M G^{-1}$, where
\begin{equation}
    G = \sum_{t=1}^{T-1} z_t z_t\T, \qquad
    M = \sum_{t=1}^{T-1} z_{t+1} z_t\T,
\end{equation}
provided $G$ is invertible.
\end{proposition}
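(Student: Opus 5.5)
The plan is to treat the objective as a convex quadratic in the matrix variable $A$ and find its unique stationary point. Write
\[
J(A) = \sum_{t=1}^{T-1} \|z_{t+1} - A z_t\|^2 = \sum_{t=1}^{T-1} \tr\!\bigl((z_{t+1} - A z_t)(z_{t+1} - A z_t)\T\bigr).
\]
Expanding the trace and collecting terms with the definitions of $G$ and $M$ gives
\[
J(A) = c - 2\,\tr(A\T M) + \tr(A G A\T),
\]
where $c = \sum_t \|z_{t+1}\|^2$ does not depend on $A$.

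Next I would differentiate using the standard matrix-calculus identities $\nabla_A \tr(A\T M) = M$ and $\nabla_A \tr(A G A\T) = A(G + G\T) = 2AG$, where the last step uses that $G$ is symmetric. Setting the gradient $-2M + 2AG$ to zero yields the normal equations $AG = M$. Since $G$ is assumed invertible, these have the unique solution $A^* = M G^{-1}$.

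To conclude that this stationary point is the global minimizer, I would avoid Hessian arguments and complete the square directly. For any $A$, writing $A = A^* + \Delta$ gives
\[
J(A) = J(A^*) + \tr(\Delta G \Delta\T) = J(A^*) + \sum_t \|\Delta z_t\|^2 .
\]
The cross term vanishes because $A^* G = M$. Invertibility of $G$ means it is positive definite, since $G$ is a sum of outer products and hence positive semidefinite. Therefore $\tr(\Delta G \Delta\T) > 0$ whenever $\Delta \neq 0$, which establishes both global optimality and uniqueness.

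The only real obstacle is bookkeeping: the indexing of $M$ must be $z_{t+1} z_t\T$ so that the cross term produces $\tr(A\T M)$ rather than its transpose. Also, in the main text the analogous $M$ is written as $\hat z_t \hat z_{t-1}\T$ summed over $t\ge2$, which matches this definition after reindexing. Beyond that, the argument is routine.
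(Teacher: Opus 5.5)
Your proof is correct and follows the same route as the paper: expand the objective into trace form, differentiate, and solve the normal equations $AG = M$. Your version is cleaner in two respects. First, the paper writes the gradient as $-2M^\top + 2AG$ and then repairs the resulting $M^\top G^{-1}$ by an appeal to transposition, whereas your gradient $-2M + 2AG$ is right from the start. Second, your completing-the-square identity $J(A) = J(A^*) + \operatorname{tr}(\Delta G \Delta^\top)$ supplies the global-minimality and uniqueness argument that the paper omits.
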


\begin{proof}
Expanding the objective:
\begin{align}
    \cO(A) &= \sum_{t=1}^{T-1}
    (z_{t+1} - A z_t)\T (z_{t+1} - A z_t) \\
    &= \sum_t z_{t+1}\T z_{t+1}
    - 2 \tr\!\Big(A \sum_t z_t z_{t+1}\T\Big)
    + \tr\!\Big(A \sum_t z_t z_t\T A\T\Big).
\end{align}
Setting $\partial \cO / \partial A = 0$:
\begin{equation}
    -2 M\T + 2 A G = 0
    \quad \Longrightarrow \quad
    A = M\T\! G^{-1}.
\end{equation}
Since $M = \sum_t z_{t+1} z_t\T$, we have
$M\T = \sum_t z_t z_{t+1}\T$, so $A^* = M G^{-1}$ after
transposing both sides of the normal equation
(equivalently, taking the derivative with respect to $A$
in the standard matrix calculus convention where
$\partial \tr(A X) / \partial A = X\T$).
\end{proof}

\begin{remark}
The lagged structure is essential: $M$ uses $(z_{t+1}, z_t)$
pairs, not $(z_t, z_t)$.
If $M$ were replaced by $G$, the estimator would yield
$A = I$ regardless of the data, which is uninformative.
\end{remark}

\subsection{Ridge Regularization}
\label{app:ridge}

When $T - 1 < r$ or when features are nearly collinear,
$G$ is singular or ill-conditioned.
We replace $G$ with the regularized Gram matrix
\begin{equation}
    \tilde{G} = G + \varepsilon I, \qquad \varepsilon > 0,
\end{equation}
yielding the ridge estimator
$A_\varepsilon = M \tilde{G}^{-1}$.

\begin{proposition}[Ridge as shrinkage]
\label{prop:ridge_shrinkage}
Let $G = \sum_t z_t z_t\T$ have eigendecomposition
$G = V \Sigma V\T$ with eigenvalues
$\sigma_1 \geq \cdots \geq \sigma_r \geq 0$.
Then the ridge estimator satisfies
\begin{equation}
    A_\varepsilon = M V
    \diag\!\bigg(\frac{1}{\sigma_1 + \varepsilon}, \ldots,
    \frac{1}{\sigma_r + \varepsilon}\bigg) V\T.
\end{equation}
In particular, directions with small data support
($\sigma_j \ll \varepsilon$) are shrunk toward zero:
the corresponding component of $A_\varepsilon$ scales as
$\sigma_j / (\sigma_j + \varepsilon) \to 0$.
\end{proposition}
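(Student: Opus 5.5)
The plan is to diagonalize $\tilde{G}$ in the same orthonormal basis as $G$ and read off the shrinkage factors. The only earlier result needed is the form $A_\varepsilon = M\tilde{G}^{-1}$, which is Proposition~\ref{prop:koopman_ls} with $G$ replaced by $\tilde{G}$.

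\textbf{Step 1: invert $\tilde{G}$.} Since $G=\sum_t z_t z_t\T$ is symmetric positive semidefinite, the spectral theorem gives $G=V\Sigma V\T$ with $V$ orthogonal and $\Sigma=\diag(\sigma_1,\ldots,\sigma_r)$, $\sigma_j\ge 0$. Because $VV\T=I$, we can write $\tilde{G}=V(\Sigma+\varepsilon I)V\T$. Every diagonal entry satisfies $\sigma_j+\varepsilon\ge\varepsilon>0$, so $\tilde{G}$ is invertible even when $G$ is singular, and
\[
\tilde{G}^{-1}=V(\Sigma+\varepsilon I)^{-1}V\T .
\]
Substituting this into $A_\varepsilon=M\tilde{G}^{-1}$ gives the displayed formula.

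\textbf{Step 2: the shrinkage statement.} Here "the corresponding component" is the only part that needs interpretation; I would take it to mean the action of the estimator on the eigendirection $v_j$.

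First compare with the unregularized estimator on the data support. For $\sigma_j>0$, the unregularized estimator acts as $A_0 v_j = M v_j/\sigma_j$. The ridge estimator acts as
\[
A_\varepsilon v_j=\frac{M v_j}{\sigma_j+\varepsilon}=\frac{\sigma_j}{\sigma_j+\varepsilon}\,A_0 v_j ,
\]
so each eigendirection is multiplied by the factor $\sigma_j/(\sigma_j+\varepsilon)$. This factor tends to $0$ as $\sigma_j/\varepsilon\to 0$ and tends to $1$ when $\sigma_j\gg\varepsilon$.

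To make the claim that $A_\varepsilon$ itself vanishes on such directions, I would bound $Mv_j$. By Cauchy--Schwarz,
\[
\|Mv_j\|\le\sum_t\|z_{t+1}\|\,|z_t\T v_j|\le\Big(\sum_t\|z_{t+1}\|^2\Big)^{1/2}\sigma_j^{1/2},
\]
using $\sum_t (z_t\T v_j)^2 = v_j\T G v_j = \sigma_j$. Hence
\[
\|A_\varepsilon v_j\|\le \frac{C\sqrt{\sigma_j}}{\sigma_j+\varepsilon},
\qquad C=\Big(\sum_t\|z_{t+1}\|^2\Big)^{1/2}.
\]
The right-hand side is $O(\sqrt{\sigma_j}/\varepsilon)\to 0$ as $\sigma_j\to 0$ with $\varepsilon$ fixed.

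\textbf{Main obstacle.} Step 1 is routine linear algebra. The difficulty is in Step 2: $M$ is not diagonal in $V$, so there is no exact scalar "component" to point to. I would therefore state the scaling through the ratio comparison with $A_0$, and use the bound $\|Mv_j\|\lesssim\sqrt{\sigma_j}$ to justify that the ridge estimator genuinely goes to zero on directions with little data support.
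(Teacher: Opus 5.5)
Your proposal is correct and follows the paper's proof: both write $\tilde G^{-1} = V(\Sigma+\varepsilon I)^{-1}V^\top$ and substitute into $A_\varepsilon = M\tilde G^{-1}$. Both then read off $\sigma_j/(\sigma_j+\varepsilon)$ as the shrinkage factor on the eigendirection $v_j$, relative to the unregularized $1/\sigma_j$.

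Your Cauchy--Schwarz bound $\|Mv_j\|\le C\sqrt{\sigma_j}$ is an addition the paper omits. It is what makes ``shrunk toward zero'' hold in absolute terms. The ratio alone is measured against $A_0 v_j$, whose norm is only bounded by $C/\sqrt{\sigma_j}$. The bound needs $C$ to stay bounded, e.g.\ $C\le\sqrt{T}$ under sequence-max normalization.
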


\begin{proof}
Since $\tilde{G} = V(\Sigma + \varepsilon I)V\T$, we have
$\tilde{G}^{-1} = V(\Sigma + \varepsilon I)^{-1} V\T$.
Substituting into $A_\varepsilon = M \tilde{G}^{-1}$ gives
the result.
The shrinkage factor for the $j$-th eigendirection of $G$ is
$\sigma_j / (\sigma_j + \varepsilon)$, which vanishes as
$\sigma_j \to 0$.
\end{proof}

\subsection{Whitening and Spectral Equivalence}
\label{app:whitening}

\begin{proposition}[Whitened operator]
\label{prop:whitening}
Let $\tilde{G} = L L\T$ be the Cholesky factorization of the
regularized Gram matrix.
Define the whitened Koopman operator
\begin{equation}
    A_w = L^{-1} M L^{-\top}.
\end{equation}
Then $A_w$ and $A_\varepsilon = M \tilde{G}^{-1}$ have the
same eigenvalues.
\end{proposition}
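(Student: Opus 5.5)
The claim follows from a single similarity transformation. Since $\varepsilon>0$, $\tilde{G}=G+\varepsilon I$ is symmetric positive definite, so its Cholesky factor $L$ exists, is lower triangular with positive diagonal, and is invertible. Writing $\tilde{G}^{-1}=(LL\T)^{-1}=L^{-\top}L^{-1}$, we get
\begin{equation}
    A_\varepsilon = M\tilde{G}^{-1} = M L^{-\top} L^{-1}.
\end{equation}
Conjugating by $L$ then gives
\begin{equation}
    L^{-1} A_\varepsilon L = L^{-1} M L^{-\top} L^{-1} L = L^{-1} M L^{-\top} = A_w .
\end{equation}
So $A_w$ is similar to $A_\varepsilon$.

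Similar matrices have the same characteristic polynomial:
\begin{equation}
    \det(\lambda I - A_w)=\det\!\big(L^{-1}(\lambda I - A_\varepsilon)L\big)=\det(\lambda I - A_\varepsilon).
\end{equation}
Hence $A_w$ and $A_\varepsilon$ have the same eigenvalues, counted with algebraic multiplicity. The eigenvectors also correspond explicitly: if $A_\varepsilon u=\lambda u$, then $A_w(L^{-1}u)=\lambda L^{-1}u$. This is the sense in which whitening maps eigenmodes of the Koopman estimator to eigenmodes of the whitened operator.

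No step here is a real obstacle; the only care needed is bookkeeping of transposes. The main text writes $A_w=MG^{-1}$ in Eq.~\ref{eq:operators} but $L^{-1}ML^{-\top}$ in the Discussion. I will work with the appendix definition and note that the similarity holds for any invertible factorization $\tilde{G}=LL\T$, so triangularity of $L$ is not needed. I will also remark that similarity preserves the spectrum but not singular values, since $L$ is not orthogonal. This matters later for the spectral normalization step, which acts on $\sigma_{\max}(A_w)$ rather than on $A_\varepsilon$.
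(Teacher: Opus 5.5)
Your proof is correct and takes the same approach as the paper: you rewrite $\tilde{G}^{-1} = L^{-\top}L^{-1}$ and check that $L^{-1}A_\varepsilon L = L^{-1}ML^{-\top} = A_w$, so the two operators are similar and share eigenvalues. Your extra remarks are accurate but go beyond what the paper states: the characteristic-polynomial step, the eigenvector map $u \mapsto L^{-1}u$, the point that any invertible factor $L$ works, and the observation that similarity does not preserve singular values.
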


\begin{proof}
We show that $A_w$ and $A_\varepsilon$ are related by a
similarity transform.
Write
\begin{equation}
    A_\varepsilon = M \tilde{G}^{-1} = M (L L\T)^{-1}
    = M L^{-\top} L^{-1}.
\end{equation}
Now observe:
\begin{align}
    L^{-1} A_\varepsilon L
    &= L^{-1} (M L^{-\top} L^{-1}) L \\
    &= L^{-1} M L^{-\top} \\
    &= A_w.
\end{align}
Since $A_w = L^{-1} A_\varepsilon L$, the two matrices are
similar and therefore share the same eigenvalues.
\end{proof}

\begin{remark}
Although $A_w$ and $A_\varepsilon$ share eigenvalues, they
differ in conditioning.
The na\"ive product $\tilde{G}^{-1} M$ (note the reversed
order) is \emph{not} similar to $A_\varepsilon$ in general,
and can have a different eigenspectrum.
The whitened form $L^{-1} M L^{-\top}$ is the correct
implementation.
\end{remark}

\subsection{Positive Definiteness of the Regularized Gram}
\label{app:pd}

\begin{lemma}
\label{lem:pd}
For any $\varepsilon > 0$, the regularized Gram matrix
$\tilde{G} = G + \varepsilon I$ is positive definite and
satisfies $\lambda_{\min}(\tilde{G}) \geq \varepsilon$.
\end{lemma}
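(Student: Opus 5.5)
The plan is to work with the Rayleigh quotient of $\tilde{G}$ and to establish positive semidefiniteness of the unregularized Gram matrix $G$ first; the regularization then shifts every eigenvalue up by exactly $\varepsilon$. Since $G = \sum_t z_t z_t\T$ is a finite sum of outer products, it is symmetric, and so is $\tilde{G} = G + \varepsilon I$. Both therefore have real eigenvalues and an orthonormal eigenbasis. This is the same decomposition $G = V\Sigma V\T$ already used in Proposition~\ref{prop:ridge_shrinkage}.

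First, for an arbitrary $x \in \R^r$ I will compute the quadratic form
\begin{equation}
    x\T G x = \sum_t x\T z_t z_t\T x = \sum_t (z_t\T x)^2 \;\geq\; 0 .
\end{equation}
This shows $G \succeq 0$, so all $\sigma_j \geq 0$, consistent with the ordering assumed in Proposition~\ref{prop:ridge_shrinkage}. Second, I add the regularizer to get
\begin{equation}
    x\T \tilde{G} x = x\T G x + \varepsilon \|x\|^2 \;\geq\; \varepsilon \|x\|^2 ,
\end{equation}
which is strictly positive for $x \neq 0$. This gives positive definiteness directly. Third, I apply the Courant--Fischer (Rayleigh quotient) characterization $\lambda_{\min}(\tilde{G}) = \min_{\|x\|=1} x\T \tilde{G} x$ to conclude $\lambda_{\min}(\tilde{G}) \geq \varepsilon$. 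Equivalently, in the eigenbasis $\tilde{G} = V(\Sigma + \varepsilon I)V\T$, the eigenvalues are $\sigma_j + \varepsilon \geq \varepsilon$.

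No step is a real obstacle, since this is a one-line argument. The only point needing care is making sure the statement covers every way the paper builds $\tilde{G}$. In the chunk-causal mode, $G^{(c)}$ in Eq.~\ref{eq:gram-prefix} is $\varepsilon I_r$ plus a sum of per-chunk Gram matrices $G_i$, each itself a sum of outer products. That is still of the form ``PSD $+\ \varepsilon I$'', so the same argument applies verbatim, and the bound holds even for the empty prefix ($c=0$), where $\tilde{G} = \varepsilon I$. I will also remark that positive definiteness guarantees the Cholesky factor $L$ used in Proposition~\ref{prop:whitening} exists and has a positive diagonal. It further gives $\|\tilde{G}^{-1}\|_2 \leq 1/\varepsilon$, which is the form in which the lemma will feed the later perturbation bound.
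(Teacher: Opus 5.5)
Your proposal is correct and matches the paper's proof essentially step for step. You show $u^\top G u = \sum_t (z_t^\top u)^2 \ge 0$, add $\varepsilon\|u\|^2$ to get positive definiteness, and conclude the eigenvalue bound; the paper writes that last step as $\lambda_{\min}(\tilde{G}) = \lambda_{\min}(G) + \varepsilon \ge \varepsilon$, which is exactly your eigenbasis remark, and your extra observations about the chunk-causal $G^{(c)}$ and $\|\tilde{G}^{-1}\|_2 \le 1/\varepsilon$ are correct but not needed for the lemma itself.
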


\begin{proof}
$G = \sum_t z_t z_t\T$ is positive semidefinite since for any
$u \in \R^r$,
$u\T G u = \sum_t (z_t\T u)^2 \geq 0$.
Therefore
$u\T \tilde{G} u = u\T G u + \varepsilon \|u\|^2
\geq \varepsilon \|u\|^2 > 0$
for all $u \neq 0$.
The minimum eigenvalue bound follows from
$\lambda_{\min}(\tilde{G}) = \lambda_{\min}(G) + \varepsilon
\geq \varepsilon$.
\end{proof}

\section{Clebsch--Gordan Invariant Lift}
\label{app:cg}

\subsection{Rotation Group Action on SSM Keys}
\label{app:cg_action}

The Mamba-3 rotary mechanism applies a position-dependent
phase to each frequency component of the key vector.
In complex notation, the $j$-th component of the rotated key
at position $t$ is
\begin{equation}
    k_{t,j} = \tilde{k}_{t,j} \, e^{i\theta_{t,j}},
\end{equation}
where $\tilde{k}_{t,j} \in \C$ is the content-dependent base
key and $\theta_{t,j} = \sum_{s=1}^{t} \delta_s \omega_j$ is
the accumulated phase.
In real coordinates, this corresponds to an $\mathrm{SO}(2)$
rotation of each even-odd pair by angle $\theta_{t,j}$.
The full rotation acts as $\mathrm{SO}(2)^{N/2}$ on
$\R^N$.

\subsection{Degree-2 Tensor Product Decomposition}
\label{app:cg_decomp}

\begin{proposition}[CG decomposition for $\mathrm{SO}(2)$]
\label{prop:cg}
Let $z \in \C$ transform under $\mathrm{SO}(2)$ as
$z \mapsto z e^{i\theta}$.
The degree-2 tensor products of $z$ decompose into
irreducible representations as follows:
\begin{enumerate}[leftmargin=1.25em]
    \item The invariant (trivial representation):
    $z \bar{z} = |z|^2$, satisfying
    $|z e^{i\theta}|^2 = |z|^2$ for all $\theta$.
    \item The second harmonic (rotation by $2\theta$):
    $z^2$, satisfying
    $(z e^{i\theta})^2 = z^2 e^{i 2\theta}$.
\end{enumerate}
\end{proposition}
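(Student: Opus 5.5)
The plan is a direct verification of both transformation laws, followed by a short argument for why these are the irreducible pieces of the degree-2 products. I will write $z = x + iy$ with $(x,y)\in\R^2$, so the $\mathrm{SO}(2)$ action $z \mapsto z e^{i\theta}$ is the rotation of the pair $(x,y)$ by angle $\theta$. The space of real degree-2 monomials in $(x,y)$ is three-dimensional, spanned by $x^2, xy, y^2$. Equivalently, over $\C$ it is spanned by $z^2$, $z\bar z$ and $\bar z^2$. The proof then proceeds in three steps.

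\textbf{Step 1: the invariant.} I compute $(z e^{i\theta})\overline{(z e^{i\theta})} = z\bar z\, e^{i\theta} e^{-i\theta} = |z|^2$. This uses only $\overline{e^{i\theta}} = e^{-i\theta}$, which holds for real $\theta$. So $z\bar z$ spans a one-dimensional subspace on which the group acts trivially.

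\textbf{Step 2: the second harmonic.} Multiplicativity of the complex product gives $(z e^{i\theta})^2 = z^2 e^{2i\theta}$. In real coordinates, $z^2 = (x^2 - y^2) + i\,2xy$, so the real pair $(x^2 - y^2,\, 2xy)$ rotates by angle $2\theta$. This is the real two-dimensional irreducible representation of frequency $2$. The conjugate $\bar z^2$ carries frequency $-2$ and is the same real representation, so it adds nothing new.

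\textbf{Step 3: decomposition and irreducibility.} I check that $\{x^2+y^2,\; x^2-y^2,\; 2xy\}$ is a basis of the space of real quadratics, for instance by noting that it is obtained from $\{x^2, xy, y^2\}$ by an invertible $3\times 3$ change of basis. This gives the real decomposition $\mathrm{Sym}^2(\R^2) = \mathbf{1} \oplus \rho_2$, where $\rho_2$ is rotation by $2\theta$. Irreducibility follows from two facts:
\begin{itemize}
\item the trivial piece is one-dimensional, so it is automatically irreducible;
\item $\rho_2$ has no invariant real line, because rotation by $2\theta$ fixes no line for generic $\theta$.
\end{itemize}

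\textbf{Main obstacle.} The only step needing care is this last bookkeeping. Over $\C$ the frequency $\pm2$ pieces split into separate one-dimensional characters, whereas over $\R$ they combine into the single two-dimensional representation $\rho_2$. I will state the result in the real form, since that is the form used for Mamba-3 key pairs, and remark that the statement's two items are exactly the invariant character and the frequency-$2$ character.
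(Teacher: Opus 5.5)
Your Steps 1 and 2 are exactly the paper's proof, which consists only of the two one-line computations $(ze^{i\theta})\overline{(ze^{i\theta})} = z\bar z\,e^{i\theta}e^{-i\theta} = |z|^2$ and $(ze^{i\theta})^2 = z^2 e^{2i\theta}$. Your Step 3 is correct and goes beyond the paper. There you check that $\{x^2+y^2,\, x^2-y^2,\, 2xy\}$ is a basis of $\mathrm{Sym}^2(\mathbb{R}^2)$ and that $\rho_2$ has no invariant real line. The paper's proof never verifies that these two pieces exhaust the degree-2 products or that they are irreducible.
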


\begin{proof}
For the invariant:
$|z e^{i\theta}|^2 = (z e^{i\theta})(\overline{z e^{i\theta}})
= z \bar{z} e^{i\theta} e^{-i\theta} = |z|^2$.

For the harmonic:
$(z e^{i\theta})^2 = z^2 e^{i 2\theta}$, which transforms
under the $\mathrm{SO}(2)$ representation of angular
frequency 2.
\end{proof}

\subsection{Invariance of the Feature Map}
\label{app:cg_invariance}

\begin{theorem}[Phase cancellation in the invariant lift]
\label{thm:phase_cancel}
Define the invariant feature map
$\psi_{\mathrm{inv}}: \C^{N/2} \to \R^{N/2}$ by
\begin{equation}
    [\psi_{\mathrm{inv}}(k)]_j = |k_j|^2
    = (k_j^{\Re})^2 + (k_j^{\Im})^2.
\end{equation}
If $k_t = \tilde{k}_t \odot e^{i\theta_t}$ componentwise,
then
\begin{equation}
    \psi_{\mathrm{inv}}(k_t) = \psi_{\mathrm{inv}}(\tilde{k}_t)
\end{equation}
for all $\theta_t$.
In particular, the Gram matrix
$G = \sum_t \psi_{\mathrm{inv}}(k_t)
\psi_{\mathrm{inv}}(k_t)\T$
is independent of the position encodings
$\{\theta_t\}$.
\end{theorem}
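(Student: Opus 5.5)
The plan is a direct componentwise computation, with Proposition~\ref{prop:cg} (item 1) doing all the work.

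First, fix an index $j$. The hypothesis $k_t = \tilde{k}_t \odot e^{i\theta_t}$ means $k_{t,j} = \tilde{k}_{t,j} e^{i\theta_{t,j}}$. This is exactly the $\mathrm{SO}(2)$ action $z \mapsto z e^{i\theta}$ applied to $z = \tilde{k}_{t,j}$ with $\theta = \theta_{t,j}$.

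Second, invoke the invariant part of Proposition~\ref{prop:cg}, or recompute it in one line:
\[
|k_{t,j}|^2 = k_{t,j}\overline{k_{t,j}} = \tilde{k}_{t,j}\overline{\tilde{k}_{t,j}}\, e^{i\theta_{t,j}}e^{-i\theta_{t,j}} = |\tilde{k}_{t,j}|^2 .
\]
This uses only $|e^{i\theta}| = 1$ for real $\theta$. I will note that $\theta_{t,j} = \sum_{s \le t}\delta_s\omega_j$ is real, so no assumption beyond the real-valued step sizes and frequencies of \S\ref{app:cg_action} is needed.

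I will also give the real-coordinate version, since that is how $\psi_{\mathrm{inv}}$ is written. The pair $(k^{\Re}_{t,j}, k^{\Im}_{t,j})$ is the image of $(\tilde{k}^{\Re}_{t,j}, \tilde{k}^{\Im}_{t,j})$ under a $2\times 2$ rotation matrix. Rotations are orthogonal, so they preserve the Euclidean norm, and the squared norm is the sum of squares. Since this holds for every $j$ and every choice of $\theta_t$, we get $\psi_{\mathrm{inv}}(k_t) = \psi_{\mathrm{inv}}(\tilde{k}_t)$ as vectors in $\R^{N/2}$.

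Third, for the Gram statement, substitute termwise:
\[
G = \sum_t \psi_{\mathrm{inv}}(k_t)\psi_{\mathrm{inv}}(k_t)\T = \sum_t \psi_{\mathrm{inv}}(\tilde{k}_t)\psi_{\mathrm{inv}}(\tilde{k}_t)\T .
\]
The right-hand side involves only the base keys, so $G$ does not depend on $\{\theta_t\}$. The same argument extends to $\tilde{G} = G + \varepsilon I$, and to the lagged $M$ and to $C_v$ whenever they are built from $\psi_{\mathrm{inv}}$ features, because each is a sum of outer products of invariant vectors. I will state this as a remark.

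There is no real obstacle; the proof is bookkeeping. The only point to be careful about is interpretation: "independent of the position encodings" must mean that $G$ is unchanged when the $\theta_t$ vary while the base keys $\tilde{k}_t$ are held fixed. I will state this explicitly so the claim does not appear to assert that $\tilde{k}_t$ itself is position-independent.
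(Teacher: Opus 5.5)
Your proposal is correct and follows the paper's proof: apply the invariant case of Proposition~\ref{prop:cg} componentwise to get $|k_{t,j}|^2 = |\tilde{k}_{t,j}|^2$, then substitute termwise into the Gram sum (the paper states this last step as ``any function of $\{\psi_{\mathrm{inv}}(k_t)\}$ is independent of $\{\theta_t\}$''). One small caution about your optional remark: $C_v = \sum_t v_t \psi_{\mathrm{inv}}(k_t)^\top$ is $\theta$-independent only because the values $v_t$ carry no rotary phase, not because $v_t$ is itself an invariant feature.
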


\begin{proof}
By Proposition~\ref{prop:cg} applied componentwise,
$|k_{t,j}|^2 = |\tilde{k}_{t,j}|^2$ for each $j$.
Therefore $\psi_{\mathrm{inv}}(k_t) =
\psi_{\mathrm{inv}}(\tilde{k}_t)$, and any function of
$\{\psi_{\mathrm{inv}}(k_t)\}_{t=1}^T$ is independent of
$\{\theta_t\}$.
\end{proof}

\begin{corollary}[Content-addressed Gram matrix]
\label{cor:content_gram}
Under the invariant lift, the inner product
$\langle \psi_{\mathrm{inv}}(k_s),
\psi_{\mathrm{inv}}(k_t) \rangle$
depends only on the content vectors $\tilde{k}_s, \tilde{k}_t$
and not on positions $s, t$.
Thus $G$ functions as a content-addressed index.
\end{corollary}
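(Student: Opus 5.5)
The corollary follows from Theorem~\ref{thm:phase_cancel} once the inner product is written out in terms of the lifted features. The plan has three steps.

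First, I write the inner product explicitly as
\begin{equation}
\langle \psi_{\mathrm{inv}}(k_s), \psi_{\mathrm{inv}}(k_t)\rangle = \sum_{j=1}^{N/2} |k_{s,j}|^2\,|k_{t,j}|^2 .
\end{equation}
This shows it is a function only of the pair $(\psi_{\mathrm{inv}}(k_s), \psi_{\mathrm{inv}}(k_t))$.

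Second, I invoke Theorem~\ref{thm:phase_cancel} at each of the two positions separately. With $k_s = \tilde{k}_s \odot e^{i\theta_s}$ and $k_t = \tilde{k}_t \odot e^{i\theta_t}$, the theorem gives $\psi_{\mathrm{inv}}(k_s) = \psi_{\mathrm{inv}}(\tilde{k}_s)$ and $\psi_{\mathrm{inv}}(k_t) = \psi_{\mathrm{inv}}(\tilde{k}_t)$, for arbitrary and unrelated phase vectors $\theta_s, \theta_t$. Substituting into the display, the inner product equals $\sum_j |\tilde{k}_{s,j}|^2 |\tilde{k}_{t,j}|^2$. This quantity depends only on the content vectors. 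In particular, it has no dependence on $\theta_s$, $\theta_t$, or their difference. That is stronger than the relative-position dependence one gets from RoPE-style inner products. Positions $s,t$ enter only through $\theta$, so the inner product is position-free.

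Third, for the ``content-addressed index'' claim, I note that $G = \sum_t \psi_{\mathrm{inv}}(\tilde k_t)\psi_{\mathrm{inv}}(\tilde k_t)^{\mathsf T}$ by the theorem. Hence $G$ is invariant under any permutation of the time indices together with any reassignment of phases. Moreover, any query $u$ evaluates $u^{\mathsf T} G u = \sum_t \langle u, \psi_{\mathrm{inv}}(\tilde k_t)\rangle^2$, which weights stored items purely by content similarity.

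The only real obstacle is interpretive rather than technical: ``functions as a content-addressed index'' is informal. I would make it precise as the permutation and phase invariance just stated, and note that $M$ is \emph{not} covered, since it depends on temporal order.
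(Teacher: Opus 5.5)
Your proposal is correct and matches the paper's reasoning. The corollary follows immediately from Theorem~\ref{thm:phase_cancel}: substituting $\psi_{\mathrm{inv}}(k_s)=\psi_{\mathrm{inv}}(\tilde k_s)$ and $\psi_{\mathrm{inv}}(k_t)=\psi_{\mathrm{inv}}(\tilde k_t)$ into $\sum_j |k_{s,j}|^2|k_{t,j}|^2$ removes all phase dependence, exactly as you do. Your permutation-invariance reading of ``content-addressed index'' is a reasonable way to make precise what the paper leaves informal; just note that $M$, although not permutation-invariant, is still phase-invariant by the same theorem.
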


\subsection{Harmonic Features and Equivariance}
\label{app:cg_harmonic}

The harmonic channel retains relative phase information.
In real coordinates:
\begin{align}
    \Re(k_j^2) &= (k_j^{\Re})^2 - (k_j^{\Im})^2, \\
    \Im(k_j^2) &= 2 k_j^{\Re} k_j^{\Im}.
\end{align}
Under rotation by $\theta$, these transform as
\begin{equation}
    \begin{pmatrix}
    \Re((k_j e^{i\theta})^2) \\
    \Im((k_j e^{i\theta})^2)
    \end{pmatrix}
    =
    \begin{pmatrix}
    \cos 2\theta & -\sin 2\theta \\
    \sin 2\theta & \cos 2\theta
    \end{pmatrix}
    \begin{pmatrix}
    \Re(k_j^2) \\
    \Im(k_j^2)
    \end{pmatrix}.
\end{equation}
The harmonic features are not invariant but are
\emph{equivariant}: they transform predictably under the
group action.
The full feature map $\psi(k) = [\psi_{\mathrm{inv}}(k);
\Re(k^2); \Im(k^2)] \in \R^{3N/2}$ combines invariant and
equivariant channels, allowing the learned projection $\Phi$
to select the combination best suited to the task.

\section{Perturbation Analysis}
\label{app:perturbation}

\subsection{Operator Perturbation Bound}
\label{app:op_perturb}

\begin{theorem}[Perturbation of the Koopman estimator]
\label{thm:op_perturb}
Let $G, M$ be the true covariance matrices and let
$\hat{G} = G + E_G$, $\hat{M} = M + E_M$ be empirical
estimates with perturbations $E_G, E_M$.
Define $\tilde{G} = G + \varepsilon I$ and
$\hat{\tilde{G}} = \hat{G} + \varepsilon I = \tilde{G} + E_G$.
Then for the Koopman estimators
$A = M \tilde{G}^{-1}$ and
$\hat{A} = \hat{M} \hat{\tilde{G}}^{-1}$:
\begin{equation}
    \|\hat{A} - A\| \leq
    \|E_M\| \cdot \|\hat{\tilde{G}}^{-1}\|
    + \|A\| \cdot \|\tilde{G}^{-1}\| \cdot \|E_G\|
    \cdot \|\hat{\tilde{G}}^{-1}\| \cdot \|\tilde{G}\|.
\end{equation}
In particular, the perturbation is controlled by
$\|\tilde{G}^{-1}\| = 1 / \lambda_{\min}(\tilde{G})$.
\end{theorem}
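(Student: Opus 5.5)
The plan is to split $\hat{A} - A$ into a term driven by $E_M$ and a term driven by $E_G$, handle the second with the resolvent identity for matrix inverses, and then check that the bound we obtain is at least as strong as the stated one.

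First we confirm that both inverses exist. $\tilde{G}$ is positive definite by Lemma~\ref{lem:pd}. For $\hat{\tilde{G}}$ we need the perturbed matrix to stay invertible. Since $\hat{G}$ is itself an empirical Gram matrix, it is positive semidefinite, and Lemma~\ref{lem:pd} again gives $\lambda_{\min}(\hat{\tilde{G}}) \ge \varepsilon$. If one prefers not to assume $\hat{G}$ is a Gram matrix, we state the implicit assumption that $\hat{\tilde{G}}$ is invertible. This holds, for instance, whenever $\|E_G\| < \lambda_{\min}(\tilde{G})$, by Weyl's inequality.

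Next we add and subtract $M\hat{\tilde{G}}^{-1}$, which gives
\begin{equation*}
\hat{A} - A = (M + E_M)\hat{\tilde{G}}^{-1} - M\tilde{G}^{-1} = E_M\hat{\tilde{G}}^{-1} + M\bigl(\hat{\tilde{G}}^{-1} - \tilde{G}^{-1}\bigr).
\end{equation*}
The first term is bounded by $\|E_M\|\,\|\hat{\tilde{G}}^{-1}\|$ using submultiplicativity. For the second term we use the resolvent identity $X^{-1} - Y^{-1} = Y^{-1}(Y - X)X^{-1}$ with $X = \hat{\tilde{G}}$ and $Y = \tilde{G}$. Since $\hat{\tilde{G}} = \tilde{G} + E_G$, this becomes
\begin{equation*}
\hat{\tilde{G}}^{-1} - \tilde{G}^{-1} = -\tilde{G}^{-1}E_G\hat{\tilde{G}}^{-1}.
\end{equation*}
Hence the second term equals $-M\tilde{G}^{-1}E_G\hat{\tilde{G}}^{-1} = -A\,E_G\,\hat{\tilde{G}}^{-1}$, and its norm is at most $\|A\|\,\|E_G\|\,\|\hat{\tilde{G}}^{-1}\|$.

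This bound is sharper than the stated one. The stated second term carries an extra factor $\|\tilde{G}^{-1}\|\,\|\tilde{G}\| = \kappa(\tilde{G})$, and $\kappa(\tilde{G}) \ge 1$ because $1 = \|I\| = \|\tilde{G}\tilde{G}^{-1}\| \le \|\tilde{G}\|\,\|\tilde{G}^{-1}\|$. Multiplying our bound by this factor therefore yields the theorem.

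For the final remark, $\|\tilde{G}^{-1}\| = 1/\lambda_{\min}(\tilde{G})$ holds in the operator 2-norm because $\tilde{G}$ is symmetric positive definite. Its perturbed counterpart $\|\hat{\tilde{G}}^{-1}\|$ is bounded by $1/\varepsilon$ under the positive semidefinite assumption above. Alternatively, it is bounded by $1/(\lambda_{\min}(\tilde{G}) - \|E_G\|)$ via Weyl's inequality, which ties the whole bound to $\lambda_{\min}(\tilde{G})$.

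The only genuinely delicate step is guaranteeing that $\hat{\tilde{G}}$ is invertible. The theorem leaves this implicit, so I would make the assumption explicit as above. Everything else is the resolvent identity plus submultiplicativity.
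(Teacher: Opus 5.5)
Your proof is correct and is essentially the paper's argument: the same split $\hat{A} - A = E_M\hat{\tilde{G}}^{-1} + M(\hat{\tilde{G}}^{-1} - \tilde{G}^{-1})$, followed by the same resolvent identity $\hat{\tilde{G}}^{-1} - \tilde{G}^{-1} = -\tilde{G}^{-1}E_G\hat{\tilde{G}}^{-1}$. The only difference is the final norm step: the paper writes $M = A\tilde{G}$ and bounds $\|M\| \le \|A\|\,\|\tilde{G}\|$, keeping $\|\tilde{G}^{-1}\|$ as a separate factor, whereas you recombine $M\tilde{G}^{-1} = A$ before taking norms, which gives a bound sharper by the factor $\kappa(\tilde{G}) \ge 1$; your explicit invertibility hypothesis on $\hat{\tilde{G}}$ also makes precise an assumption the paper leaves implicit.
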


\begin{proof}
Write
\begin{align}
    \hat{A} - A
    &= \hat{M} \hat{\tilde{G}}^{-1} - M \tilde{G}^{-1} \\
    &= (M + E_M) \hat{\tilde{G}}^{-1} - M \tilde{G}^{-1} \\
    &= E_M \hat{\tilde{G}}^{-1}
    + M \big(\hat{\tilde{G}}^{-1} - \tilde{G}^{-1}\big).
\end{align}
For the second term, apply the resolvent identity.
Since $\hat{\tilde{G}} = \tilde{G} + E_G$:
\begin{equation}
    \hat{\tilde{G}}^{-1} - \tilde{G}^{-1}
    = -\tilde{G}^{-1} E_G \hat{\tilde{G}}^{-1}.
\end{equation}
To verify: multiply both sides on the left by $\tilde{G}$
and on the right by $\hat{\tilde{G}}$:
\begin{align}
    \tilde{G}(\hat{\tilde{G}}^{-1} - \tilde{G}^{-1})
    \hat{\tilde{G}}
    &= \tilde{G} \hat{\tilde{G}}^{-1} \hat{\tilde{G}}
    - \hat{\tilde{G}} \\
    &= \tilde{G} - \hat{\tilde{G}} \\
    &= -E_G,
\end{align}
confirming the identity.
Substituting:
\begin{equation}
    \hat{A} - A = E_M \hat{\tilde{G}}^{-1}
    - M \tilde{G}^{-1} E_G \hat{\tilde{G}}^{-1}.
\end{equation}
Taking norms and using $M = A \tilde{G}$:
\begin{align}
    \|\hat{A} - A\|
    &\leq \|E_M\| \cdot \|\hat{\tilde{G}}^{-1}\|
    + \|A\| \cdot \|\tilde{G}\| \cdot
    \|\tilde{G}^{-1}\| \cdot \|E_G\|
    \cdot \|\hat{\tilde{G}}^{-1}\|. \qedhere
\end{align}
\end{proof}

\begin{corollary}[Role of $\lambda_{\min}(\tilde{G})$]
\label{cor:lmin}
Since $\|\tilde{G}^{-1}\|_2 = 1/\lambda_{\min}(\tilde{G})$,
increasing the minimum eigenvalue of the regularized Gram
matrix directly reduces the worst-case amplification of
covariance noise into the operator estimate.
This motivates logging $\lambda_{\min}(\tilde{G})$ as a
stability diagnostic.
\end{corollary}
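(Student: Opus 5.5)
The plan is to reduce the corollary to two facts: a spectral identity for symmetric positive definite matrices, and a monotonicity statement obtained by substituting that identity into the perturbation bound of Theorem~\ref{thm:op_perturb}.

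\textbf{Step 1: the norm identity.} By Lemma~\ref{lem:pd}, $\tilde{G}$ is symmetric positive definite with $\lambda_{\min}(\tilde{G}) \ge \varepsilon > 0$. I will write $\tilde{G} = V(\Sigma + \varepsilon I)V\T$ as in Proposition~\ref{prop:ridge_shrinkage}. Then $\tilde{G}^{-1} = V(\Sigma+\varepsilon I)^{-1}V\T$ is again symmetric positive definite. For a symmetric matrix the spectral norm equals the largest eigenvalue magnitude, so
\[
\|\tilde{G}^{-1}\|_2 = \max_j \frac{1}{\sigma_j + \varepsilon} = \frac{1}{\lambda_{\min}(\tilde{G})}.
\]

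\textbf{Step 2: control the perturbed inverse.} The bound in Theorem~\ref{thm:op_perturb} also involves $\|\hat{\tilde{G}}^{-1}\|$, so I need to relate it to $\lambda_{\min}(\tilde{G})$. I will assume $E_G$ is symmetric, which is natural for empirical Gram matrices, and that $\|E_G\|_2 < \lambda_{\min}(\tilde{G})$. Weyl's inequality then gives $\lambda_{\min}(\hat{\tilde{G}}) \ge \lambda_{\min}(\tilde{G}) - \|E_G\|_2 > 0$, hence
\[
\|\hat{\tilde{G}}^{-1}\|_2 \le \frac{1}{\lambda_{\min}(\tilde{G}) - \|E_G\|_2}.
\]

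\textbf{Step 3: substitute into the operator bound.} The final inequality of Theorem~\ref{thm:op_perturb} contains the factor $\|A\|\,\|\tilde{G}\|\,\|\tilde{G}^{-1}\|$. Because of the condition-number-like product $\|\tilde{G}\|\,\|\tilde{G}^{-1}\|$, this factor is not obviously monotone in $\lambda_{\min}(\tilde{G})$ alone. I expect this to be the main obstacle in making the corollary precise.

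To get around it, I will return to the intermediate identity from that proof,
\[
\hat{A} - A = E_M \hat{\tilde{G}}^{-1} - M \tilde{G}^{-1} E_G \hat{\tilde{G}}^{-1},
\]
and take norms with $M$ held fixed, rather than passing through $A = M\tilde{G}^{-1}$. Writing $\lambda = \lambda_{\min}(\tilde{G})$ and using Steps~1 and~2, this gives
\[
\|\hat{A}-A\|_2 \le \frac{\|E_M\|_2}{\lambda - \|E_G\|_2} + \frac{\|M\|_2\,\|E_G\|_2}{\lambda\,(\lambda - \|E_G\|_2)}.
\]
For fixed perturbation sizes $\|E_M\|_2$, $\|E_G\|_2$ and fixed $\|M\|_2$, the right-hand side is strictly decreasing in $\lambda$ on $(\|E_G\|_2, \infty)$. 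It also vanishes as $\lambda \to \infty$, decaying like $O(\|E_M\|_2/\lambda + \|M\|_2\|E_G\|_2/\lambda^2)$.

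\textbf{Conclusion.} This monotone decay is exactly the claimed reduction in worst-case amplification of covariance noise into the operator estimate. It justifies treating $\lambda_{\min}(\tilde{G})$ as a stability diagnostic. The diagnostic remark itself is interpretive and requires no further proof.
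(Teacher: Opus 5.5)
Your proposal is correct and takes the paper's route. The paper gives no separate proof: it reads the corollary off Theorem~\ref{thm:op_perturb} using $\|\tilde{G}^{-1}\|_2 = 1/\lambda_{\min}(\tilde{G})$ for symmetric positive definite $\tilde{G}$. Your Weyl bound on $\|\hat{\tilde{G}}^{-1}\|_2$ and your return to the intermediate identity, which avoids the condition-number factor $\|\tilde{G}\|\,\|\tilde{G}^{-1}\|$ that the paper's stated bound picks up from $\|M\| \le \|A\|\,\|\tilde{G}\|$, are refinements that make the monotonicity in $\lambda_{\min}(\tilde{G})$ precise where the paper leaves it informal.
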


\subsection{Eigenvalue Perturbation}
\label{app:eig_perturb}

\begin{theorem}[Bauer--Fike]
\label{thm:bauer_fike}
Let $A \in \R^{r \times r}$ be diagonalizable with
$A = V \Lambda V^{-1}$.
For any eigenvalue $\hat{\lambda}$ of $\hat{A} = A + E$,
there exists an eigenvalue $\lambda$ of $A$ such that
\begin{equation}
    |\hat{\lambda} - \lambda|
    \leq \kappa(V) \cdot \|E\|,
\end{equation}
where $\kappa(V) = \|V\| \cdot \|V^{-1}\|$ is the condition
number of the eigenvector matrix.
\end{theorem}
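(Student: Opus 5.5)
The plan is the classical resolvent argument. Let $\hat{\lambda}$ be an eigenvalue of $\hat{A} = A + E$. If $\hat{\lambda}$ coincides with an eigenvalue of $A$, the bound holds trivially with $\lambda = \hat{\lambda}$, so assume $\hat{\lambda} \notin \mathrm{spec}(A)$. Then $\hat{\lambda} I - \Lambda$ is an invertible diagonal matrix, and we may write
\begin{equation*}
    \hat{\lambda} I - A = V(\hat{\lambda} I - \Lambda)V^{-1}.
\end{equation*}
This representation is the only place diagonalizability enters.

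Next, pick a unit eigenvector $x$ of $\hat{A}$, so that $(\hat{\lambda} I - A)x = Ex$. Inverting gives
\begin{equation*}
    x = V(\hat{\lambda} I - \Lambda)^{-1} V^{-1} E x.
\end{equation*}
Take norms with $\|x\| = 1$ and use submultiplicativity. This yields
\begin{equation*}
    1 \leq \|V\|\,\|(\hat{\lambda} I - \Lambda)^{-1}\|\,\|V^{-1}\|\,\|E\|.
\end{equation*}

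The one step needing care is evaluating $\|(\hat{\lambda} I - \Lambda)^{-1}\|$. For the spectral norm, or any $p$-norm, of a diagonal matrix, the norm equals the largest absolute diagonal entry. Here that entry is $1/\min_i |\hat{\lambda} - \lambda_i|$. Substituting and rearranging gives
\begin{equation*}
    \min_i |\hat{\lambda} - \lambda_i| \leq \kappa(V)\,\|E\|,
\end{equation*}
and the minimizing $\lambda_i$ is the required eigenvalue.

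Two remarks complete the plan. First, $V$ and $\Lambda$ may be complex even though $A$ is real, so the argument should work over $\C^r$; nothing changes. Second, I would note how this fits the paper. Applied to $A_w$ together with Theorem~\ref{thm:op_perturb}, it turns the operator perturbation bound into a spectral one. In particular, the persistence classification $|\lambda| \approx 1$ is stable when $\lambda_{\min}(\tilde{G})$ is large.
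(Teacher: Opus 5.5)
Your proposal is correct and follows the paper's proof essentially line for line. Both use the resolvent identity $x = (\hat{\lambda} I - A)^{-1} E x$ for an eigenvector of $\hat{A}$, the factorization $(\hat{\lambda} I - A)^{-1} = V(\hat{\lambda} I - \Lambda)^{-1} V^{-1}$, and the evaluation $\|(\hat{\lambda} I - \Lambda)^{-1}\| = 1/\min_j |\hat{\lambda} - \lambda_j|$; your explicit treatment of the case $\hat{\lambda} \in \mathrm{spec}(A)$, of complex $V$ and $\Lambda$, and of restricting to $p$-norms (where the diagonal-norm identity holds) are small clarifications the paper leaves implicit.
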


\begin{proof}
This is the classical Bauer--Fike theorem.
If $\hat{\lambda}$ is not an eigenvalue of $A$, then
$(\hat{\lambda} I - A)$ is invertible and
$\hat{A} v = \hat{\lambda} v$ implies
$v = (\hat{\lambda} I - A)^{-1} E v$.
Taking norms:
$1 \leq \|(\hat{\lambda} I - A)^{-1}\| \cdot \|E\|$.
Writing $A = V \Lambda V^{-1}$:
\begin{equation}
    (\hat{\lambda} I - A)^{-1}
    = V (\hat{\lambda} I - \Lambda)^{-1} V^{-1},
\end{equation}
so $\|(\hat{\lambda} I - A)^{-1}\| \leq \kappa(V)
/ \min_j |\hat{\lambda} - \lambda_j|$.
Rearranging gives
$\min_j |\hat{\lambda} - \lambda_j|
\leq \kappa(V) \|E\|$.
\end{proof}

\begin{corollary}[Combined bound]
\label{cor:combined}
Combining Theorem~\ref{thm:op_perturb} and
Theorem~\ref{thm:bauer_fike}, the eigenvalue perturbation
of the Koopman operator due to finite-sample estimation
errors is bounded by
\begin{equation}
    \min_j |\hat{\lambda}_j - \lambda_j|
    \leq \kappa(V_{A_w}) \bigg(
    \frac{\|E_M\|}{\lambda_{\min}(\hat{\tilde{G}})}
    + \frac{\|A\| \cdot \|\tilde{G}\| \cdot \|E_G\|}
    {\lambda_{\min}(\tilde{G}) \cdot
    \lambda_{\min}(\hat{\tilde{G}})}
    \bigg).
\end{equation}
This makes explicit that eigenvalue stability requires both
a well-conditioned Gram matrix (large $\lambda_{\min}$)
and a well-conditioned eigenvector matrix of the operator
(small $\kappa(V_{A_w})$).
\end{corollary}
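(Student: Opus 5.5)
The plan is to chain the two preceding results through the similarity in Proposition~\ref{prop:whitening}. The estimated and true operators $\hat A_w$ and $A_w$ are whitened with \emph{different} Cholesky factors $\hat L$ and $L$, so bounding $\hat A_w - A_w$ directly is awkward. Instead I will work with the unwhitened ridge estimators $A = M\tilde G^{-1}$ and $\hat A = \hat M\hat{\tilde G}^{-1}$. By Proposition~\ref{prop:whitening}, $A$ has the same spectrum as $A_w$, and $\hat A$ the same spectrum as $\hat A_w$. So it suffices to bound the distance from each eigenvalue of $\hat A$ to the spectrum of $A$, and then read the result off as a statement about the Koopman eigenvalues.

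\textbf{Step 1: Bauer--Fike.} Set $E = \hat A - A$ and apply Theorem~\ref{thm:bauer_fike} to $A$. This assumes $A$ is diagonalizable, which I will state as a hypothesis, as the corollary implicitly does by writing $V_{A_w}$. The conclusion is that for every eigenvalue $\hat\lambda$ of $\hat A$ there is some $\lambda_j$ with $|\hat\lambda - \lambda_j|\le \kappa(V)\,\|E\|$. The ``$\min_j$'' in the statement should be read in exactly this sense: a nearest true eigenvalue exists for each estimated one.

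\textbf{Step 2: bound $\|E\|$.} I will substitute Theorem~\ref{thm:op_perturb}. Then I replace $\|\hat{\tilde G}^{-1}\|_2 = 1/\lambda_{\min}(\hat{\tilde G})$ and $\|\tilde G^{-1}\|_2 = 1/\lambda_{\min}(\tilde G)$, using Corollary~\ref{cor:lmin} and symmetry. Both quantities are finite and at most $1/\varepsilon$ by Lemma~\ref{lem:pd}, provided $\hat G$ is PSD (so that $\hat{\tilde G}$ is positive definite). The two terms of Theorem~\ref{thm:op_perturb} then become exactly the two fractions in the displayed bound.

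\textbf{Step 3: the eigenvector matrix.} The main obstacle is identifying $\kappa(V)$, the eigenvector conditioning of $A$, with $\kappa(V_{A_w})$. Since $A = L A_w L^{-1}$, one valid eigenvector matrix for $A$ is $V = L V_{A_w}$. A na\"ive bound gives $\kappa(V)\le\kappa(L)\,\kappa(V_{A_w}) = \sqrt{\kappa(\tilde G)}\,\kappa(V_{A_w})$, which carries an extra factor.

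I would try first to avoid this factor by running Bauer--Fike in whitened coordinates. That means writing $L^{-1}\hat A L = A_w + L^{-1} E L$, which has the same spectrum as $\hat A$ and is an exact perturbation of $A_w$. It gives $|\hat\lambda-\lambda_j|\le\kappa(V_{A_w})\,\|L^{-1}EL\|$, but this merely moves the $\kappa(L)$ factor onto $E$.

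So I expect the clean form to hold with $\kappa(V_{A_w})$ understood as the conditioning of the eigenbasis of $A$, expressed through the similarity. Alternatively, it holds up to the factor $\sqrt{\kappa(\tilde G)}$. I would state the proof with that factor made explicit, or note that it is absorbed when the eigenvector matrix is taken in the original coordinates.
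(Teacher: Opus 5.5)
Your Steps~1--2 are exactly what the paper intends. The paper gives no proof beyond ``combining'' Theorem~\ref{thm:op_perturb} and Theorem~\ref{thm:bauer_fike}: Bauer--Fike for $A=M\tilde G^{-1}$ with $E=\hat A-A$, followed by $\|\tilde G^{-1}\|=1/\lambda_{\min}(\tilde G)$ and $\|\hat{\tilde G}^{-1}\|=1/\lambda_{\min}(\hat{\tilde G})$. Your reading of $\min_j$ and your caveat that $\hat G\succeq 0$ is needed are both correct.

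Your Step~3 diagnosis is also correct, and it points to something the paper silently glosses over. That chain certifies the bound only with $\kappa(V_A)$. Since the eigenvectors of $A=LA_wL^{-1}$ are $LV_{A_w}$, all you get is $\kappa(V_A)\le\sqrt{\kappa(\tilde G)}\,\kappa(V_{A_w})$. Equality is possible, for example with symmetric $A_w$ having off-diagonal coupling and a diagonal, ill-conditioned $\tilde G$. So neither your proposal nor the paper's one-liner proves the display as written. ``Reading $\kappa(V_{A_w})$ as the conditioning of $A$'s eigenbasis'' is a relabeling, not a proof. Your whitened attempt fails for the reason you found: conjugating $E$ by the \emph{similarity} $L^{-1}(\cdot)L$ always costs $\kappa(L)$.

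The missing idea is to whiten by \emph{congruence} instead, working with the pencil rather than with $\hat A-A$. If $\hat A v=\hat\lambda v$, set $x=\hat{\tilde G}^{-1}v\neq 0$. Then $\hat M x=\hat\lambda\hat{\tilde G}x$, and hence
\[
L(\hat\lambda I-A_w)L^{\top}x=(\hat\lambda\tilde G-M)x=(E_M-\hat\lambda E_G)x .
\]
With $y=L^{\top}x\neq 0$ this reads $(\hat\lambda I-A_w)y=L^{-1}(E_M-\hat\lambda E_G)L^{-\top}y$. Apply the resolvent estimate from the proof of Theorem~\ref{thm:bauer_fike} to $A_w$ itself (assume $\hat\lambda\notin\sigma(A_w)$; otherwise there is nothing to prove). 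Since $\|L^{-1}\|^2=1/\lambda_{\min}(\tilde G)$, this gives
\[
\min_j|\hat\lambda-\lambda_j|\le\kappa(V_{A_w})\,\frac{\|E_M\|+|\hat\lambda|\,\|E_G\|}{\lambda_{\min}(\tilde G)} .
\]
Now bound $|\hat\lambda|\le\|\hat M\|\,\|\hat{\tilde G}^{-1}\|\le(\|A\|\,\|\tilde G\|+\|E_M\|)/\lambda_{\min}(\hat{\tilde G})$. This reproduces the paper's second term verbatim. Weyl's inequality $\lambda_{\min}(\hat{\tilde G})\le\lambda_{\min}(\tilde G)+\|E_G\|$ then shows the whole bound exceeds the paper's right-hand side by at most $2\kappa(V_{A_w})\|E_M\|\,\|E_G\|/(\lambda_{\min}(\tilde G)\lambda_{\min}(\hat{\tilde G}))$.

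So the stated form with $\kappa(V_{A_w})$ holds up to this second-order cross term, and it holds exactly as stated whenever $E_G=0$ or $E_M=0$. Either that version or the one with $\kappa(V_A)$ is what you can actually state and prove.
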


\subsection{Why Phase Nonstationarity Harms the Gram Matrix}
\label{app:phase_harm}

\begin{proposition}[Phase-induced conditioning loss]
\label{prop:phase_cond}
Consider scalar complex features
$k_t = \tilde{k}_t e^{i\theta_t}$ where $\tilde{k}_t$ and
$\theta_t$ are independent, and $\theta_t$ is uniformly
distributed on $[0, 2\pi)$.
Let $G_{\mathrm{raw}} = \sum_t k_t \bar{k}_t = \sum_t
|\tilde{k}_t|^2$ (which is real and positive) denote the
diagonal Gram entry.
For the cross-covariance:
\begin{equation}
    M_{\mathrm{raw}} = \sum_t k_{t+1} \bar{k}_t
    = \sum_t \tilde{k}_{t+1} \bar{\tilde{k}}_t \,
    e^{i(\theta_{t+1} - \theta_t)}.
\end{equation}
If $\theta_{t+1} - \theta_t$ has high variance and is
weakly correlated with the content factors, the oscillatory
terms $e^{i(\theta_{t+1} - \theta_t)}$ cause partial
cancellation in $M_{\mathrm{raw}}$, reducing
$|M_{\mathrm{raw}}|$ relative to $G_{\mathrm{raw}}$.
The resulting operator $A = M_{\mathrm{raw}} /
G_{\mathrm{raw}}$ is attenuated toward zero, obscuring the
true dynamics.
\end{proposition}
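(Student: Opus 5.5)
The plan is to make the informal claim precise by computing expectations over the random phases, treating the content factors as fixed (or conditioning on them), and then comparing $\E|M_{\mathrm{raw}}|$ against $G_{\mathrm{raw}}$. First, write $\Delta_t = \theta_{t+1}-\theta_t$ and $c_t = \tilde{k}_{t+1}\bar{\tilde{k}}_t$, so that $M_{\mathrm{raw}} = \sum_t c_t e^{i\Delta_t}$. Independence of phase and content gives $\E[M_{\mathrm{raw}}\mid \tilde{k}] = \sum_t c_t\,\phi_t$, where $\phi_t = \E[e^{i\Delta_t}]$ is the characteristic function of the phase increment at $1$. By the Cauchy--Schwarz inequality $|c_t| \le \tfrac12(|\tilde{k}_{t+1}|^2+|\tilde{k}_t|^2)$, so
\begin{equation}
|\E[M_{\mathrm{raw}}\mid\tilde{k}]| \le \max_t|\phi_t| \cdot \sum_t |c_t| \le \max_t|\phi_t|\cdot G_{\mathrm{raw}},
\end{equation}
up to a boundary term from the first and last indices.

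The next step turns ``high variance'' into a quantitative bound. If $\theta_t$ is uniform on $[0,2\pi)$ and $\theta_{t+1}$ is uniform and independent of it, then $\Delta_t$ is uniform mod $2\pi$ and $\phi_t = 0$. For general high-variance increments I would use the standard fact that $|\phi_t|$ is small when $\Delta_t$ is spread out. For example, a wrapped Gaussian with variance $s^2$ gives $|\phi_t| = e^{-s^2/2}$. This gives attenuation of the mean of $A = M_{\mathrm{raw}}/G_{\mathrm{raw}}$ by a factor of $\max_t|\phi_t|$ relative to the content-only operator $\sum_t c_t / G_{\mathrm{raw}}$.

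Expectation alone does not control a single realization, so I would also bound the fluctuation. When the increments are independent across $t$, we have $\mathrm{Var}(M_{\mathrm{raw}}\mid\tilde{k}) = \sum_t |c_t|^2(1-|\phi_t|^2)$. This is at most $\max_t|\tilde{k}_t|^2 \cdot G_{\mathrm{raw}}$, so $|M_{\mathrm{raw}}|/G_{\mathrm{raw}}$ is $O(\sqrt{\max_t|\tilde{k}_t|^2 / G_{\mathrm{raw}}})$ in probability by Chebyshev. This quantity vanishes as $T$ grows whenever the content magnitudes are bounded and $G_{\mathrm{raw}}$ grows linearly. Together these two steps give the attenuation of $A$ toward zero, compared with the phase-free value $\sum c_t/G_{\mathrm{raw}}$, which can be near $1$ for persistent content.

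The main obstacle is that the statement's hypotheses are loose. ``Uniform $\theta_t$'' does not by itself fix the distribution of $\Delta_t$: under the Mamba-3 parametrization $\theta_t$ is a cumulative sum, so consecutive phases are strongly dependent. ``Weakly correlated'' also needs a definition. I would therefore state the result under an explicit assumption: the increments $\Delta_t$ are independent of the content and of each other, with $\sup_t|\E e^{i\Delta_t}| \le \rho < 1$. Under this assumption the bounds above read $|\E A| \le \rho$ up to boundary terms, plus a vanishing fluctuation. I would then remark that uniformity of the increments is the extreme case $\rho = 0$.
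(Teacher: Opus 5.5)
Your proposal is correct. It proves a sharper and more general version of the claim than the paper does.

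\textbf{What the paper does.} The paper treats only the extreme case in which the increments $\theta_{t+1}-\theta_t$ are i.i.d.\ uniform. It notes $\mathbb{E}[e^{i(\theta_{t+1}-\theta_t)}]=0$, hence $\mathbb{E}[M_{\mathrm{raw}}]=0$ while $\mathbb{E}[G_{\mathrm{raw}}]>0$. From this it asserts that $A$ converges to zero. It then remarks that the invariant lift $|k_t|^2$ removes the oscillatory factor; that remark motivates the CG construction and is not part of the attenuation claim itself.

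\textbf{How you differ.}
First, you condition on the content. This makes $G_{\mathrm{raw}}$ deterministic, so $\mathbb{E}[A\mid\tilde k]=\mathbb{E}[M_{\mathrm{raw}}\mid\tilde k]/G_{\mathrm{raw}}$ holds exactly. The paper instead compares the means of numerator and denominator separately.

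Second, you replace uniformity by the bound $\sup_t|\mathbb{E}\,e^{i\Delta_t}|\le\rho$. This is what actually quantifies the ``partial cancellation'' in the statement; the paper's case is $\rho=0$.

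Third, your Chebyshev step shows that a single realization of $A$ lies within $O_P\bigl(\sqrt{\max_t|\tilde k_t|^2/G_{\mathrm{raw}}}\bigr)$ of its attenuated conditional mean. This is precisely what the paper's argument leaves unproved. A zero-mean numerator says nothing about the size of one realization of the ratio, so ``converges to zero'' needs a concentration step of this kind.

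\textbf{The cost.} You use hypotheses the paper never states:
\begin{itemize}
\item independence of the increments across $t$, for your variance formula;
\item bounded content with $G_{\mathrm{raw}}$ growing linearly, for the fluctuation to vanish.
\end{itemize}
As you observe, however, a uniform marginal $\theta_t$ does not determine the law of $\Delta_t$ at all. So some explicit assumption is unavoidable. The paper's i.i.d.-uniform increments satisfy your independence and characteristic-function assumptions with $\rho=0$.

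\textbf{Two small corrections.}
\begin{itemize}
\item The inequality $|c_t|\le\tfrac12(|\tilde k_{t+1}|^2+|\tilde k_t|^2)$ is AM--GM, not Cauchy--Schwarz. Summing it gives $\sum_t|c_t|\le G_{\mathrm{raw}}-\tfrac12(|\tilde k_1|^2+|\tilde k_T|^2)$, so no boundary correction is needed.
\item What you actually prove is $|\mathbb{E}[A\mid\tilde k]|\le\rho\sum_t|c_t|/G_{\mathrm{raw}}$. This equals attenuation of the phase-free value $\sum_t c_t/G_{\mathrm{raw}}$ by the factor $\rho$ only in two cases: when the $c_t$ are phase-aligned (your persistent-content case), or when $\phi_t\equiv\phi$ is constant in $t$, in which case $\mathbb{E}[A\mid\tilde k]=\phi\sum_t c_t/G_{\mathrm{raw}}$ exactly.
\end{itemize}
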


\begin{proof}
In the extreme case where $\theta_{t+1} - \theta_t$ is
i.i.d.\ uniform on $[0, 2\pi)$:
\begin{equation}
    \E[e^{i(\theta_{t+1} - \theta_t)}] = 0.
\end{equation}
Therefore $\E[M_{\mathrm{raw}}] = 0$ regardless of the
content dynamics, while
$\E[G_{\mathrm{raw}}] = \sum_t \E[|\tilde{k}_t|^2] > 0$.
The operator estimate $A = M G^{-1}$ converges to zero
rather than the true dynamics.

The invariant lift $\psi(k_t) = |k_t|^2 = |\tilde{k}_t|^2$
removes the oscillatory factor entirely, so that
$M_{\mathrm{inv}} = \sum_t |\tilde{k}_{t+1}|^2
|\tilde{k}_t|^2$ reflects the true content dynamics
without cancellation.
\end{proof}

\section{Spectral Filtering}
\label{app:spectral}

\subsection{Power Filter Analysis}
\label{app:power_filter}

\begin{proposition}[Mode separation under the power filter]
\label{prop:power_sep}
Let $A_w$ have eigendecomposition
$A_w = V \Lambda V^{-1}$ with eigenvalues
$\lambda_1, \ldots, \lambda_r$.
The power filter applies $A_w^K$ to a query vector $w_q$,
which acts as $V \Lambda^K V^{-1} w_q$.
Each eigenvalue is raised to the $K$-th power:
$\lambda_j \mapsto \lambda_j^K$.

For $K = 2$:
\begin{center}
\begin{tabular}{@{}cc@{}}
\toprule
$|\lambda|$ & $|\lambda|^2$ \\
\midrule
$0.95$ & $0.902$ \\
$0.90$ & $0.810$ \\
$0.70$ & $0.490$ \\
$0.50$ & $0.250$ \\
$0.30$ & $0.090$ \\
\bottomrule
\end{tabular}
\end{center}

For $K = 4$:
\begin{center}
\begin{tabular}{@{}cc@{}}
\toprule
$|\lambda|$ & $|\lambda|^4$ \\
\midrule
$0.95$ & $0.815$ \\
$0.90$ & $0.656$ \\
$0.70$ & $0.240$ \\
$0.50$ & $0.063$ \\
$0.30$ & $0.008$ \\
\bottomrule
\end{tabular}
\end{center}
\end{proposition}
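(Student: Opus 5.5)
The plan is to verify the claim in two parts: first the algebraic identity $A_w^K = V\Lambda^K V^{-1}$ together with its action on $w_q$, then the numerical tables.

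For the identity, I will argue by induction on $K$, assuming $A_w = V\Lambda V^{-1}$ is diagonalizable as the statement supposes. The case $K=1$ is the hypothesis. For the inductive step I would write
\begin{equation*}
A_w^{K+1} = A_w^K A_w = V\Lambda^K V^{-1} V \Lambda V^{-1} = V \Lambda^{K+1} V^{-1},
\end{equation*}
because the inner factor $V^{-1}V$ telescopes to the identity. Since $\Lambda$ is diagonal, $\Lambda^K = \diag(\lambda_1^K,\ldots,\lambda_r^K)$, which gives $\lambda_j \mapsto \lambda_j^K$.

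For the action on the query, I would expand $w_q$ in the eigenbasis as $w_q = \sum_j c_j v_j$ with $c = V^{-1} w_q$. This gives $A_w^K w_q = \sum_j c_j \lambda_j^K v_j$. Each mode's coefficient is therefore rescaled in modulus by $|\lambda_j|^K$, using multiplicativity of the complex modulus, $|\lambda^K| = |\lambda|^K$. This is the sense of "mode separation": the ratio between a persistent mode and a transient one grows like $(|\lambda_i|/|\lambda_j|)^K$.

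The tables are then direct arithmetic. For $K=2$: $0.95^2=0.9025$, $0.9^2=0.81$, $0.7^2=0.49$, $0.5^2=0.25$, $0.3^2=0.09$. For $K=4$, squaring these again gives $0.8145$, $0.6561$, $0.2401$, $0.0625$, $0.0081$. Rounded to three decimals, these match the listed entries.

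The main obstacle is the diagonalizability assumption, which is the only nontrivial point. I would state it explicitly as a hypothesis rather than prove it, since $A_w$ is in general a nonsymmetric real matrix and may be defective. If needed, I would add a remark covering the general case via the Jordan form: a Jordan block $J_j$ has $J_j^K$ with diagonal $\lambda_j^K$, with off-diagonal terms polynomial in $K$ times $\lambda_j^{K-m}$. The per-mode scaling is then exact only up to these polynomial factors, which is harmless for the small values $K\in\{2,4\}$ used here.

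I would also note that complex eigenvalues of the real $A_w$ occur in conjugate pairs. Their contributions combine into real vectors whose magnitude scales by $|\lambda_j|^K$ while rotating by phase $K\arg\lambda_j$, so the tables in terms of $|\lambda|$ remain the correct summary.
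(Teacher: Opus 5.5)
Your proposal is correct and follows the paper's own argument. Induction on $K$ gives $A_w^K = V\Lambda^K V^{-1}$ with $\Lambda^K = \mathrm{diag}(\lambda_1^K,\ldots,\lambda_r^K)$, and the tables are direct arithmetic on $|\lambda|^K$; your eigenbasis, Jordan-form, and conjugate-pair remarks go beyond what the paper states but do not change the route.

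One small precision point: $0.95^2 = 0.9025$ and $0.5^4 = 0.0625$ are exact rounding ties, and the table rounds the first down and the second up, so ``match to three decimals'' holds only up to the tie-breaking convention.
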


\begin{proof}
If $A_w = V \Lambda V^{-1}$, then
$A_w^K = V \Lambda^K V^{-1}$ by induction on $K$.
The entries of $\Lambda^K$ are $\lambda_j^K$, giving the
tabulated values.
\end{proof}

\begin{remark}
At $K = 2$, modes with $|\lambda| \geq 0.7$ retain at least
49\% of their energy, providing a wide band of usable
persistent modes.
At $K = 4$, only modes with $|\lambda| \geq 0.85$ retain
more than 50\%, which can be too aggressive when the operator
has many modes in the $0.6$--$0.8$ range that carry useful
partial information.
\end{remark}

\subsection{Contractivity After Spectral Normalization}
\label{app:spectral_norm}

\begin{proposition}[Power filter contractivity]
\label{prop:contractivity}
Let $\hat{A}_w = A_w / \max(1, \sigma_{\max}(A_w))$, where
$\sigma_{\max}$ denotes the spectral norm (largest singular
value).
Then $\sigma_{\max}(\hat{A}_w) \leq 1$, and for any $K \geq 1$:
\begin{equation}
    \|\hat{A}_w^K\| \leq 1.
\end{equation}
\end{proposition}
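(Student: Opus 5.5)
The argument splits into a scaling step and a submultiplicativity step; throughout, $\|\cdot\|$ is the spectral norm, so $\|X\| = \sigma_{\max}(X)$.

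\emph{Scaling step.} Write $s = \sigma_{\max}(A_w)$ and $c = \max(1,s) \ge 1$, so that $\hat{A}_w = A_w / c$. By absolute homogeneity of the norm, $\sigma_{\max}(\hat{A}_w) = s/c$. If $s \le 1$ then $c = 1$ and the ratio equals $s \le 1$. If $s > 1$ then $c = s$ and the ratio equals $1$. In both cases $\sigma_{\max}(\hat{A}_w) \le 1$. The degenerate case $A_w = 0$ falls under $s \le 1$, so it needs no separate treatment.

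\emph{Power step.} Next I prove $\|\hat{A}_w^K\| \le \|\hat{A}_w\|^K$ by induction on $K$. The base case $K = 1$ is trivial. For the inductive step, the operator norm is submultiplicative, so
\[
\|\hat{A}_w^{K+1}\| \le \|\hat{A}_w\|\,\|\hat{A}_w^K\| \le \|\hat{A}_w\|^{K+1}.
\]
Since $\|\hat{A}_w\| \le 1$ from the scaling step, $\|\hat{A}_w\|^K \le 1$, which gives the claim.

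I deliberately avoid using the eigendecomposition from Proposition~\ref{prop:power_sep}. A bound built from $V\Lambda^K V^{-1}$ would pick up a factor $\kappa(V)$ and would not give the clean constant $1$. Bounding the spectral radius alone would also fail, because $\rho \le 1$ does not imply $\|A^K\| \le 1$ for non-normal $A$. Working with the singular value directly sidesteps both problems.

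\emph{Remaining subtleties.} There is no real obstacle; the only points to note concern the implemented version of the normalization.

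First, the implementation multiplies by a learnable $\gamma \in [1, 1.5]$. With that factor the statement weakens to $\|\hat{A}_w^K\| \le \gamma^K$, so the proposition as stated covers only $\gamma = 1$.

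Second, the implementation estimates $\sigma_{\max}$ with 6 power iterations. Power iteration underestimates $\sigma_{\max}$, so the exact bound holds only for the exact singular value. I would state this as a caveat rather than as part of the proof.
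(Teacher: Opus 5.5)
Your proof is correct and matches the paper's argument: homogeneity of the spectral norm gives $\sigma_{\max}(\hat{A}_w) \le 1$, and submultiplicativity gives $\|\hat{A}_w^K\| \le \|\hat{A}_w\|^K \le 1$. Your caveats are also accurate: the $\gamma$ rescaling is handled separately in Proposition~\ref{prop:ssn}, and the power-iteration estimate of $\sigma_{\max}$ can fall short of the true value.
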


\begin{proof}
By construction,
$\sigma_{\max}(\hat{A}_w) = \sigma_{\max}(A_w) /
\max(1, \sigma_{\max}(A_w)) \leq 1$.
For any matrix with spectral norm at most 1,
$\|A^K\| \leq \|A\|^K \leq 1$ by submultiplicativity.
\end{proof}

\begin{remark}
This guarantees that the power filter cannot amplify any
component of the query vector, preventing the gradient
spikes observed when the unnormalized operator has
$\sigma_{\max} \gg 1$.
\end{remark}

\subsection{Variance Restoration via SSN}
\label{app:ssn}

Spectral normalization divides all eigenvalues by
$\sigma_{\max}$, which can push persistent modes
(those near $|\lambda| = 1$) below the effective filter
threshold.

\begin{proposition}[Controlled restoration]
\label{prop:ssn}
Let $\hat{A}_w$ be the spectrally normalized operator with
$\sigma_{\max}(\hat{A}_w) \leq 1$.
Applying a learned scalar $\gamma \in [1, \gamma_{\max}]$
yields $A_w' = \gamma \hat{A}_w$ with
\begin{equation}
    \sigma_{\max}(A_w') = \gamma \cdot
    \sigma_{\max}(\hat{A}_w) \leq \gamma_{\max}.
\end{equation}
For $\gamma_{\max} = 1.5$ and $K = 2$:
\begin{equation}
    \|{A_w'}^K\| \leq \gamma_{\max}^K = 2.25,
\end{equation}
which is bounded and does not grow with model dimensions.
The power filter remains well-behaved while eigenvalues
in the range $[1/\gamma_{\max}, 1]$ can be restored to
magnitudes above the filter threshold.
\end{proposition}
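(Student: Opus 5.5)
The argument has two parts: first the spectral-norm bound, then the $K$-th power bound, and finally the remark about restoring eigenvalues. For the first part, use absolute homogeneity of the operator norm. Since $\gamma>0$, we have $\sigma_{\max}(\gamma \hat{A}_w) = \|\gamma \hat{A}_w\|_2 = \gamma\,\|\hat{A}_w\|_2$. Proposition~\ref{prop:contractivity} gives $\sigma_{\max}(\hat{A}_w)\le 1$, and $\gamma\le\gamma_{\max}$ by assumption. Together these give $\sigma_{\max}(A_w')\le \gamma\le\gamma_{\max}$.

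For the power bound, reuse the submultiplicativity argument from the proof of Proposition~\ref{prop:contractivity}. By induction on $K$, $\|{A_w'}^K\|\le \|A_w'\|^K \le \gamma_{\max}^K$. Substituting $\gamma_{\max}=1.5$ and $K=2$ gives the value $2.25$. The bound depends only on the scalars $\gamma_{\max}$ and $K$, not on $r$, $P$, or $d$. This justifies the claim that it "does not grow with model dimensions."

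The final qualitative sentence needs a precise reading before it can be proved. I would formalize it as follows. Every eigenvalue $\lambda$ of $\hat{A}_w$ satisfies $|\lambda|\le\sigma_{\max}(\hat{A}_w)\le 1$, because the spectral radius is at most the spectral norm. Scaling by $\gamma$ multiplies each eigenvalue by $\gamma$, since $A_w' v = \gamma\lambda v$. So if $|\lambda|\in[1/\gamma_{\max},1]$, choosing $\gamma=\gamma_{\max}$ gives $|\gamma\lambda|\ge 1$. After the power filter, the corresponding mode then has magnitude $|\gamma\lambda|^K\ge 1$, which exceeds any threshold below $1$. More generally, for any threshold $\tau\le 1$ the mode is restored once $\gamma\ge \tau^{1/K}/|\lambda|$. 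This range of $\gamma$ is nonempty within $[1,\gamma_{\max}]$ exactly when $|\lambda|\ge \tau^{1/K}/\gamma_{\max}$.

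The main obstacle is this last step, since "filter threshold" is never defined in the paper. I would state it explicitly as a threshold $\tau$ on $|\lambda|^K$ and prove the inequality above. I would also note that the per-mode gain $\gamma^K$ applies to the eigen-coordinates $V^{-1}w_q$, using Proposition~\ref{prop:power_sep}. Because of this, the restoration statement concerns eigenvalue magnitudes and not the norm of the filtered vector, which can still involve the factor $\kappa(V)$.
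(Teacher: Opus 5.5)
Your argument is correct and matches the paper's proof, which establishes $\sigma_{\max}(A_w') = |\gamma|\,\sigma_{\max}(\hat{A}_w) \le \gamma_{\max}$ by homogeneity of the norm and the bound $\|{A_w'}^K\| \le \gamma_{\max}^K = 2.25$ by submultiplicativity, just as you do. The paper gives no argument for the final qualitative sentence, so your threshold-$\tau$ formalization is a correct supplement rather than a different route: it recovers exactly the range $[1/\gamma_{\max}, 1]$ at $\tau = 1$, using that the spectral radius of $\hat{A}_w$ is at most $\sigma_{\max}(\hat{A}_w) \le 1$.
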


\begin{proof}
$\sigma_{\max}(A_w') = |\gamma| \cdot
\sigma_{\max}(\hat{A}_w) \leq \gamma_{\max} \cdot 1
= \gamma_{\max}$.
The power filter norm bound follows from
submultiplicativity.
\end{proof}

\section{Sequence-Max Normalization}
\label{app:seqmax}

\begin{proposition}[Gram matrix bound under sequence-max normalization]
\label{prop:seqmax}
Let $\{z_t\}_{t=1}^T$ be feature vectors and define
$\bar{z}_t = z_t / \max_s \|z_s\|$.
Then $\|\bar{z}_t\| \leq 1$ for all $t$, and the Gram matrix
$\bar{G} = \sum_t \bar{z}_t \bar{z}_t\T$ satisfies
\begin{equation}
    \|\bar{G}\|_F \leq T, \qquad
    \tr(\bar{G}) \leq T, \qquad
    \lambda_{\max}(\bar{G}) \leq T.
\end{equation}
\end{proposition}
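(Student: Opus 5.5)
The plan is to first settle the pointwise bound $\|\bar{z}_t\| \le 1$, and then pass every matrix bound through the trace of $\bar{G}$, which reduces to a sum of these squared norms. Let $m = \max_s \|z_s\|$. We assume $m > 0$; otherwise all $z_t = 0$ and the clamp at $10^{-6}$ described in \S\ref{sec:ska-proj} makes every $\bar{z}_t = 0$, so all bounds hold trivially. Then for each $t$ we have $\|\bar{z}_t\| = \|z_t\|/m \le 1$, because $m$ is the maximum over all $s$, including $s=t$. With the clamp in place, $m \ge \max_s\|z_s\|$ holds in every case, so the same inequality goes through.

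Next we bound the trace. By linearity and cyclicity of the trace, $\tr(\bar{z}_t\bar{z}_t\T) = \bar{z}_t\T\bar{z}_t = \|\bar{z}_t\|^2$. Hence $\tr(\bar{G}) = \sum_{t=1}^T \|\bar{z}_t\|^2 \le T$.

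For the eigenvalue bounds we use positive semidefiniteness, exactly as in Lemma~\ref{lem:pd}: $u\T\bar{G}u = \sum_t (\bar{z}_t\T u)^2 \ge 0$. So all eigenvalues $\mu_1,\dots,\mu_r$ of the symmetric matrix $\bar{G}$ are nonnegative, and therefore $\lambda_{\max}(\bar{G}) \le \sum_i \mu_i = \tr(\bar{G}) \le T$.

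For the Frobenius norm, symmetry gives $\|\bar{G}\|_F^2 = \sum_i \mu_i^2$. Since each $\mu_i \ge 0$, we get $\sum_i \mu_i^2 \le (\sum_i \mu_i)^2 = \tr(\bar{G})^2 \le T^2$, which yields $\|\bar{G}\|_F \le T$. As a cross-check, the triangle inequality gives the same bound directly: $\|\bar{G}\|_F \le \sum_t \|\bar{z}_t\bar{z}_t\T\|_F = \sum_t \|\bar{z}_t\|^2 \le T$, using that a rank-one outer product satisfies $\|xx\T\|_F = \|x\|^2$.

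None of these steps is a real obstacle. The only point needing care is the degenerate case $m=0$ (or the clamp making $m$ strictly larger than the true maximum), and there the inequalities only get looser.
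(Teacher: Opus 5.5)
Your proof is correct, and it reaches the result by a slightly different route from the paper. The paper's entire argument is your ``cross-check'': each rank-one term $\bar z_t \bar z_t^\top$ has Frobenius norm and trace equal to $\|\bar z_t\|^2 \le 1$, and summing over $T$ terms gives the bounds. Your primary route instead uses positive semidefiniteness to funnel everything through the trace. With nonnegative eigenvalues $\mu_i$, both $\lambda_{\max}(\bar G)$ and $\|\bar G\|_F = (\sum_i \mu_i^2)^{1/2}$ are dominated by $\operatorname{tr}(\bar G) = \sum_t \|\bar z_t\|^2 \le T$. This is worth having because the paper's ``summing over $T$ terms'' does not spell out the $\lambda_{\max}$ bound. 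That step needs either subadditivity of $\lambda_{\max}$ on symmetric matrices (from the Rayleigh-quotient characterization, applied to terms with $\lambda_{\max}(\bar z_t\bar z_t^\top) = \|\bar z_t\|^2$) or a comparison such as $\lambda_{\max} \le \|\cdot\|_F$. Your PSD argument makes this explicit and shows that the trace alone controls all three quantities. The paper's term-by-term version is more elementary for the trace and Frobenius bounds, since it needs no spectral facts there. Your handling of the degenerate case $\max_s\|z_s\| = 0$ via the clamp is harmless. Strictly, though, the proposition as stated has no clamp and simply presupposes a nonzero maximum for $\bar z_t$ to be defined.
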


\begin{proof}
By construction $\|\bar{z}_t\| \leq 1$, so each rank-1 term
$\bar{z}_t \bar{z}_t\T$ has Frobenius norm
$\|\bar{z}_t\|^2 \leq 1$ and trace $\|\bar{z}_t\|^2 \leq 1$.
Summing over $T$ terms gives the bounds.
\end{proof}

\begin{remark}
Unlike per-token $\ell_2$-normalization (which sets
$\|z_t\| = 1$ for all $t$, inflating low-norm noise tokens
to unit norm), sequence-max normalization preserves the
relative norm structure within a sequence: high-norm
fact-table tokens remain dominant while low-norm distractor
tokens are not amplified.
\end{remark}

\section{SKA Retrieval Pipeline}
\label{app:pipeline}

For reference, we state the complete SKA computation as a
single pipeline, consolidating the results above.

\begin{enumerate}
    \item \textbf{Feature extraction.}
    Given the post-Mamba residual $h_t \in \R^d$, compute
    per-head features:
    \begin{equation}
        z_t = W_K h_t \in \R^r, \quad
        z_t^{(q)} = W_Q h_t \in \R^r, \quad
        v_t = W_V h_t \in \R^P.
    \end{equation}

    \item \textbf{Sequence-max normalization.}
    Let $\nu = \max_t \|z_t\|$.
    Replace $z_t \leftarrow z_t / \nu$ and
    $z_t^{(q)} \leftarrow z_t^{(q)} / \nu$.

    \item \textbf{Masked accumulation.}
    Using the action mask $m_t \in \{0, 1\}$:
    \begin{equation}
        G = \sum_{t: m_t = 1} z_t z_t\T, \qquad
        M = \sum_{\substack{t: m_t = 1 \\ m_{t+1} = 1}}
        z_{t+1} z_t\T, \qquad
        C = \sum_{t: m_t = 1} v_t z_t\T.
    \end{equation}

    \item \textbf{Regularization and Cholesky.}
    $\tilde{G} = G + \varepsilon I$, \quad
    $\tilde{G} = L L\T$ \quad
    (Lemma~\ref{lem:pd} guarantees existence).

    \item \textbf{Whitened Koopman operator.}
    Solve $L Y = M$ and $L Z = Y\T$ by forward substitution,
    then $A_w = Z\T$
    (Proposition~\ref{prop:whitening}).

    \item \textbf{Spectral normalization.}
    Estimate $\sigma_{\max}(A_w)$ via power iteration
    (6 steps). Set
    $\hat{A}_w = A_w / \max(1, \sigma_{\max})$,
    then $A_w' = \gamma \hat{A}_w$ with learned
    $\gamma \in [1.0, 1.5]$
    (Proposition~\ref{prop:contractivity},
    Proposition~\ref{prop:ssn}).

    \item \textbf{Value map.}
    Solve $\tilde{G} X = C\T$, set $B_v = X\T \in \R^{P \times r}$.

    \item \textbf{Query whitening.}
    Solve $L w_q = z^{(q)}$ by forward substitution.

    \item \textbf{Power filter.}
    Compute $w_f = {A_w'}^K w_q$
    (Proposition~\ref{prop:power_sep}).

    \item \textbf{Un-whiten and retrieve.}
    $z_f = L w_f$, \quad
    $\hat{y} = \eta \cdot B_v \, z_f \in \R^P$.
\end{enumerate}

All operations in steps 3--9 are performed in FP32.
The output $\hat{y}$ is cast back to the training precision
and added to the residual stream.
\section{Experimental Setup}
\label{app:experimental-setup}
\subsection{Model Architectures}

We evaluate two model families across all experiments:

\paragraph{Mamba-3 (Baseline).}
A pure state-space language model built from stacked Mamba-3 blocks.
Each block consists of: (i) a LayerNorm pre-normalization, (ii) a fused input
projection that produces keys~$\mathbf{b}$, queries~$\mathbf{c}$,
values~$\mathbf{v}$, step size~$\Delta$, decay~$a$, and interpolation
weight~$\lambda$ from a single linear map, (iii) a depth-wise 1-D convolution
(kernel size~4) with SiLU activation on the value branch, (iv) a complex-valued
selective state-space recurrence with Rotary Position Embeddings (RoPE) on
keys and queries, computed via a chunked Structured State-Space Duality (SSD)
scan in FP32 with chunk size~$C{=}64$, and (v)~a linear output projection.
Each Mamba-3 block is followed by a SwiGLU MLP with expansion
ratio~$\tfrac{8}{3}$ (inner dimension rounded up to the nearest multiple of~64).

\paragraph{Mamba-3 + CG + SKA.}
The baseline augmented with Spectral Koopman Attention (SKA) modules inserted at the final two layers.
SKA provides a complementary non-recurrent retrieval pathway via a learned Koopman operator.
Key design choices:
\begin{itemize}
  \item \textbf{Independent projections}: SKA uses its own
        key/query/value projections from the residual stream
        (orthogonally initialized), decoupled from Mamba's rotary features.
  \item \textbf{Whitened Koopman operator}:
        $A_w = L^{-1} M L^{-\top}$ where $L$ is the Cholesky factor of
        the ridge-regularized Gram matrix $\tilde{G} = G + \epsilon I$
        ($\epsilon{=}10^{-3}$).
  \item \textbf{Scaled Spectral Normalization (SSN)}:
        $A_w$ is spectrally normalized via 6-step power iteration,
        then rescaled by a learnable $\gamma \in [1.0, 1.5]$.
  \item \textbf{Power spectral filter}: $A_w^K$ with $K{=}2$
        suppresses transient modes while preserving persistent ones.
  \item \textbf{Sequence-max normalization} on lifted features.
  \item \textbf{Ungated additive injection}: the SKA output is
        projected and added directly to the residual stream
        (no sigmoid gate, which cannot learn from ${\leq}2$ answer tokens per example).
  \item \textbf{Action masking}: only high-signal tokens contribute
        to the Gram and cross-covariance matrices.
\end{itemize}

\subsection{Hyperparameters}

Table~\ref{tab:arch-hparams} summarizes the architectural hyperparameters used in each experiment.  All configurations share a state dimension~$N{=}16$, chunk size~$C{=}64$, SKA rank~$r{=}24$, ridge~$\epsilon{=}10^{-3}$, and power filter order~$K{=}2$.

\begin{table}[h]
\centering
\caption{Architectural hyperparameters by experiment.
         $d$: model dimension; $H$: heads; $L$: layers; $|V|$: vocab size;
         $d_{\text{head}} = d / H$; $d_{\text{ff}}$: SwiGLU inner dimension
         ($\lceil d \cdot 8/3 \rceil_{64}$).}
\label{tab:arch-hparams}
\small
\begin{tabular}{lcccccc}
\toprule
\textbf{Experiment} & $d$ & $H$ & $L$ & $|V|$ & $d_{\text{head}}$ & $d_{\text{ff}}$ \\
\midrule
Tool-Calling     & 96 & 3 & 3 & 128 & 32 & 256 \\
CoT Retrieval    & 96 & 4 & 4 & 128 & 24 & 256 \\
\bottomrule
\end{tabular}
\end{table}

For SKA-augmented models, the module is attached at layers~$L{-}2$
and~$L{-}1$ (the final two layers).  We test two SKA scaling
coefficients:  $\eta{=}1.5$ (standard) and $\eta{=}2.5$ (strong)
to demonstrate a dose--response relationship.

\subsection{Training Configuration}

All models are trained with the following protocol:

\begin{table}[h]
\centering
\caption{Training hyperparameters.}
\label{tab:train-hparams}
\small
\begin{tabular}{ll}
\toprule
\textbf{Parameter} & \textbf{Value} \\
\midrule
Optimizer          & AdamW ($\beta_1{=}0.9$, $\beta_2{=}0.95$) \\
Weight decay       & $0.01$ \\
Peak learning rate & $3 \times 10^{-4}$ \\
LR schedule        & Cosine decay with linear warmup \\
Steps              & 2000-6000 \\
Gradient clipping  & Max-norm $= 1.0$ \\
Batch size         & 16 \\
Precision          & Mixed FP16 (AMP) when GPU available \\
\bottomrule
\end{tabular}
\end{table}

\subsection{Evaluation Tasks}

\paragraph{Experiment~1: Tool-Calling Retrieval.}
Sequences encode a history of multi-agent tool calls in the format
\texttt{Agent:Tool$\to$Result}, followed by a query
\texttt{?Agent:Tool=}.  The model must retrieve the correct result
from the context.  We evaluate two difficulty tiers:
\textbf{EASY}~(10--30 calls, $T_{\max}{=}400$) and
\textbf{HARD}~(60--120 calls, $T_{\max}{=}1{,}500$).
Training runs for 2{,}500 steps with evaluation every 250 steps.

\paragraph{Experiment~2: Resource Economy.}
Multi-agent economy sequences where agents mine, trade, and build
with gold and wood.  The query requires predicting an agent's current
resource count.
\textbf{EASY}: 5--15 steps, 3~agents, no noise gap ($T_{\max}{=}400$).
\textbf{HARD}: 40--100 steps, 6~agents, noise gap of 200--400 tokens
($T_{\max}{=}3{,}000$).
Training runs for 2{,}500 steps with evaluation every 250 steps.

\paragraph{Experiment~3: System Prompt Amnesia.}
Two sub-tasks testing retrieval of system-level variables across
a long intervening gap:
\begin{itemize}
  \item \textbf{Sub-task A (CoT Retrieval)}: $n_{\text{vars}}{=}4$
        system variables followed by a noisy chain-of-thought scratchpad.
        EASY: gap 10--30 tokens ($T_{\max}{=}400$);
        HARD: gap 150--300 tokens ($T_{\max}{=}3{,}000$).
  \item \textbf{Sub-task B (Specific Recall)}: $n_{\text{vars}}{=}6$
        system variables followed by confusing distractors.
        EASY: gap 10--30 tokens ($T_{\max}{=}500$);
        HARD: gap 80--200 tokens ($T_{\max}{=}2{,}500$).
\end{itemize}
Training runs for 6{,}000 steps with evaluation every 250 steps.
All models in this experiment use $L{=}4$ layers and $H{=}4$ heads.

\subsection{Recurrence Implementation}

The selective state-space recurrence
$\mathbf{H}_t = \alpha_t \mathbf{H}_{t-1} + \beta_t \mathbf{U}_{t-1} + \gamma_t \mathbf{U}_t$ is computed via a chunked SSD algorithm (following Mamba-2) with chunk size~$C{=}64$.
The intra-chunk computation uses lower-triangular decay matrices via segment-sum, while boundary states are propagated across chunks via a short sequential scan over $T/C$ steps.
All scan-critical arithmetic (log-space cumulative sums, decay matrices, boundary propagation) is performed in FP32 to ensure numerical stability; only the final output is cast back to the training precision.

\begin{figure*}[!htbp]
  \centering
  \includegraphics[width=\linewidth]{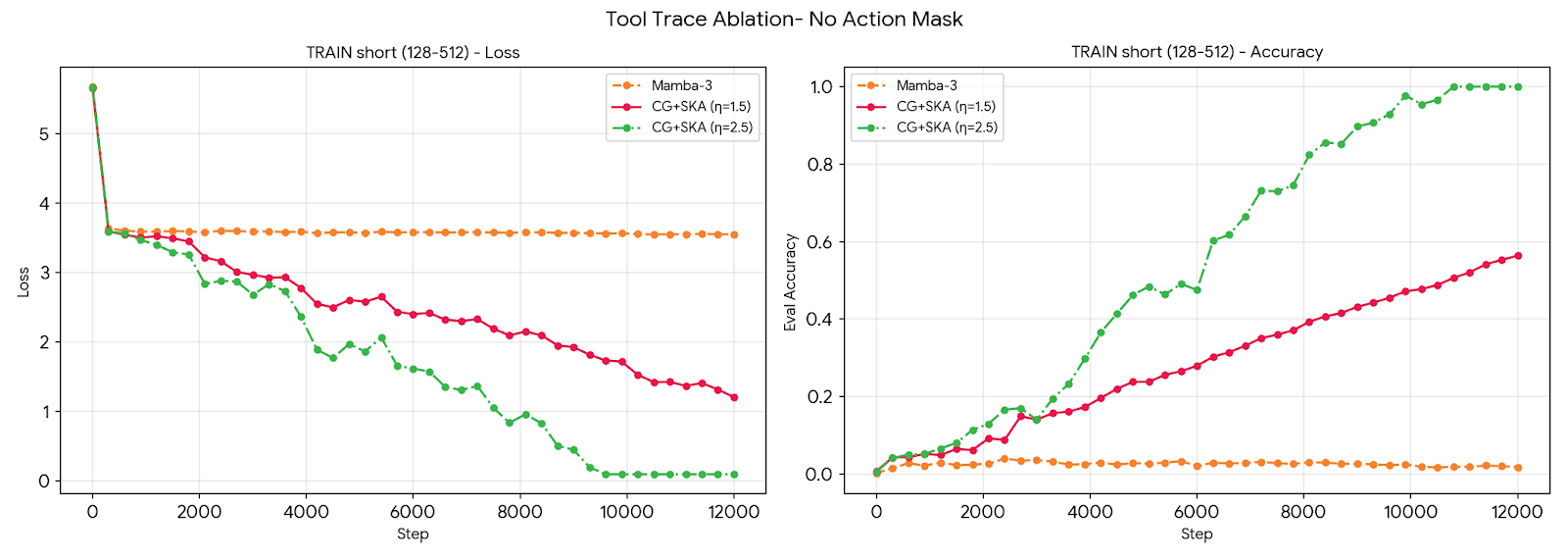}
  \caption{Training with action mask set to 1 for all inputs on the Tool Trace Delayed Commit tasks (short). We find that CG+SKA works fine without the action mask. The x-axis is for training steps and y-axis for log loss (left) and accuracy (right).}
  \label{fig:tool-no-action-mask}
\end{figure*}
\begin{figure*}[ht!]
  \centering
  \includegraphics[width=\linewidth]{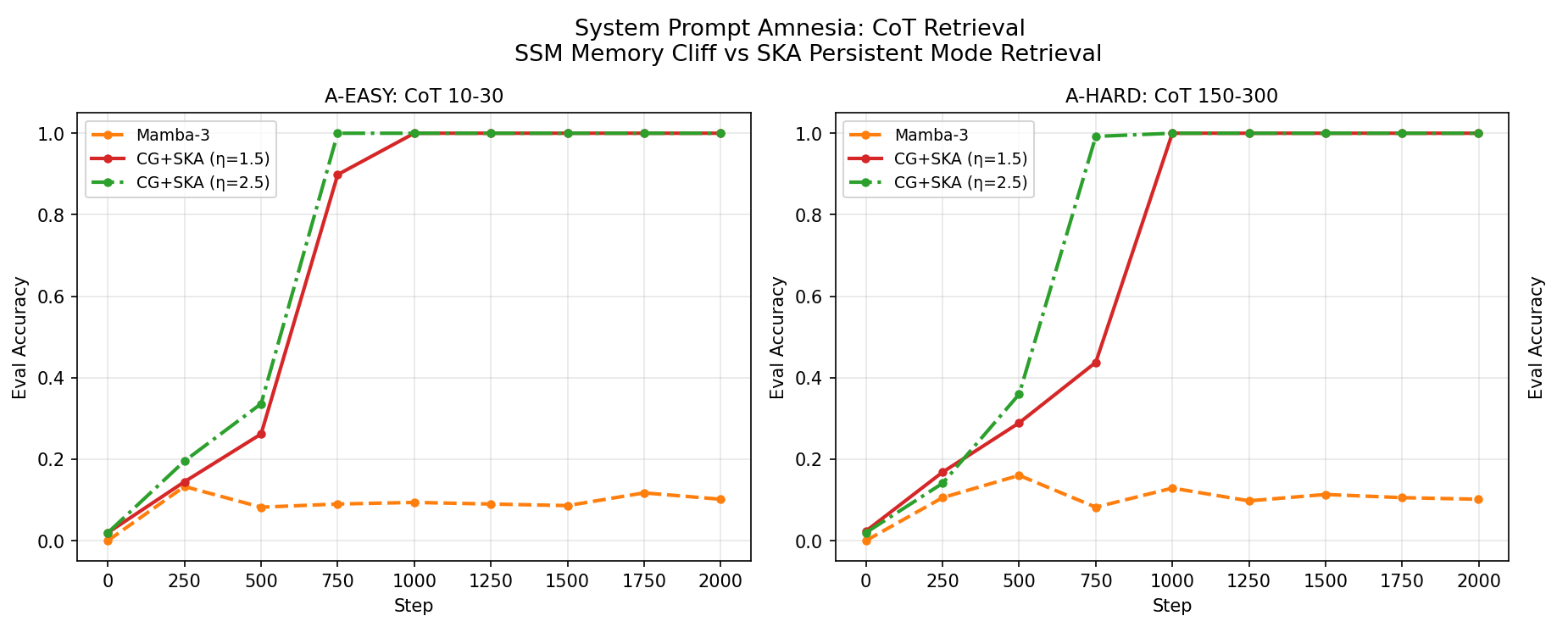}
  \caption{A Training run for 400k parameter Mamba-3 variants over the easy and hard System Prompt CoT Retrieval benchmark. The x-axis  represents training iterations and y-axis represents training iterations and y-curve evaluation accuracy. Mamba-3 is unable to learn on this task.}
  \label{fig:cot-experiment}
\end{figure*}
\section{Additional Results}
\subsection{Ablation: Removing the Action Mask}
Section~\ref{app:pipeline} introduced the action mask as a form of structured inductive bias that restricts SKA's covariance accumulation to fact-bearing tokens, preventing dilution from uninformative distractor spans.
A natural question is whether this mask is necessary for SKA to function, or whether the spectral filtering mechanism is robust enough to isolate retrievable content even when distractor tokens contribute to the covariance statistics.
Figure~\ref{fig:tool-no-action-mask} answers this by training CG+SKA on the short Tool Trace tier with the action mask set to $m_t=1$ for all input tokens, so that distractor tokens and fact-bearing tokens contribute equally to G, M, and C.
Both the log loss and accuracy curves closely track those of the action-masked configuration, and the model converges to comparable accuracy.

\subsection{Additional plots for System Prompt CoT Retrieval task}
\label{sec:cot-appendix}
Figure~\ref{fig:cot-experiment} shows training curves for all three model variants on the Easy and Hard tiers of the CoT Retrieval sub-task. On the Easy tier (left panel), both CG+SKA configurations rise sharply from near-chance accuracy to 100\% within the first 500 steps and remain saturated for the remainder of training. Mamba-3, despite training for the full 2,000 steps, fluctuates between 10--20\% accuracy with no discernible upward trend, confirming that the recurrent state cannot learn to preserve system-prompt bindings even across short scratchpad gaps.

The Hard tier (right panel) presents a more demanding setting, with 150--300 CoT steps interposed between the system prompt and the query. The CG+SKA variants exhibit a brief delay relative to the Easy tier---reaching saturation at roughly 750--1,000 steps rather than 500---but converge to the same perfect accuracy. This delay is expected: longer sequences produce larger covariance matrices with more distractor contribution, requiring slightly more training for the Koopman operator's spectral structure to stabilize. Mamba-3 again shows no learning signal, remaining at or below 16\% throughout. The flat Mamba-3 curves on both tiers, contrasted with the rapid SKA convergence, provide further evidence that the retrieval gain is architectural rather than a consequence of additional training signal or capacity.

\section{SKA, Ridge Retrieval, and the Comparison to Attention}
\label{app:attention_regression_clean}

\subsection{Setup}

We work with a single retrieval head. Let
\[
k_t \in \mathbb{R}^r,
\qquad
v_t \in \mathbb{R}^P,
\qquad
q \in \mathbb{R}^r
\]
denote the normalized key, value, and query features. Stack the keys and values into
\[
K =
\begin{bmatrix}
k_1^\top \\
\vdots \\
k_T^\top
\end{bmatrix}
\in \mathbb{R}^{T \times r},
\qquad
V =
\begin{bmatrix}
v_1^\top \\
\vdots \\
v_T^\top
\end{bmatrix}
\in \mathbb{R}^{T \times P}.
\]
Define the Gram matrix, ridge-regularized Gram matrix, value-key covariance, and lag-one covariance by
\begin{align}
G &\coloneqq K^\top K = \sum_{t=1}^T k_t k_t^\top \in \mathbb{R}^{r \times r}, \\
\tilde G &\coloneqq G + \varepsilon I_r, \qquad \varepsilon > 0, \\
C_v &\coloneqq V^\top K = \sum_{t=1}^T v_t k_t^\top \in \mathbb{R}^{P \times r}, \\
M &\coloneqq \sum_{t=1}^{T-1} k_{t+1} k_t^\top \in \mathbb{R}^{r \times r}.
\end{align}
Let
\[
\tilde G = L L^\top
\]
be the Cholesky factorization of \(\tilde G\), where \(L\) is lower triangular with positive diagonal.

We define three retrieval maps on the same feature space:
\begin{align}
\hat y_{\mathrm{lin}}(q) &\coloneqq C_v q, \\
\hat y_{\mathrm{ridge}}(q) &\coloneqq C_v \tilde G^{-1} q, \\
\hat y_{\mathrm{SKA}}(q) &\coloneqq \eta\, C_v \tilde G^{-1} \Phi_K q,
\end{align}
where \(\eta > 0\) is a learned scalar and
\begin{equation}
\Phi_K \coloneqq L \hat A_w^K L^{-1}.
\end{equation}
Here
\begin{equation}
A_w \coloneqq L^{-1} M L^{-\top}
\end{equation}
is the whitened lag-one operator, and \(\hat A_w\) denotes the operator actually used by the retrieval head after spectral normalization and optional variance restoration.

Throughout this section, vector norms are Euclidean norms and matrix norms are operator norms unless a subscript \(F\) is written.

\subsection{The Ridge Retrieval Problem}

The natural supervised retrieval objective is
\begin{equation}
\mathcal L(B)
\coloneqq
\sum_{t=1}^T \|v_t - B k_t\|_2^2 + \varepsilon \|B\|_F^2,
\qquad
B \in \mathbb{R}^{P \times r}.
\label{eq:clean_retrieval_objective}
\end{equation}

\begin{lemma}[Positive definiteness of the regularized Gram matrix]
\label{lem:clean_gram_pd}
For every \(\varepsilon > 0\), the matrix \(\tilde G = G + \varepsilon I_r\) is symmetric positive definite. In particular,
\[
\lambda_{\min}(\tilde G) \ge \varepsilon,
\]
so \(\tilde G\) is invertible and its Cholesky factorization exists.
\end{lemma}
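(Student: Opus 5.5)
The argument uses only the quadratic form of \(\tilde G\); it is the same as Lemma~\ref{lem:pd}, restated with the stacked-key notation \(G = K^\top K\).

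First, symmetry. Since \(G = K^\top K\), we have \(G^\top = K^\top K = G\). Adding \(\varepsilon I_r\) preserves symmetry, so \(\tilde G\) is symmetric. This means its eigenvalues are real, and the minimum eigenvalue equals the minimum of the Rayleigh quotient.

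Next, positive definiteness. For any \(u \in \mathbb{R}^r\),
\[
u^\top G u = \|Ku\|_2^2 = \sum_{t=1}^T (k_t^\top u)^2 \ge 0,
\]
so \(G\) is positive semidefinite. It follows that
\[
u^\top \tilde G u = \|Ku\|_2^2 + \varepsilon \|u\|_2^2 \ge \varepsilon \|u\|_2^2,
\]
which is strictly positive whenever \(u \neq 0\).

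The eigenvalue bound then comes from the Courant--Fischer characterization:
\[
\lambda_{\min}(\tilde G) = \min_{\|u\|=1} u^\top \tilde G u \ge \varepsilon.
\]
Equivalently, \(\tilde G\) and \(G\) share eigenvectors, so \(\lambda_{\min}(\tilde G) = \lambda_{\min}(G) + \varepsilon\) with \(\lambda_{\min}(G) \ge 0\).

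Finally, the consequences. All eigenvalues of \(\tilde G\) are positive, so \(\det \tilde G > 0\) and \(\tilde G\) is invertible. By the standard theorem that every symmetric positive definite matrix has a unique Cholesky factorization \(\tilde G = LL^\top\) with \(L\) lower triangular and having positive diagonal, the factorization exists. The only step that needs any care is citing this classical existence theorem; there is no real obstacle.
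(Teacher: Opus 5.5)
Your proof is correct and follows the paper's argument essentially step for step: symmetry from $G = K^\top K$, positive semidefiniteness from $u^\top G u = \sum_t (k_t^\top u)^2 \ge 0$, and then $u^\top \tilde G u \ge \varepsilon \|u\|_2^2$. Your explicit appeal to the Rayleigh quotient and to the classical Cholesky existence theorem just spells out the steps the paper says follow immediately.
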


\begin{proof}
The matrix \(G\) is symmetric because
\[
G^\top = (K^\top K)^\top = K^\top K = G.
\]
For any \(u \in \mathbb{R}^r\),
\[
u^\top G u
=
u^\top \left(\sum_{t=1}^T k_t k_t^\top\right) u
=
\sum_{t=1}^T u^\top k_t k_t^\top u
=
\sum_{t=1}^T (k_t^\top u)^2
\ge 0.
\]
Therefore \(G\) is positive semidefinite. Hence, for any nonzero \(u \in \mathbb{R}^r\),
\[
u^\top \tilde G u
=
u^\top G u + \varepsilon u^\top I_r u
=
u^\top G u + \varepsilon \|u\|_2^2
\ge
\varepsilon \|u\|_2^2
>
0.
\]
So \(\tilde G\) is positive definite. The lower bound on \(\lambda_{\min}(\tilde G)\) follows immediately.
\end{proof}

\begin{proposition}[Closed-form ridge minimizer]
\label{prop:clean_ridge_minimizer}
The unique minimizer of \(\mathcal L(B)\) is
\[
B^* = C_v \tilde G^{-1}.
\]
\end{proposition}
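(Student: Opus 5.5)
I will treat $\mathcal L$ as a quadratic function of $B$ and show that it is strictly convex. Its unique minimizer is then the unique zero of its gradient. Using the stacked notation, $\sum_t \|v_t - Bk_t\|_2^2 = \|V - KB^\top\|_F^2$. I expand this with $\|X\|_F^2 = \operatorname{tr}(X^\top X)$:
\[
\mathcal L(B) = \operatorname{tr}(V^\top V) - 2\operatorname{tr}(B K^\top V) + \operatorname{tr}(B K^\top K B^\top) + \varepsilon \operatorname{tr}(B B^\top).
\]
Substituting $K^\top V = C_v^\top$ and $K^\top K = G$, this becomes
\[
\mathcal L(B) = \operatorname{tr}(V^\top V) - 2\operatorname{tr}(C_v B^\top) + \operatorname{tr}(B\tilde G B^\top).
\]

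The first key step is to compute the gradient with the standard identities $\partial_B \operatorname{tr}(C_v B^\top) = C_v$ and $\partial_B \operatorname{tr}(B\tilde G B^\top) = 2B\tilde G$, the latter using that $\tilde G$ is symmetric. This gives $\nabla \mathcal L(B) = 2(B\tilde G - C_v)$. By Lemma~\ref{lem:clean_gram_pd}, $\tilde G$ is invertible, so the normal equation $B\tilde G = C_v$ has exactly one solution, $B^* = C_v\tilde G^{-1}$.

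The step that needs care is showing that this stationary point is the unique global minimizer, not merely a critical point. I will prove this directly by completing the square rather than appealing to Hessians. For any $\Delta \in \mathbb{R}^{P\times r}$, expanding with $B^*\tilde G = C_v$ makes the cross terms cancel:
\[
\mathcal L(B^* + \Delta) = \mathcal L(B^*) + \operatorname{tr}(\Delta\tilde G\Delta^\top).
\]
Writing $\delta_i^\top$ for the rows of $\Delta$, the extra term equals $\sum_i \delta_i^\top \tilde G \delta_i$. By Lemma~\ref{lem:clean_gram_pd} this is at least $\varepsilon\|\Delta\|_F^2$, which is strictly positive whenever $\Delta \neq 0$. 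Hence $\mathcal L(B) > \mathcal L(B^*)$ for every $B \neq B^*$, which establishes both existence and uniqueness.

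The main obstacle is bookkeeping with transposes, namely getting $C_v = V^\top K$ on the correct side so that the solution is $C_v\tilde G^{-1}$ rather than $\tilde G^{-1}C_v^\top$. A row-wise check avoids this. Row $i$ of the objective is an ordinary vector ridge problem, $\|V_{:,i} - K b_i\|^2 + \varepsilon\|b_i\|^2$, whose solution is $b_i = \tilde G^{-1}K^\top V_{:,i}$. Stacking these solutions as rows gives $B^* = V^\top K\tilde G^{-1}$, which agrees with $C_v\tilde G^{-1}$ and confirms the result.
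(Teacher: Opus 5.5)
Your proof is correct and follows the paper's route: the same trace expansion, the gradient $2(B\tilde G - C_v)$, and the normal equation $B\tilde G = C_v$ solved using invertibility of $\tilde G$. The one difference is how uniqueness is settled: the paper appeals in one line to strict convexity from $\tilde G \succ 0$, whereas you complete the square to get $\mathcal L(B^*+\Delta) = \mathcal L(B^*) + \operatorname{tr}(\Delta\tilde G\Delta^\top)$, which is exactly the paper's subsequent excess-risk identity (Proposition~\ref{prop:clean_excess_risk}) and makes global optimality explicit.
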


\begin{proof}
We expand the objective step by step. For each \(t\),
\[
\|v_t - B k_t\|_2^2
=
(v_t - B k_t)^\top (v_t - B k_t).
\]
Expanding the product gives
\[
(v_t - B k_t)^\top (v_t - B k_t)
=
v_t^\top v_t - 2 v_t^\top B k_t + k_t^\top B^\top B k_t.
\]
Summing over \(t\),
\begin{align}
\sum_{t=1}^T \|v_t - B k_t\|_2^2
&=
\sum_{t=1}^T v_t^\top v_t
-
2 \sum_{t=1}^T v_t^\top B k_t
+
\sum_{t=1}^T k_t^\top B^\top B k_t.
\end{align}
We rewrite each term in trace form.

For the first term,
\[
\sum_{t=1}^T v_t^\top v_t = \operatorname{tr}(V^\top V).
\]

For the second term,
\begin{align}
\sum_{t=1}^T v_t^\top B k_t
&=
\sum_{t=1}^T \operatorname{tr}(v_t^\top B k_t) \\
&=
\sum_{t=1}^T \operatorname{tr}(B k_t v_t^\top) \\
&=
\operatorname{tr}\left(B \sum_{t=1}^T k_t v_t^\top\right).
\end{align}
Since
\[
\sum_{t=1}^T k_t v_t^\top = (C_v)^\top,
\]
this becomes
\[
\sum_{t=1}^T v_t^\top B k_t = \operatorname{tr}(B C_v^\top).
\]

For the third term,
\begin{align}
\sum_{t=1}^T k_t^\top B^\top B k_t
&=
\sum_{t=1}^T \operatorname{tr}(k_t^\top B^\top B k_t) \\
&=
\sum_{t=1}^T \operatorname{tr}(B k_t k_t^\top B^\top) \\
&=
\operatorname{tr}\left(B \left(\sum_{t=1}^T k_t k_t^\top\right) B^\top\right) \\
&=
\operatorname{tr}(B G B^\top).
\end{align}

The regularization term is
\[
\varepsilon \|B\|_F^2 = \varepsilon \operatorname{tr}(B B^\top).
\]
Therefore
\begin{align}
\mathcal L(B)
&=
\operatorname{tr}(V^\top V)
-
2 \operatorname{tr}(B C_v^\top)
+
\operatorname{tr}(B G B^\top)
+
\varepsilon \operatorname{tr}(B B^\top) \\
&=
\operatorname{tr}(V^\top V)
-
2 \operatorname{tr}(B C_v^\top)
+
\operatorname{tr}\!\bigl(B (G + \varepsilon I_r) B^\top\bigr) \\
&=
\operatorname{tr}(V^\top V)
-
2 \operatorname{tr}(B C_v^\top)
+
\operatorname{tr}(B \tilde G B^\top).
\end{align}

Now differentiate with respect to \(B\). Since \(\tilde G\) is symmetric,
\[
\frac{\partial}{\partial B} \operatorname{tr}(B \tilde G B^\top)
=
2 B \tilde G.
\]
Also,
\[
\frac{\partial}{\partial B} \operatorname{tr}(B C_v^\top)
=
C_v.
\]
Hence
\[
\nabla_B \mathcal L(B)
=
-2 C_v + 2 B \tilde G.
\]
Setting the gradient to zero gives
\[
-2 C_v + 2 B \tilde G = 0.
\]
Dividing by \(2\),
\[
B \tilde G = C_v.
\]
Right multiplying by \(\tilde G^{-1}\) yields
\[
B = C_v \tilde G^{-1}.
\]
This critical point is unique because \(\tilde G \succ 0\), so the objective is strictly convex in \(B\).
\end{proof}

\begin{proposition}[Exact excess-risk identity]
\label{prop:clean_excess_risk}
For every \(B \in \mathbb{R}^{P \times r}\),
\[
\mathcal L(B) - \mathcal L(B^*)
=
\|(B - B^*) \tilde G^{1/2}\|_F^2.
\]
In particular, \(B^*\) is the unique minimizer of \(\mathcal L\).
\end{proposition}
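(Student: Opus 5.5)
The plan is to use the trace form of the objective already derived in the proof of Proposition~\ref{prop:clean_ridge_minimizer}, namely
\[
\mathcal L(B) = \operatorname{tr}(V^\top V) - 2\operatorname{tr}(B C_v^\top) + \operatorname{tr}(B \tilde G B^\top),
\]
and complete the square around $B^* = C_v \tilde G^{-1}$. The first step is to write $C_v = B^* \tilde G$; this is legitimate because $\tilde G$ is invertible by Lemma~\ref{lem:clean_gram_pd}. Substituting turns the linear term into $-2\operatorname{tr}(B \tilde G B^{*\top})$, where we use the symmetry of $\tilde G$.

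Next we expand $\operatorname{tr}\bigl((B-B^*)\tilde G (B-B^*)^\top\bigr)$. This gives
\[
\operatorname{tr}(B\tilde G B^\top) - 2\operatorname{tr}(B\tilde G B^{*\top}) + \operatorname{tr}(B^*\tilde G B^{*\top}),
\]
where the two cross terms are combined via $\operatorname{tr}(X^\top) = \operatorname{tr}(X)$ and $\tilde G^\top = \tilde G$. Comparing with $\mathcal L(B)$ yields
\[
\mathcal L(B) = \operatorname{tr}(V^\top V) - \operatorname{tr}(B^*\tilde G B^{*\top}) + \operatorname{tr}\bigl((B-B^*)\tilde G(B-B^*)^\top\bigr).
\]
The first two terms do not depend on $B$, and they equal $\mathcal L(B^*)$ by setting $B = B^*$. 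Subtracting therefore gives the excess risk as the last trace.

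To identify that trace with the Frobenius norm, take the symmetric positive definite square root $\tilde G^{1/2}$, which exists by Lemma~\ref{lem:clean_gram_pd}. Then $\tilde G = \tilde G^{1/2}\tilde G^{1/2}$ with $\tilde G^{1/2}$ symmetric, so
\[
\operatorname{tr}(D\tilde G D^\top) = \operatorname{tr}\bigl((D\tilde G^{1/2})(D\tilde G^{1/2})^\top\bigr) = \|D\tilde G^{1/2}\|_F^2
\]
with $D = B - B^*$. This proves the identity.

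For uniqueness, the right-hand side is nonnegative and vanishes only when $D\tilde G^{1/2} = 0$. Since $\tilde G^{1/2}$ is invertible, that forces $D = 0$. In fact $\|D\tilde G^{1/2}\|_F^2 \ge \lambda_{\min}(\tilde G)\|D\|_F^2 \ge \varepsilon\|D\|_F^2$, which gives strict minimality.

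There is no real obstacle here. The only care needed is in bookkeeping the cross terms: both must be reduced to the same trace using the symmetry of $\tilde G$ and transpose-invariance of the trace.
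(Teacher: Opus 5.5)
Your proposal is correct and follows the paper's proof essentially step for step: you substitute $C_v^\top = \tilde G B^{*\top}$ into the trace form of $\mathcal L$, expand $\operatorname{tr}\bigl((B-B^*)\tilde G(B-B^*)^\top\bigr)$ using transpose-invariance of the trace and symmetry of $\tilde G$, and pass to the Frobenius norm via the symmetric square root. Your explicit uniqueness argument, which uses invertibility of $\tilde G^{1/2}$ and the lower bound $\varepsilon\|B-B^*\|_F^2$, is a small addition that the paper's proof leaves implicit.
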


\begin{proof}
From the previous proof,
\[
\mathcal L(B)
=
\operatorname{tr}(V^\top V)
-
2 \operatorname{tr}(B C_v^\top)
+
\operatorname{tr}(B \tilde G B^\top).
\]
Because \(B^* \tilde G = C_v\), we have
\[
C_v^\top = \tilde G B^{*\top}.
\]
Substituting this into the middle term,
\[
\operatorname{tr}(B C_v^\top)
=
\operatorname{tr}(B \tilde G B^{*\top}).
\]
Therefore
\[
\mathcal L(B)
=
\operatorname{tr}(V^\top V)
-
2 \operatorname{tr}(B \tilde G B^{*\top})
+
\operatorname{tr}(B \tilde G B^\top).
\]
Likewise,
\[
\mathcal L(B^*)
=
\operatorname{tr}(V^\top V)
-
2 \operatorname{tr}(B^* \tilde G B^{*\top})
+
\operatorname{tr}(B^* \tilde G B^{*\top}),
\]
so
\[
\mathcal L(B^*)
=
\operatorname{tr}(V^\top V)
-
\operatorname{tr}(B^* \tilde G B^{*\top}).
\]
Subtracting,
\begin{align}
\mathcal L(B) - \mathcal L(B^*)
&=
\operatorname{tr}(B \tilde G B^\top)
-
2 \operatorname{tr}(B \tilde G B^{*\top})
+
\operatorname{tr}(B^* \tilde G B^{*\top}).
\end{align}
Now expand
\[
(B - B^*) \tilde G (B - B^*)^\top
=
B \tilde G B^\top
-
B \tilde G B^{*\top}
-
B^* \tilde G B^\top
+
B^* \tilde G B^{*\top}.
\]
Taking traces and using \(\operatorname{tr}(X) = \operatorname{tr}(X^\top)\),
\[
\operatorname{tr}(B^* \tilde G B^\top)
=
\operatorname{tr}\!\bigl((B^* \tilde G B^\top)^\top\bigr)
=
\operatorname{tr}(B \tilde G B^{*\top}),
\]
since \(\tilde G\) is symmetric. Thus
\[
\operatorname{tr}\!\bigl((B - B^*) \tilde G (B - B^*)^\top\bigr)
=
\operatorname{tr}(B \tilde G B^\top)
-
2 \operatorname{tr}(B \tilde G B^{*\top})
+
\operatorname{tr}(B^* \tilde G B^{*\top}).
\]
Comparing with the expression above,
\[
\mathcal L(B) - \mathcal L(B^*)
=
\operatorname{tr}\!\bigl((B - B^*) \tilde G (B - B^*)^\top\bigr).
\]
Since \(\tilde G \succ 0\), it has a unique positive semidefinite square root \(\tilde G^{1/2}\), and
\[
\operatorname{tr}\!\bigl((B - B^*) \tilde G (B - B^*)^\top\bigr)
=
\|(B - B^*) \tilde G^{1/2}\|_F^2.
\]
This proves the identity.
\end{proof}

\subsection{Whitening}

\begin{proposition}[Whitened and unwhitened lag-one operators are similar]
\label{prop:clean_similarity}
Define
\[
A_\varepsilon \coloneqq M \tilde G^{-1},
\qquad
A_w \coloneqq L^{-1} M L^{-\top}.
\]
Then
\[
A_w = L^{-1} A_\varepsilon L.
\]
Consequently \(A_w\) and \(A_\varepsilon\) have the same eigenvalues.
\end{proposition}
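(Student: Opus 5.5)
The plan is a direct similarity computation, essentially the argument of Proposition~\ref{prop:whitening} restated in the notation of this section. First, Lemma~\ref{lem:clean_gram_pd} gives $\tilde G \succ 0$. Hence the Cholesky factor $L$ exists, is lower triangular with positive diagonal, and is invertible. This makes $L^{-1}$, $L^{-\top} = (L^{\top})^{-1} = (L^{-1})^{\top}$ and $\tilde G^{-1}$ all well defined.

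Second, I will invert the factorization:
\[
\tilde G^{-1} = (L L^\top)^{-1} = L^{-\top} L^{-1}.
\]
This gives
\[
A_\varepsilon = M \tilde G^{-1} = M L^{-\top} L^{-1}.
\]
Conjugating by $L$ then yields
\[
L^{-1} A_\varepsilon L = L^{-1} M L^{-\top} L^{-1} L = L^{-1} M L^{-\top} = A_w,
\]
which is the claimed identity.

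Third, the spectral statement follows because similar matrices share a characteristic polynomial:
\[
\det(\lambda I - A_w) = \det\!\bigl(L^{-1}(\lambda I - A_\varepsilon)L\bigr) = \det(\lambda I - A_\varepsilon).
\]
So the eigenvalues agree, including algebraic multiplicities. Eigenvectors correspond via $x \mapsto L^{-1}x$: if $A_\varepsilon x = \lambda x$, then $A_w(L^{-1}x) = \lambda L^{-1}x$. I may add this as a remark, since it is what justifies reading persistence off $A_w$.

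There is no substantive obstacle here. The only care needed is bookkeeping the order of factors: the inverse of $LL^\top$ is $L^{-\top}L^{-1}$, not $L^{-1}L^{-\top}$. It is also worth noting, as in the remark after Proposition~\ref{prop:whitening}, that the similarity is with $M\tilde G^{-1}$ and not with $\tilde G^{-1}M$.
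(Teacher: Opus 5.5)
Your proposal is correct and matches the paper's proof. Write $\tilde G^{-1} = L^{-\top}L^{-1}$, so $A_\varepsilon = ML^{-\top}L^{-1}$, then conjugate by $L$ to get $A_w$; the characteristic-polynomial and eigenvector remarks are harmless additions.

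One caution about your closing aside, which echoes a remark in the paper that is actually wrong. The reversed product is also similar to $A_\varepsilon$, because $\tilde G^{-1}M = \tilde G^{-1}(M\tilde G^{-1})\tilde G$, and likewise $A_w = L^{\top}(\tilde G^{-1}M)L^{-\top}$. So $\tilde G^{-1}M$ has the same spectrum; only the particular conjugation by $L$ fails to turn it into $A_w$.
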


\begin{proof}
Since \(\tilde G = L L^\top\),
\[
\tilde G^{-1} = (L L^\top)^{-1} = L^{-\top} L^{-1}.
\]
Therefore
\[
A_\varepsilon
=
M \tilde G^{-1}
=
M L^{-\top} L^{-1}.
\]
Left multiplying by \(L^{-1}\) and right multiplying by \(L\) gives
\begin{align}
L^{-1} A_\varepsilon L
&=
L^{-1} (M L^{-\top} L^{-1}) L \\
&=
L^{-1} M L^{-\top} \\
&=
A_w.
\end{align}
Hence \(A_w\) is similar to \(A_\varepsilon\), so they have the same eigenvalues.
\end{proof}

\subsection{Linear Attention as an Incomplete Solve}

\begin{proposition}[Linear attention is one unpreconditioned gradient step]
\label{prop:clean_lin_one_step}
The linear attention map
\[
\hat y_{\mathrm{lin}}(q) = C_v q
\]
is exactly the prediction produced by one gradient step on \(\mathcal L(B)\) from the initialization \(B_0 = 0\) with step size \(\alpha = \tfrac{1}{2}\).
\end{proposition}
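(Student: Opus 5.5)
The plan is to reuse the gradient formula already derived in the proof of Proposition~\ref{prop:clean_ridge_minimizer}. There, writing $\mathcal L(B) = \operatorname{tr}(V^\top V) - 2\operatorname{tr}(B C_v^\top) + \operatorname{tr}(B\tilde G B^\top)$, we obtained
\[
\nabla_B \mathcal L(B) = -2 C_v + 2 B \tilde G .
\]
I will take this expression as given, including the symmetry of $\tilde G$ from Lemma~\ref{lem:clean_gram_pd}, which is what justifies the factor $2B\tilde G$.

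Next I evaluate the gradient at the initialization $B_0 = 0$. The quadratic term vanishes, so $\nabla_B \mathcal L(0) = -2 C_v$. This is the only point where the regularizer could enter, and it does not: the ridge contribution $\varepsilon B$ is absorbed into $B\tilde G$, so it also vanishes at $B_0=0$. Consequently the statement holds for every $\varepsilon>0$, and the Gram inverse never appears.

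I then perform one gradient step with step size $\alpha$:
\[
B_1 = B_0 - \alpha \nabla_B \mathcal L(B_0) = 2\alpha C_v .
\]
With $\alpha = \tfrac12$ this gives $B_1 = C_v$. The prediction of the updated map at the query is therefore $B_1 q = C_v q = \hat y_{\mathrm{lin}}(q)$, which is exactly the claim.

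There is no substantive obstacle; the argument is a direct substitution. The one thing to check carefully is the gradient convention, namely that $\partial \operatorname{tr}(B C_v^\top)/\partial B = C_v$ rather than its transpose. I will verify this by checking shapes: both $B$ and $C_v$ lie in $\mathbb{R}^{P\times r}$. The convention matters because it fixes the constant in front of the gradient, and hence the step size $\alpha = \tfrac12$.

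Finally, I will add a remark contrasting the two solutions. Since $B^* = C_v\tilde G^{-1}$ by Proposition~\ref{prop:clean_ridge_minimizer}, the one-step iterate differs from the minimizer by the missing factor $\tilde G^{-1}$. Proposition~\ref{prop:clean_excess_risk} then quantifies the resulting suboptimality as $\|C_v(I - \tilde G^{-1})\tilde G^{1/2}\|_F^2$.
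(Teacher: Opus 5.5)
Your proposal is correct and follows the paper's proof essentially line for line: evaluate $\nabla_B \mathcal L(B) = -2C_v + 2B\tilde G$ from Proposition~\ref{prop:clean_ridge_minimizer} at $B_0 = 0$, take a step of size $\tfrac12$ to obtain $B_1 = C_v$, and apply it to $q$. Your closing remark is also correct but goes beyond what the statement requires: Proposition~\ref{prop:clean_excess_risk} with $B = C_v$ gives excess loss $\|C_v(I_r - \tilde G^{-1})\tilde G^{1/2}\|_F^2$.
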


\begin{proof}
From Proposition~\ref{prop:clean_ridge_minimizer},
\[
\nabla_B \mathcal L(B) = -2 C_v + 2 B \tilde G.
\]
At \(B_0 = 0\),
\[
\nabla_B \mathcal L(B_0) = -2 C_v.
\]
A gradient step of size \(\alpha = \tfrac{1}{2}\) gives
\begin{align}
B_1
&=
B_0 - \alpha \nabla_B \mathcal L(B_0) \\
&=
0 - \frac{1}{2}(-2 C_v) \\
&=
C_v.
\end{align}
Applying this linear map to the query,
\[
B_1 q = C_v q = \hat y_{\mathrm{lin}}(q).
\]
\end{proof}

\begin{theorem}[Linear attention is close to ridge exactly when the Gram is close to isotropic]
\label{thm:clean_lin_vs_ridge}
Let
\[
\lambda_{\min} \coloneqq \lambda_{\min}(\tilde G),
\qquad
\lambda_{\max} \coloneqq \lambda_{\max}(\tilde G),
\qquad
\kappa(\tilde G) \coloneqq \frac{\lambda_{\max}}{\lambda_{\min}}.
\]
Then
\[
\min_{a > 0}
\|\hat y_{\mathrm{lin}}(q) - a \hat y_{\mathrm{ridge}}(q)\|_2
\le
\|C_v\|\,
\frac{\kappa(\tilde G)-1}{\kappa(\tilde G)+1}\,
\|q\|_2.
\]
The minimizing scalar is
\[
a^*
=
\frac{2}{\lambda_{\max}^{-1} + \lambda_{\min}^{-1}}
=
\frac{2 \lambda_{\min} \lambda_{\max}}{\lambda_{\min} + \lambda_{\max}}.
\]
In particular, if \(\tilde G = c I_r\), then choosing \(a = c\) yields exact equality
\[
\hat y_{\mathrm{lin}}(q) = c \hat y_{\mathrm{ridge}}(q).
\]
\end{theorem}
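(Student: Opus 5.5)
The plan is to bound the discrepancy by a single operator norm and then optimize a scalar polynomial over the spectrum of $\tilde G$. For any $a>0$,
\[
\hat y_{\mathrm{lin}}(q) - a\,\hat y_{\mathrm{ridge}}(q) = C_v q - a\,C_v \tilde G^{-1} q = C_v\bigl(I_r - a\tilde G^{-1}\bigr) q .
\]
Submultiplicativity then gives
\[
\|\hat y_{\mathrm{lin}}(q) - a\,\hat y_{\mathrm{ridge}}(q)\|_2 \le \|C_v\|\,\|I_r - a\tilde G^{-1}\|\,\|q\|_2 .
\]
Since the left side of the theorem is a minimum over $a$, it suffices to exhibit one $a$ for which $\|I_r - a\tilde G^{-1}\| \le (\kappa-1)/(\kappa+1)$.

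By Lemma~\ref{lem:clean_gram_pd}, $\tilde G$ is symmetric positive definite. Diagonalize it orthogonally as $\tilde G = U\Lambda U^\top$ with eigenvalues $\lambda_i\in[\lambda_{\min},\lambda_{\max}]$. Then $I_r - a\tilde G^{-1} = U(I - a\Lambda^{-1})U^\top$ is symmetric. Its operator norm is therefore its spectral radius,
\[
\max_i |1 - a\mu_i|, \qquad \mu_i = \lambda_i^{-1}\in[\lambda_{\max}^{-1},\lambda_{\min}^{-1}] .
\]
This is at most $\max_{\mu\in[\lambda_{\max}^{-1},\lambda_{\min}^{-1}]} |1-a\mu|$. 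Because $\mu\mapsto 1-a\mu$ is affine, the maximum is attained at an endpoint, and the minimax over $a$ is the classical equioscillation choice. Set $1 - a\lambda_{\max}^{-1} = -(1 - a\lambda_{\min}^{-1})$. This gives $a^* = 2/(\lambda_{\max}^{-1}+\lambda_{\min}^{-1}) = 2\lambda_{\min}\lambda_{\max}/(\lambda_{\min}+\lambda_{\max})$. Substituting, the common value is
\[
\frac{\lambda_{\min}^{-1}-\lambda_{\max}^{-1}}{\lambda_{\min}^{-1}+\lambda_{\max}^{-1}} = \frac{\lambda_{\max}-\lambda_{\min}}{\lambda_{\max}+\lambda_{\min}} = \frac{\kappa-1}{\kappa+1},
\]
which is the claimed bound. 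I will also state briefly why $a^*$ minimizes: any other $a$ makes one endpoint value strictly larger.

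For the isotropic case, $\tilde G = cI_r$ gives $\tilde G^{-1} = c^{-1}I_r$. Taking $a=c$ then makes $I_r - a\tilde G^{-1} = 0$, so the difference vanishes identically. Consistently, $\kappa=1$ and $a^*=c$.

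The main subtlety is interpretive rather than computational. The true minimizer over $a$ of the left-hand side depends on $q$ and $C_v$; in fact it is a one-dimensional least-squares projection. What $a^*$ minimizes is the uniform operator-norm bound $\|I_r - a\tilde G^{-1}\|$. I would phrase the proof accordingly: $a^*$ minimizes the worst-case (over $q$, $C_v$) factor, and plugging it in bounds the true minimum from above. The only genuine care needed is to use symmetry of $\tilde G$, so that the operator norm equals the maximal eigenvalue modulus.
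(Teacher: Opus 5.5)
Your proof is correct and follows the paper's argument essentially step for step: you factor the difference as $C_v(I_r - a\tilde G^{-1})q$, diagonalize $\tilde G$ orthogonally, reduce to an endpoint minimax over the spectrum, and equalize the endpoint errors to obtain $a^*$ and $(\kappa-1)/(\kappa+1)$. The only difference is cosmetic: the paper uses monotonicity of $\lambda \mapsto 1 - a/\lambda$ where you use affinity in $\mu = 1/\lambda$. Your caveat is also accurate, since the paper's proof, like yours, only shows that $a^*$ minimizes the operator-norm factor $\|I_r - a\tilde G^{-1}\|$ rather than the query-dependent left-hand side, so the phrase ``minimizing scalar'' in the statement should be read in that sense.
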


\begin{proof}
We begin from the definitions
\[
\hat y_{\mathrm{lin}}(q) = C_v q,
\qquad
\hat y_{\mathrm{ridge}}(q) = C_v \tilde G^{-1} q.
\]
For any scalar \(a > 0\),
\begin{align}
\hat y_{\mathrm{lin}}(q) - a \hat y_{\mathrm{ridge}}(q)
&=
C_v q - a C_v \tilde G^{-1} q \\
&=
C_v (I_r - a \tilde G^{-1}) q.
\end{align}
Taking norms,
\[
\|\hat y_{\mathrm{lin}}(q) - a \hat y_{\mathrm{ridge}}(q)\|_2
\le
\|C_v\| \, \|I_r - a \tilde G^{-1}\| \, \|q\|_2.
\]
So it remains to minimize \(\|I_r - a \tilde G^{-1}\|\) over \(a > 0\).

Since \(\tilde G\) is symmetric positive definite, it is orthogonally diagonalizable:
\[
\tilde G = U \Lambda U^\top,
\qquad
\Lambda = \operatorname{diag}(\lambda_1,\dots,\lambda_r),
\qquad
\lambda_i > 0.
\]
Hence
\[
\tilde G^{-1} = U \Lambda^{-1} U^\top,
\]
and therefore
\[
I_r - a \tilde G^{-1}
=
U (I_r - a \Lambda^{-1}) U^\top.
\]
Because orthogonal similarity preserves operator norm,
\[
\|I_r - a \tilde G^{-1}\|
=
\|I_r - a \Lambda^{-1}\|
=
\max_{1 \le i \le r} \left|1 - \frac{a}{\lambda_i}\right|.
\]
The function \(\lambda \mapsto 1 - a/\lambda\) is monotone increasing in \(\lambda > 0\). Hence the maximum absolute value over the interval \([\lambda_{\min}, \lambda_{\max}]\) is attained at one of the endpoints:
\[
\|I_r - a \tilde G^{-1}\|
=
\max\left\{
\left|1 - \frac{a}{\lambda_{\min}}\right|,
\left|1 - \frac{a}{\lambda_{\max}}\right|
\right\}.
\]
The minimizing value \(a^*\) is obtained by equalizing the endpoint errors with opposite signs:
\[
1 - \frac{a}{\lambda_{\max}}
=
-\left(1 - \frac{a}{\lambda_{\min}}\right).
\]
Solving this equation,
\begin{align}
1 - \frac{a}{\lambda_{\max}}
&=
-1 + \frac{a}{\lambda_{\min}}, \\
2
&=
a\left(\frac{1}{\lambda_{\min}} + \frac{1}{\lambda_{\max}}\right), \\
a^*
&=
\frac{2}{\lambda_{\min}^{-1} + \lambda_{\max}^{-1}}
=
\frac{2 \lambda_{\min} \lambda_{\max}}{\lambda_{\min} + \lambda_{\max}}.
\end{align}
Substituting \(a^*\),
\begin{align}
\left|1 - \frac{a^*}{\lambda_{\max}}\right|
&=
1 - \frac{2 \lambda_{\min}}{\lambda_{\min} + \lambda_{\max}}
=
\frac{\lambda_{\max} - \lambda_{\min}}{\lambda_{\max} + \lambda_{\min}}, \\
\left|1 - \frac{a^*}{\lambda_{\min}}\right|
&=
\frac{2 \lambda_{\max}}{\lambda_{\min} + \lambda_{\max}} - 1
=
\frac{\lambda_{\max} - \lambda_{\min}}{\lambda_{\max} + \lambda_{\min}}.
\end{align}
So
\[
\min_{a > 0} \|I_r - a \tilde G^{-1}\|
=
\frac{\lambda_{\max} - \lambda_{\min}}{\lambda_{\max} + \lambda_{\min}}
=
\frac{\kappa(\tilde G)-1}{\kappa(\tilde G)+1}.
\]
Combining with the earlier norm bound proves the theorem.

If \(\tilde G = c I_r\), then \(\tilde G^{-1} = c^{-1} I_r\), so
\[
\hat y_{\mathrm{ridge}}(q) = C_v c^{-1} q.
\]
Thus
\[
c \hat y_{\mathrm{ridge}}(q) = C_v q = \hat y_{\mathrm{lin}}(q).
\]
\end{proof}

\subsection{SKA as an Exact Ridge Solve at \(K=0\)}

\begin{proposition}[Two triangular solves produce the ridge value map]
\label{prop:clean_two_solves}
Let \(\tilde G = L L^\top\). Define \(Y\) and \(Z\) by the two triangular systems
\begin{align}
L Y &= C_v^\top, \\
L^\top Z &= Y.
\end{align}
Then
\[
B_v \coloneqq Z^\top = C_v \tilde G^{-1} = B^*.
\]
\end{proposition}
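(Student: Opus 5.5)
The plan is to solve the two triangular systems symbolically and identify the result with the closed-form minimizer from Proposition~\ref{prop:clean_ridge_minimizer}. First, Lemma~\ref{lem:clean_gram_pd} guarantees that $\tilde G$ is symmetric positive definite. Hence the Cholesky factor $L$ exists, is lower triangular, and has a strictly positive diagonal. It follows that $L$ and $L^\top$ are invertible, since their determinant is the product of the positive diagonal entries. The systems $LY = C_v^\top$ and $L^\top Z = Y$ therefore have unique solutions, namely $Y = L^{-1} C_v^\top$ and $Z = L^{-\top} Y$. Here $Y, Z \in \mathbb{R}^{r\times P}$; each column is obtained by forward or back substitution, respectively.

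Next, substitute the first solution into the second to get
\[
Z = L^{-\top} L^{-1} C_v^\top = (L L^\top)^{-1} C_v^\top = \tilde G^{-1} C_v^\top .
\]
This uses the identity $(LL^\top)^{-1} = L^{-\top}L^{-1}$, which already appears in the proof of Proposition~\ref{prop:clean_similarity}. Transposing then gives
\[
Z^\top = C_v \tilde G^{-\top} = C_v \tilde G^{-1},
\]
because $\tilde G$ is symmetric and so $\tilde G^{-1}$ is symmetric as well. Finally, Proposition~\ref{prop:clean_ridge_minimizer} identifies $C_v\tilde G^{-1}$ as the unique minimizer $B^*$ of $\mathcal L$, which completes the chain $B_v = Z^\top = C_v\tilde G^{-1} = B^*$.

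There is no real obstacle in this argument. The only points that need care are bookkeeping ones. The first is invertibility of the triangular factors, which follows from the positive diagonal guaranteed by Lemma~\ref{lem:clean_gram_pd}. The second is the transpose step, which relies on the symmetry of $\tilde G^{-1}$. It also helps to state explicitly that the right-hand side $C_v^\top$ is a matrix, so each system is really $P$ independent vector solves. This is consistent with step~7 of the pipeline in \S\ref{app:pipeline}, which solves $\tilde G X = C^\top$ there.
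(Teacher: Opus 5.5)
Your proposal is correct and follows the paper's proof step for step: solve $Y = L^{-1}C_v^\top$, substitute to get $Z = L^{-\top}L^{-1}C_v^\top = \tilde G^{-1}C_v^\top$, transpose, and identify the result with $B^*$. You also state two details the paper leaves implicit, namely that $L$ is invertible because its diagonal is positive and that the transpose step uses the symmetry of $\tilde G^{-1}$.
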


\begin{proof}
From the first triangular system,
\[
L Y = C_v^\top.
\]
Since \(L\) is invertible,
\[
Y = L^{-1} C_v^\top.
\]
Now substitute this into the second system:
\[
L^\top Z = Y = L^{-1} C_v^\top.
\]
Again \(L^\top\) is invertible, so
\[
Z = L^{-\top} Y = L^{-\top} L^{-1} C_v^\top.
\]
Because
\[
\tilde G^{-1} = (L L^\top)^{-1} = L^{-\top} L^{-1},
\]
we obtain
\[
Z = \tilde G^{-1} C_v^\top.
\]
Transpose both sides:
\[
Z^\top = C_v \tilde G^{-1}.
\]
Therefore
\[
B_v = Z^\top = C_v \tilde G^{-1} = B^*.
\]
\end{proof}

\begin{proposition}[Full SKA retrieval formula]
\label{prop:clean_full_ska}
Let
\[
q_w \coloneqq L^{-1} q,
\qquad
w_f \coloneqq \hat A_w^K q_w,
\qquad
z_f \coloneqq L w_f.
\]
Then
\[
\hat y_{\mathrm{SKA}}(q)
=
\eta\, C_v \tilde G^{-1} \Phi_K q,
\qquad
\Phi_K = L \hat A_w^K L^{-1}.
\]
\end{proposition}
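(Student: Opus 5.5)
The proof is a direct unrolling of the pipeline definitions, composed right to left; no estimates are needed. The SKA output is defined procedurally as $\hat y = \eta\, B_v z_f$, where $z_f = L w_f$, $w_f = \hat A_w^K q_w$, and $q_w = L^{-1} q$. We show this equals the closed-form expression $\eta\, C_v \tilde G^{-1} \Phi_K q$.

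First, Lemma~\ref{lem:clean_gram_pd} gives $\tilde G \succ 0$. Hence the Cholesky factor $L$ exists, is lower triangular with positive diagonal, and is invertible. This makes $q_w = L^{-1}q$ well defined, equivalently the unique solution of the forward substitution $L q_w = q$.

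Next, substitute each intermediate into the next:
\[
z_f = L w_f = L \hat A_w^K q_w = L \hat A_w^K L^{-1} q = \Phi_K q .
\]
This uses only associativity of matrix products and the definition of $\Phi_K$; note that $\hat A_w^K$ is just a fixed $r\times r$ matrix, whatever normalization produced $\hat A_w$.

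Then invoke Proposition~\ref{prop:clean_two_solves}. It says the value map $B_v$ produced by the two triangular solves is exactly $C_v \tilde G^{-1}$. Therefore
\[
\eta\, B_v z_f = \eta\, C_v \tilde G^{-1} \Phi_K q = \hat y_{\mathrm{SKA}}(q).
\]

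The only point requiring care is the \emph{identification} step: confirming that the $B_v$ used in the pipeline (step 7 of the SKA pipeline, $\tilde G X = C^\top$, $B_v = X^\top$) coincides with the $B_v$ of Proposition~\ref{prop:clean_two_solves}. Both equal $C_v\tilde G^{-1}$ by uniqueness of solutions of the invertible system $\tilde G X = C_v^\top$, so the identification holds.

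Two corollaries follow and can be recorded as sanity checks. At $K=0$ we have $\Phi_0 = L L^{-1} = I$, so the output reduces to the ridge predictor $\eta\,\hat y_{\mathrm{ridge}}(q)$. Moreover, $\Phi_K = L\hat A_w^K L^{-1}$ is similar to $\hat A_w^K$, consistent with Proposition~\ref{prop:clean_similarity}.
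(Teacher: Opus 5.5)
Your proof is correct and follows the paper's own argument essentially step for step. Like the paper, you whiten the query, apply the power filter, and unwhiten to get $z_f = \Phi_K q$, then substitute $B_v = C_v \tilde G^{-1}$ from Proposition~\ref{prop:clean_two_solves}. Your extra remarks on the invertibility of $L$ and on identifying the pipeline's $B_v$ with that of Proposition~\ref{prop:clean_two_solves} are harmless details that the paper leaves implicit.
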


\begin{proof}
We proceed step by step.

First, whiten the query:
\[
q_w = L^{-1} q.
\]

Second, apply the power filter:
\[
w_f = \hat A_w^K q_w.
\]

Third, unwhiten:
\begin{align}
z_f
&=
L w_f \\
&=
L \hat A_w^K q_w \\
&=
L \hat A_w^K L^{-1} q \\
&=
\Phi_K q.
\end{align}

Fourth, apply the value map from Proposition~\ref{prop:clean_two_solves}:
\begin{align}
\hat y_{\mathrm{SKA}}(q)
&=
\eta\, B_v z_f \\
&=
\eta\, (C_v \tilde G^{-1}) z_f \\
&=
\eta\, C_v \tilde G^{-1} \Phi_K q.
\end{align}
This is the claimed formula.
\end{proof}

\begin{corollary}[SKA with \(K=0\) equals ridge retrieval exactly]
\label{cor:clean_k0_equals_ridge}
If \(K=0\), then \(\Phi_0 = I_r\), and therefore
\[
\hat y_{\mathrm{SKA}}(q)
=
\eta\, \hat y_{\mathrm{ridge}}(q).
\]
\end{corollary}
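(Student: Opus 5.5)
The corollary is an immediate specialization of Proposition~\ref{prop:clean_full_ska}. I will first show that $\Phi_0 = I_r$ and then substitute this into the full retrieval formula.

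\textbf{Step 1: $\Phi_0 = I_r$.} Adopt the standard convention that the zeroth power of any square matrix is the identity, so $\hat A_w^0 = I_r$ whatever spectral normalization or variance restoration produced $\hat A_w$. Then
\[
\Phi_0 = L \hat A_w^0 L^{-1} = L I_r L^{-1} = L L^{-1} = I_r .
\]
This uses the invertibility of $L$. That follows from Lemma~\ref{lem:clean_gram_pd}: $\tilde G$ is positive definite, so its Cholesky factor $L$ exists and has a strictly positive diagonal, hence is nonsingular.

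\textbf{Step 2: substitution.} By Proposition~\ref{prop:clean_full_ska},
\[
\hat y_{\mathrm{SKA}}(q) = \eta\, C_v \tilde G^{-1} \Phi_0 q = \eta\, C_v \tilde G^{-1} q .
\]
By definition, $C_v \tilde G^{-1} q = \hat y_{\mathrm{ridge}}(q)$, which gives the claim.

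At the level of the pipeline, the same conclusion can be checked directly. The query is whitened as $q_w = L^{-1} q$, the power filter is skipped because $w_f = q_w$, and un-whitening gives $z_f = L q_w = q$. Proposition~\ref{prop:clean_two_solves} then yields $\eta B_v q = \eta\, C_v \tilde G^{-1} q$.

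The argument has no real obstacle. The only point needing care is that $\hat A_w^0 = I_r$ must hold even if $\hat A_w$ is singular or spectrally rescaled, and it does under the matrix-power convention. I will also note that the whitening and un-whitening cancel exactly only because $L$ is invertible, which Lemma~\ref{lem:clean_gram_pd} guarantees for every $\varepsilon > 0$.
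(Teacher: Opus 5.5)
Your proposal is correct and follows the paper's proof essentially verbatim: set $\hat A_w^0 = I_r$, conclude $\Phi_0 = L I_r L^{-1} = I_r$, and substitute into Proposition~\ref{prop:clean_full_ska} to get $\eta\, C_v \tilde G^{-1} q = \eta\, \hat y_{\mathrm{ridge}}(q)$. Your extra remarks are correct but go beyond what the paper spells out: the invertibility of $L$ via Lemma~\ref{lem:clean_gram_pd}, the validity of the convention even when $\hat A_w$ is singular, and the pipeline-level check.
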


\begin{proof}
If \(K=0\), then \(\hat A_w^0 = I_r\). Hence
\[
\Phi_0 = L I_r L^{-1} = I_r.
\]
Substituting into Proposition~\ref{prop:clean_full_ska},
\[
\hat y_{\mathrm{SKA}}(q)
=
\eta\, C_v \tilde G^{-1} I_r q
=
\eta\, C_v \tilde G^{-1} q
=
\eta\, \hat y_{\mathrm{ridge}}(q).
\]
\end{proof}

\subsection{The Exact Bias Introduced by the Power Filter}

\begin{proposition}[Exact query-side bias of the power filter]
\label{prop:clean_power_bias}
Define
\[
B_K \coloneqq B^* \Phi_K = C_v \tilde G^{-1} \Phi_K.
\]
Then for every query \(q\),
\[
\frac{1}{\eta}\hat y_{\mathrm{SKA}}(q) - \hat y_{\mathrm{ridge}}(q)
=
B^*(\Phi_K - I_r) q.
\]
Moreover,
\[
\mathcal L(B_K) - \mathcal L(B^*)
=
\|B^*(\Phi_K - I_r)\tilde G^{1/2}\|_F^2.
\]
\end{proposition}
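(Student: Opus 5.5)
The two claims follow directly from results already established, so the plan is to combine them rather than compute anything new.

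For the first identity, I will invoke Proposition~\ref{prop:clean_full_ska}, which gives $\hat y_{\mathrm{SKA}}(q) = \eta\, C_v \tilde G^{-1}\Phi_K q$. Dividing by $\eta>0$ and using $B^* = C_v\tilde G^{-1}$ from Proposition~\ref{prop:clean_ridge_minimizer} yields $\eta^{-1}\hat y_{\mathrm{SKA}}(q) = B^*\Phi_K q$. Since $\hat y_{\mathrm{ridge}}(q) = B^* q$ by definition, subtracting and factoring out $B^*$ on the left gives $B^*(\Phi_K - I_r)q$. This step holds for every $q$ and involves no assumptions on $\hat A_w$, apart from the invertibility of $L$, which is guaranteed by Lemma~\ref{lem:clean_gram_pd}.

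For the excess-risk statement, I will apply the exact identity of Proposition~\ref{prop:clean_excess_risk}, which is valid for every $B\in\mathbb{R}^{P\times r}$. I take $B = B_K$. Then
\[
B_K - B^* = B^*\Phi_K - B^* = B^*(\Phi_K - I_r),
\]
and substituting this into $\|(B-B^*)\tilde G^{1/2}\|_F^2$ gives the claimed formula. The only points to confirm are that $B_K$ has the correct shape, namely $P\times r$, since $\Phi_K$ is $r\times r$, and that $\tilde G^{1/2}$ is the same symmetric positive definite square root used in Proposition~\ref{prop:clean_excess_risk}.

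I do not expect a real obstacle here. The one place to be careful is the interpretation: $B_K$ must be read as a fixed linear map that is plugged into $\mathcal L$, with $\Phi_K$ treated as a constant matrix, even though $\Phi_K$ itself depends on the data through $L$ and $M$. Because Proposition~\ref{prop:clean_excess_risk} holds for arbitrary $B$, this dependence is harmless. As a sanity check I will also note that when $K=0$ the bias vanishes, which is consistent with Corollary~\ref{cor:clean_k0_equals_ridge}.
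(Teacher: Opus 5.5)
Your proposal is correct and matches the paper's proof essentially step for step. The first identity comes from writing $\eta^{-1}\hat y_{\mathrm{SKA}}(q) = B^*\Phi_K q$ and subtracting $B^* q$, and the second from applying Proposition~\ref{prop:clean_excess_risk} with $B = B_K$ and $B_K - B^* = B^*(\Phi_K - I_r)$; the only cosmetic difference is that the paper reads the first step directly off the definition of $\hat y_{\mathrm{SKA}}$ rather than citing Proposition~\ref{prop:clean_full_ska}.
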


\begin{proof}
From the definitions,
\[
\frac{1}{\eta}\hat y_{\mathrm{SKA}}(q)
=
B_K q
=
B^* \Phi_K q.
\]
Subtract \(\hat y_{\mathrm{ridge}}(q) = B^* q\):
\begin{align}
\frac{1}{\eta}\hat y_{\mathrm{SKA}}(q) - \hat y_{\mathrm{ridge}}(q)
&=
B^* \Phi_K q - B^* q \\
&=
B^*(\Phi_K - I_r) q.
\end{align}
This proves the first identity.

For the second identity, apply Proposition~\ref{prop:clean_excess_risk} with \(B = B_K\):
\[
\mathcal L(B_K) - \mathcal L(B^*)
=
\|(B_K - B^*) \tilde G^{1/2}\|_F^2.
\]
Since
\[
B_K - B^*
=
B^* \Phi_K - B^*
=
B^*(\Phi_K - I_r),
\]
we obtain
\[
\mathcal L(B_K) - \mathcal L(B^*)
=
\|B^*(\Phi_K - I_r)\tilde G^{1/2}\|_F^2.
\]
\end{proof}

\subsection{When the Power Filter Helps}

\begin{theorem}[Persistent and transient regime bound]
\label{thm:clean_persistent_transient}
Assume that \(\hat A_w\) is normal, so that
\[
\hat A_w = U \Lambda U^*,
\qquad
\Lambda = \operatorname{diag}(\lambda_1,\dots,\lambda_r),
\]
with \(U^* U = I_r\). Let
\[
q_w \coloneqq L^{-1} q.
\]
Decompose \(q_w\) orthogonally as
\[
q_w = q_{\mathrm{pers}} + q_{\mathrm{trans}},
\]
where \(q_{\mathrm{pers}}\) lies in the span of eigenvectors whose eigenvalues satisfy
\[
|1 - \lambda_i| \le \delta,
\]
and \(q_{\mathrm{trans}}\) lies in the span of eigenvectors whose eigenvalues satisfy
\[
|\lambda_i| \le \rho < 1.
\]
Define the persistent target
\[
y_{\mathrm{pers}} \coloneqq \eta\, C_v L^{-\top} q_{\mathrm{pers}}.
\]
Then
\[
\left\|
\frac{1}{\eta}\hat y_{\mathrm{SKA}}(q)
-
C_v L^{-\top} q_{\mathrm{pers}}
\right\|_2
\le
\|C_v L^{-\top}\|
\left(
\bigl((1+\delta)^K - 1\bigr)\|q_{\mathrm{pers}}\|_2
+
\rho^K \|q_{\mathrm{trans}}\|_2
\right).
\]
\end{theorem}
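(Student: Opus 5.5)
Start from Proposition~\ref{prop:clean_full_ska}, which rewrites the SKA output in whitened coordinates:
\[
\tfrac{1}{\eta}\hat y_{\mathrm{SKA}}(q) = C_v \tilde G^{-1} L \hat A_w^K L^{-1} q = C_v L^{-\top} \hat A_w^K q_w,
\]
because $\tilde G^{-1} L = L^{-\top}L^{-1}L = L^{-\top}$. The whole comparison can therefore be carried out on the vector $\hat A_w^K q_w$ in whitened space. The matrix $C_v L^{-\top}$ is factored out at the end by submultiplicativity of the operator norm.

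Using $q_w = q_{\mathrm{pers}} + q_{\mathrm{trans}}$, I would write the error as
\[
\tfrac{1}{\eta}\hat y_{\mathrm{SKA}}(q) - C_v L^{-\top} q_{\mathrm{pers}}
= C_v L^{-\top}\Bigl( (\hat A_w^K - I_r) q_{\mathrm{pers}} + \hat A_w^K q_{\mathrm{trans}} \Bigr),
\]
and then apply the triangle inequality, which gives
\[
\|C_v L^{-\top}\|\bigl(\|(\hat A_w^K - I_r) q_{\mathrm{pers}}\|_2 + \|\hat A_w^K q_{\mathrm{trans}}\|_2\bigr).
\]
The remaining work is to bound each whitened term, and normality does all of it.

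Since $\hat A_w = U\Lambda U^*$ with $U$ unitary, both $\hat A_w^K - I_r = U(\Lambda^K - I_r)U^*$ and $\hat A_w^K$ act diagonally in an orthonormal eigenbasis. The two subspaces are spanned by orthonormal eigenvectors, and each is invariant under these operators. On the transient subspace, $\|\hat A_w^K q_{\mathrm{trans}}\|_2 \le \max_i |\lambda_i|^K \|q_{\mathrm{trans}}\|_2 \le \rho^K \|q_{\mathrm{trans}}\|_2$, because a unitary change of basis preserves the norm. On the persistent subspace, I get $\|(\hat A_w^K - I_r)q_{\mathrm{pers}}\|_2 \le \max_i |\lambda_i^K - 1|\,\|q_{\mathrm{pers}}\|_2$, with the maximum taken over eigenvalues satisfying $|1-\lambda_i|\le\delta$.

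The step needing care is the scalar inequality $|\lambda^K - 1| \le (1+\delta)^K - 1$ whenever $|\lambda - 1| \le \delta$, for complex $\lambda$. To prove it, set $\mu = \lambda - 1$ and expand binomially:
\[
\lambda^K - 1 = \sum_{j=1}^K \binom{K}{j}\mu^j, \quad\text{so}\quad |\lambda^K - 1| \le \sum_{j=1}^K \binom{K}{j}\delta^j = (1+\delta)^K - 1.
\]
Combining the two subspace bounds with the factored norm gives the theorem.

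One subtlety needs checking: the hypothesis must make the "persistent" and "transient" eigenvectors distinct, orthogonal groups. Normality guarantees this, because eigenvectors of distinct eigenvalues of a normal matrix are orthogonal. If an eigenvalue satisfied both conditions, the orthogonal decomposition would still be fine, since each bound only uses the condition attached to its own component. The target $y_{\mathrm{pers}}$ enters only after multiplying through by $\eta$.
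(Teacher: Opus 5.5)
Your proposal is correct and follows the paper's proof essentially step for step. Both arguments rewrite $\frac{1}{\eta}\hat y_{\mathrm{SKA}}(q) = C_v L^{-\top}\hat A_w^K q_w$, split along $q_{\mathrm{pers}} + q_{\mathrm{trans}}$, apply the triangle inequality, and use the orthonormal eigenbasis given by normality to bound each piece. The only difference is minor: you prove $|\lambda^K - 1| \le (1+\delta)^K - 1$ by a binomial expansion in $\lambda - 1$, whereas the paper factors $\lambda^K - 1 = (\lambda-1)\sum_{j=0}^{K-1}\lambda^j$ and sums a geometric series, which divides by $\delta$ and so implicitly needs $\delta > 0$; your version avoids that.
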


\begin{proof}
By Proposition~\ref{prop:clean_full_ska},
\[
\frac{1}{\eta}\hat y_{\mathrm{SKA}}(q)
=
C_v \tilde G^{-1} \Phi_K q.
\]
Because \(\tilde G^{-1} = L^{-\top} L^{-1}\) and \(\Phi_K = L \hat A_w^K L^{-1}\),
\begin{align}
C_v \tilde G^{-1} \Phi_K q
&=
C_v L^{-\top} L^{-1} L \hat A_w^K L^{-1} q \\
&=
C_v L^{-\top} \hat A_w^K q_w.
\end{align}
Hence
\[
\frac{1}{\eta}\hat y_{\mathrm{SKA}}(q)
=
C_v L^{-\top} \hat A_w^K q_w.
\]
Substitute the decomposition \(q_w = q_{\mathrm{pers}} + q_{\mathrm{trans}}\):
\[
\frac{1}{\eta}\hat y_{\mathrm{SKA}}(q)
=
C_v L^{-\top} \hat A_w^K q_{\mathrm{pers}}
+
C_v L^{-\top} \hat A_w^K q_{\mathrm{trans}}.
\]
Subtract \(C_v L^{-\top} q_{\mathrm{pers}}\):
\begin{align}
\frac{1}{\eta}\hat y_{\mathrm{SKA}}(q)
-
C_v L^{-\top} q_{\mathrm{pers}}
&=
C_v L^{-\top} (\hat A_w^K q_{\mathrm{pers}} - q_{\mathrm{pers}})
+
C_v L^{-\top} \hat A_w^K q_{\mathrm{trans}} \\
&=
C_v L^{-\top} (\hat A_w^K - I_r) q_{\mathrm{pers}}
+
C_v L^{-\top} \hat A_w^K q_{\mathrm{trans}}.
\end{align}
Taking norms and using the triangle inequality,
\begin{align}
\left\|
\frac{1}{\eta}\hat y_{\mathrm{SKA}}(q)
-
C_v L^{-\top} q_{\mathrm{pers}}
\right\|_2
&\le
\|C_v L^{-\top}\|
\left(
\|(\hat A_w^K - I_r) q_{\mathrm{pers}}\|_2
+
\|\hat A_w^K q_{\mathrm{trans}}\|_2
\right).
\end{align}
So it remains to bound the two terms inside the parentheses.

First consider the transient part. Since \(q_{\mathrm{trans}}\) lies in the invariant subspace where \(|\lambda_i| \le \rho\), and since \(\hat A_w\) is normal, the operator norm of \(\hat A_w^K\) restricted to this subspace is the maximum modulus of the corresponding eigenvalues raised to the \(K\)-th power. Therefore
\[
\|\hat A_w^K q_{\mathrm{trans}}\|_2
\le
\rho^K \|q_{\mathrm{trans}}\|_2.
\]

Now consider the persistent part. Write
\[
q_{\mathrm{pers}} = \sum_{i \in \mathcal P} \alpha_i u_i,
\]
where \(u_i\) are orthonormal eigenvectors of \(\hat A_w\) and \(\mathcal P\) indexes the persistent subspace. Then
\[
(\hat A_w^K - I_r) q_{\mathrm{pers}}
=
\sum_{i \in \mathcal P} \alpha_i (\lambda_i^K - 1) u_i.
\]
Hence
\[
\|(\hat A_w^K - I_r) q_{\mathrm{pers}}\|_2
\le
\max_{i \in \mathcal P} |\lambda_i^K - 1| \, \|q_{\mathrm{pers}}\|_2.
\]
We now bound \(|\lambda_i^K - 1|\). Factor:
\[
\lambda_i^K - 1
=
(\lambda_i - 1)\sum_{j=0}^{K-1} \lambda_i^j.
\]
Taking absolute values,
\[
|\lambda_i^K - 1|
\le
|\lambda_i - 1|
\sum_{j=0}^{K-1} |\lambda_i|^j.
\]
Because \(|1 - \lambda_i| \le \delta\), we have \(|\lambda_i| \le 1 + \delta\), so
\[
|\lambda_i^K - 1|
\le
\delta \sum_{j=0}^{K-1} (1+\delta)^j.
\]
The geometric sum evaluates to
\[
\sum_{j=0}^{K-1} (1+\delta)^j
=
\frac{(1+\delta)^K - 1}{(1+\delta)-1}
=
\frac{(1+\delta)^K - 1}{\delta}.
\]
Therefore
\[
|\lambda_i^K - 1|
\le
(1+\delta)^K - 1.
\]
Hence
\[
\|(\hat A_w^K - I_r) q_{\mathrm{pers}}\|_2
\le
\bigl((1+\delta)^K - 1\bigr)\|q_{\mathrm{pers}}\|_2.
\]

Substituting the persistent and transient bounds into the earlier inequality yields
\[
\left\|
\frac{1}{\eta}\hat y_{\mathrm{SKA}}(q)
-
C_v L^{-\top} q_{\mathrm{pers}}
\right\|_2
\le
\|C_v L^{-\top}\|
\left(
\bigl((1+\delta)^K - 1\bigr)\|q_{\mathrm{pers}}\|_2
+
\rho^K \|q_{\mathrm{trans}}\|_2
\right).
\]
This proves the theorem.
\end{proof}

\begin{corollary}[A sufficient condition under which \(K>0\) improves on \(K=0\) for a persistent target]
\label{cor:clean_filter_helps}
Under the assumptions of Theorem~\ref{thm:clean_persistent_transient}, the unfiltered rule \(K=0\) satisfies
\[
\left\|
\frac{1}{\eta}\hat y_{\mathrm{SKA},\,K=0}(q)
-
C_v L^{-\top} q_{\mathrm{pers}}
\right\|_2
\le
\|C_v L^{-\top}\| \, \|q_{\mathrm{trans}}\|_2.
\]
Therefore the \(K>0\) power filter has a strictly smaller upper bound whenever
\[
\bigl((1+\delta)^K - 1\bigr)\|q_{\mathrm{pers}}\|_2
<
(1-\rho^K)\|q_{\mathrm{trans}}\|_2.
\]
\end{corollary}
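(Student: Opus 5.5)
The corollary has two parts. First comes a bound for the $K=0$ rule. Second, I compare the two upper bounds directly.

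For the first part, I would specialize the computation at the start of the proof of Theorem~\ref{thm:clean_persistent_transient} to $K=0$. By Proposition~\ref{prop:clean_full_ska} and Corollary~\ref{cor:clean_k0_equals_ridge}, $\frac{1}{\eta}\hat y_{\mathrm{SKA},K=0}(q) = C_v\tilde G^{-1} q = C_v L^{-\top} L^{-1} q = C_v L^{-\top} q_w$. Substituting $q_w = q_{\mathrm{pers}} + q_{\mathrm{trans}}$ and subtracting the persistent target leaves exactly $C_v L^{-\top} q_{\mathrm{trans}}$. Submultiplicativity of the operator norm then gives the stated bound $\|C_v L^{-\top}\|\,\|q_{\mathrm{trans}}\|_2$.

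As a consistency check, this is precisely the theorem's bound evaluated at $K=0$. There $(1+\delta)^0-1=0$ and $\rho^0=1$. So one could also simply cite Theorem~\ref{thm:clean_persistent_transient} with $K=0$, noting that its proof never used $K\ge 1$ and that $\hat A_w^0 = I_r$.

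For the comparison, the $K>0$ upper bound from Theorem~\ref{thm:clean_persistent_transient} is
\[
\|C_v L^{-\top}\|\bigl(((1+\delta)^K-1)\|q_{\mathrm{pers}}\|_2 + \rho^K\|q_{\mathrm{trans}}\|_2\bigr).
\]
The $K=0$ upper bound is $\|C_v L^{-\top}\|\,\|q_{\mathrm{trans}}\|_2$. Assuming $\|C_v L^{-\top}\|>0$, the first is strictly smaller than the second if and only if
\[
((1+\delta)^K-1)\|q_{\mathrm{pers}}\|_2 + \rho^K\|q_{\mathrm{trans}}\|_2 < \|q_{\mathrm{trans}}\|_2.
\]
Moving the $\rho^K$ term to the right-hand side gives the displayed condition.

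The only subtlety is the degenerate case $C_v L^{-\top}=0$. There both bounds vanish, so "strictly smaller" fails even when the displayed inequality holds. I would handle this by stating the comparison for the bracketed quantities, the factor that multiplies $\|C_v L^{-\top}\|$ in each bound, or equivalently by assuming $C_v\neq 0$; note that $L^{-\top}$ is invertible, so $C_v L^{-\top}=0$ exactly when $C_v=0$. This small caveat is the only real obstacle; everything else is direct substitution.
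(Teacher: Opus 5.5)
Your proposal is correct and matches the paper's proof: set $\hat A_w^0 = I_r$ so the $K=0$ error is exactly $C_v L^{-\top} q_{\mathrm{trans}}$, bound it by submultiplicativity, then compare the bracketed factors of the two bounds and rearrange. Your caveat about the degenerate case $C_v = 0$ is valid and is something the paper's proof skips: its comparison implicitly divides by $\|C_v L^{-\top}\|$, so ``strictly smaller'' fails when that norm is zero.
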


\begin{proof}
When \(K=0\), we have \(\hat A_w^0 = I_r\). Therefore
\[
\frac{1}{\eta}\hat y_{\mathrm{SKA},\,K=0}(q)
=
C_v L^{-\top} q_w
=
C_v L^{-\top}(q_{\mathrm{pers}} + q_{\mathrm{trans}}).
\]
Subtract the persistent target:
\[
\frac{1}{\eta}\hat y_{\mathrm{SKA},\,K=0}(q)
-
C_v L^{-\top} q_{\mathrm{pers}}
=
C_v L^{-\top} q_{\mathrm{trans}}.
\]
Taking norms gives
\[
\left\|
\frac{1}{\eta}\hat y_{\mathrm{SKA},\,K=0}(q)
-
C_v L^{-\top} q_{\mathrm{pers}}
\right\|_2
\le
\|C_v L^{-\top}\| \, \|q_{\mathrm{trans}}\|_2.
\]
The \(K>0\) upper bound is given by Theorem~\ref{thm:clean_persistent_transient}. Comparing the two bounds, the \(K>0\) upper bound is strictly smaller whenever
\[
\bigl((1+\delta)^K - 1\bigr)\|q_{\mathrm{pers}}\|_2
+
\rho^K \|q_{\mathrm{trans}}\|_2
<
\|q_{\mathrm{trans}}\|_2.
\]
Rearranging,
\[
\bigl((1+\delta)^K - 1\bigr)\|q_{\mathrm{pers}}\|_2
<
(1-\rho^K)\|q_{\mathrm{trans}}\|_2.
\]
\end{proof}

\subsection{A Strict Expressivity Difference}

\begin{proposition}[Fixed-statistic SKA is linear in the query, softmax attention is not]
\label{prop:clean_expressivity}
Conditioned on fixed prefix statistics, the map
\[
q \mapsto \hat y_{\mathrm{SKA}}(q)
\]
is linear in \(q\), whereas a softmax retrieval map of the form
\[
q \mapsto V^\top \operatorname{softmax}(K \Theta q)
\]
is generally nonlinear in \(q\). Consequently, there exist single-head softmax retrieval rules that cannot be represented exactly by any fixed-statistic SKA head.
\end{proposition}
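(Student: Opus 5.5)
The argument has three parts: linearity of the SKA map, a concrete softmax map that is not linear, and a comparison showing no SKA head can match it.

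\emph{Linearity.} By Proposition~\ref{prop:clean_full_ska}, once $C_v$, $\tilde G$, $L$ and $\hat A_w$ are fixed by the prefix statistics,
\[
\hat y_{\mathrm{SKA}}(q)=\eta\, C_v\tilde G^{-1}\Phi_K q .
\]
This is multiplication of $q$ by the fixed matrix $W_{\mathrm{SKA}}\coloneqq \eta\, C_v\tilde G^{-1} L\hat A_w^K L^{-1}\in\R^{P\times r}$. One caveat: the normalization of $\hat A_w$ (the power-iteration estimate of $\sigma_{\max}$ and the factor $\gamma$) depends only on $M$ and $\tilde G$, never on $q$. So $W_{\mathrm{SKA}}$ is genuinely query-independent, and the map is linear, in particular homogeneous: $\hat y_{\mathrm{SKA}}(cq)=c\,\hat y_{\mathrm{SKA}}(q)$.

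\emph{A nonlinear softmax map.} Take $T=2$, $r=P=1$, $\Theta=1$, keys $k_1=1$, $k_2=-1$, and values $v_1=1$, $v_2=0$. Then
\[
f(q)=V^\top\operatorname{softmax}(K\Theta q)=\frac{e^{q}}{e^{q}+e^{-q}}=\sigma(2q),
\]
the logistic function. It satisfies $f(0)=\tfrac12\neq 0$, while every linear map sends $0$ to $0$. It also saturates, $f(q)\to 1$ as $q\to\infty$, so it is not even affine. For an arbitrary $\Theta$ and $K$, the same scalar argument applies whenever the entries of $K\Theta q$ are not all equal and the values are not all identical.

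\emph{No exact SKA representation.} Suppose some SKA head satisfied $W_{\mathrm{SKA}}q=f(q)$ for all $q$. Setting $q=0$ gives $0=\tfrac12$, a contradiction. Evaluating at $q$ and $2q$ gives a second, independent contradiction: $\sigma(4q)\neq 2\sigma(2q)$. This holds for every choice of $W_{\mathrm{SKA}}$, including every choice of prefix statistics, $K$, $\gamma$ and $\eta$. Hence the softmax rule is not represented by any fixed-statistic SKA head.

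The main obstacle is interpretive rather than technical. The phrase ``conditioned on fixed prefix statistics'' has to be read as saying that all learned and normalizing quantities, including the sequence-max factor $m$ (frozen at prefill), are independent of the query. If the query token also entered the accumulators, the map would become quadratic in $q$, and the argument would have to rest on the saturation property instead of on homogeneity.
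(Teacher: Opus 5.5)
Your proof is correct and follows the paper's route. You use the same observation that $\hat y_{\mathrm{SKA}}(q)=\eta\,C_v\tilde G^{-1}\Phi_K q$ is multiplication by a query-independent matrix, and the same two-token example (the paper keeps a general $\beta>0$). The only difference is the nonlinearity check: the paper uses the $q$-dependence of $f'(q)$, which rules out agreement with any linear map even on an open interval, whereas your checks $f(0)=\tfrac12$ and $f(2q)\neq 2f(q)$ rule out agreement on all of the real line, which is all exact representation requires.

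Your aside about arbitrary $K,\Theta$ is overstated. Keys $(1,1,-1)$ with values $(1,-1,0)$ give $f\equiv 0$ despite unequal logits and non-identical values. The proof does not depend on that aside.
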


\begin{proof}
For fixed \(\eta\), \(C_v\), \(\tilde G\), and \(\Phi_K\),
\[
\hat y_{\mathrm{SKA}}(q)
=
\eta\, C_v \tilde G^{-1} \Phi_K q.
\]
This has the form \(A q\) for the fixed matrix
\[
A \coloneqq \eta\, C_v \tilde G^{-1} \Phi_K,
\]
so it is linear in \(q\).

Now consider a two-token, one-dimensional example with
\[
K =
\begin{bmatrix}
1 \\
-1
\end{bmatrix},
\qquad
V =
\begin{bmatrix}
1 \\
0
\end{bmatrix},
\qquad
\Theta = [\beta],
\qquad
\beta > 0.
\]
Then for a scalar query \(q \in \mathbb{R}\),
\[
K \Theta q
=
\begin{bmatrix}
\beta q \\
-\beta q
\end{bmatrix}.
\]
Applying softmax,
\[
\operatorname{softmax}(K \Theta q)
=
\begin{bmatrix}
\frac{e^{\beta q}}{e^{\beta q} + e^{-\beta q}} \\
\frac{e^{-\beta q}}{e^{\beta q} + e^{-\beta q}}
\end{bmatrix}.
\]
Multiplying by \(V^\top = [1 \; 0]\) yields the scalar output
\[
f(q)
=
V^\top \operatorname{softmax}(K \Theta q)
=
\frac{e^{\beta q}}{e^{\beta q} + e^{-\beta q}}
=
\frac{1}{1 + e^{-2\beta q}}.
\]
Differentiate:
\[
f'(q)
=
\frac{2\beta e^{-2\beta q}}{(1 + e^{-2\beta q})^2}.
\]
This derivative depends on \(q\), so \(f\) is not linear on any open interval. Therefore no linear map \(A q\) can agree with this softmax rule on any open interval. Since fixed-statistic SKA is always linear in \(q\), this softmax rule cannot be represented exactly by any fixed-statistic SKA head.
\end{proof}

\subsection{Exact Memory Crossover}

\begin{proposition}[When SKA uses less memory than explicit key-value caching]
\label{prop:clean_memory_crossover}
Suppose an explicit-cache attention mechanism stores all \(T\) keys and values for one head, using
\[
T(r + P)
\]
floating-point numbers. The SKA head stores
\[
2r^2 + Pr + r + 1
\]
floating-point numbers, namely \(G\), \(M\), \(C_v\), the previous key, and one scalar. Therefore SKA uses less memory exactly when
\[
T > \frac{2r^2 + Pr + r + 1}{r + P}.
\]
\end{proposition}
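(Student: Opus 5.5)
The argument is pure counting, followed by a single rearrangement of an inequality. I would proceed in three steps.

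First, I would tally the explicit-cache footprint. Each of the $T$ positions contributes one key $k_t\in\mathbb{R}^r$ and one value $v_t\in\mathbb{R}^P$, giving $T(r+P)$ floats. This count grows linearly in $T$.

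Second, I would tally the SKA state exactly as listed in \S\ref{sec:recurrent}. The accumulators are $G\in\mathbb{R}^{r\times r}$ ($r^2$ floats), $M\in\mathbb{R}^{r\times r}$ ($r^2$ floats) and $C_v\in\mathbb{R}^{P\times r}$ ($Pr$ floats). Added to these are the previous key $\hat z_{t-1}\in\mathbb{R}^r$ ($r$ floats) and the scalar max-norm $m$ (one float). The total is $2r^2+Pr+r+1$.

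I should justify that nothing else needs to be stored. The operators $L$, $A_w$ and $B_v$ are recomputed from $G$, $M$, $C_v$ at each step, per the pipeline in Appendix~\ref{app:pipeline}. They are transient workspace rather than persistent state. I would also not exploit the symmetry of $G$, which could halve its storage, because the statement counts $G$ as a full $r\times r$ array.

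Third, I would compare the two counts. The cache is larger exactly when $T(r+P) > 2r^2+Pr+r+1$. Since $r+P>0$, dividing by it preserves the direction of the inequality and gives $T > (2r^2+Pr+r+1)/(r+P)$. Each step of this rearrangement is reversible, so the condition is an equivalence, which is what ``exactly when'' requires.

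The only real subtlety is the modelling convention rather than any mathematics. We must treat SKA's persistent footprint as just these accumulators, excluding the recomputed factors, and charge both schemes per head in the same float unit. Once that convention is fixed, the proof reduces to the one-line inequality above.
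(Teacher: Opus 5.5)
Your proposal is correct and matches the paper's proof: you count $T(r+P)$ floats for the cache and $2r^2+Pr+r+1$ for the SKA state, then divide $2r^2+Pr+r+1 < T(r+P)$ by $r+P>0$ to get an equivalence. Your added remarks, that $L$, $A_w$, $B_v$ are recomputed workspace and that the symmetry of $G$ is not exploited, only make explicit the counting convention the paper leaves implicit.
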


\begin{proof}
The explicit-cache attention mechanism stores \(T\) key vectors in \(\mathbb{R}^r\) and \(T\) value vectors in \(\mathbb{R}^P\). Hence the total number of stored scalars is
\[
Tr + TP = T(r+P).
\]
The SKA head stores
\[
G \in \mathbb{R}^{r \times r},
\qquad
M \in \mathbb{R}^{r \times r},
\qquad
C_v \in \mathbb{R}^{P \times r},
\qquad
k_{t-1} \in \mathbb{R}^r,
\]
and one scalar. Therefore the total number of stored scalars is
\[
r^2 + r^2 + Pr + r + 1 = 2r^2 + Pr + r + 1.
\]
SKA uses less memory precisely when
\[
2r^2 + Pr + r + 1 < T(r+P).
\]
Dividing both sides by \(r+P > 0\) gives
\[
T > \frac{2r^2 + Pr + r + 1}{r + P}.
\]
\end{proof}

\subsection{Regime Summary}

The results above imply the following comparison.

First, linear attention is competitive when \(\tilde G\) is close to a scalar multiple of the identity. In that regime, Theorem~\ref{thm:clean_lin_vs_ridge} shows that the missing inverse is almost just a scalar calibration.

Second, ridge retrieval and SKA with \(K=0\) are preferable when the key Gram matrix is anisotropic. In that regime, linear attention has an unavoidable error proportional to
\[
\frac{\kappa(\tilde G)-1}{\kappa(\tilde G)+1},
\]
while SKA computes the ridge inverse explicitly.

Third, SKA with \(K>0\) is preferable when answer-bearing signal lies in persistent modes and nuisance mass lies in transient modes. Theorem~\ref{thm:clean_persistent_transient} and Corollary~\ref{cor:clean_filter_helps} quantify exactly when the spectral filter improves the persistent target approximation.

Fourth, softmax attention can outperform fixed-statistic SKA when the desired retrieval rule is genuinely nonlinear in the query. Proposition~\ref{prop:clean_expressivity} makes this precise.

Finally, for sufficiently long contexts, SKA has a strict memory advantage over explicit key-value caching by Proposition~\ref{prop:clean_memory_crossover}.
\section{Gradient Flow Analysis}
\label{sec:gradient-bounds}

We analyze gradient propagation through the nonlinear branches of the
sequence and MLP sublayers. Every model considered here uses residual
connections, so the full block Jacobian has the form
\[
J_{\mathrm{block}} = I + J_{\mathrm{branch}}.
\]
The identity term is common to both standard baselines and Koopman-LM.
To isolate the effect of the nonlinear mechanism itself, we first analyze
the branch Jacobians. The residual connection can then be added back at
the end.

Throughout this section we use column-vector convention. For a scalar loss
\(\mathcal L\), gradients are column vectors. For a differentiable map
\(y = f(x)\), the Jacobian is
\[
J_f(x) = \frac{\partial y}{\partial x},
\]
and backpropagation uses
\[
\nabla_x \mathcal L = J_f(x)^\top \nabla_y \mathcal L.
\]

\subsection{The LM head bottleneck}

Let \(H \in \mathbb{R}^{C \times D}\) be the matrix of hidden states at
\(C\) token positions, let \(W \in \mathbb{R}^{V \times D}\) be the LM head,
and let the logits be
\[
L = H W^\top \in \mathbb{R}^{C \times V}.
\]
Let
\[
G_L \coloneqq \nabla_L \mathcal L \in \mathbb{R}^{C \times V}
\]
denote the logit gradient.

\begin{proposition}[LM head rank bottleneck]
\label{prop:grad-head}
The hidden-state gradient after the LM head is
\[
G_H \coloneqq \nabla_H \mathcal L = G_L W \in \mathbb{R}^{C \times D},
\]
and satisfies
\[
\operatorname{rank}(G_H) \le \operatorname{rank}(W) \le D.
\]
\end{proposition}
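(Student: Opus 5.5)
The plan is to derive the gradient formula from the chain rule, writing the linear map $H \mapsto L = H W^\top$ entrywise, and then to bound the rank using the rank of a matrix product.

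\textbf{Gradient formula.} Since $L_{cv} = \sum_{d} H_{cd} W_{vd}$, differentiating gives $\partial L_{cv}/\partial H_{cd} = W_{vd}$, and there is no dependence across rows $c$. Hence
\[
(\nabla_H \mathcal L)_{cd} = \sum_{v} (G_L)_{cv} W_{vd} = (G_L W)_{cd}.
\]
Equivalently, in matrix-differential form, $d\mathcal L = \operatorname{tr}(G_L^\top\, dL) = \operatorname{tr}(G_L^\top\, dH\, W^\top) = \operatorname{tr}\bigl((G_L W)^\top dH\bigr)$ by cyclicity of the trace. This identifies $G_H = G_L W \in \mathbb{R}^{C \times D}$.

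\textbf{Rank bound.} Use the standard fact $\operatorname{rank}(XY) \le \min\{\operatorname{rank}(X), \operatorname{rank}(Y)\}$. The part we need is short to prove: the row space of $G_L W$ is contained in the row space of $W$, because each row of $G_L W$ is a linear combination of the rows of $W$. Therefore $\operatorname{rank}(G_H) \le \operatorname{rank}(W)$. Finally, $W$ has $D$ columns, so $\operatorname{rank}(W) \le \min(V, D) \le D$.

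\textbf{Obstacle.} The argument is routine. The only point needing care is the index and transpose convention: the logits are $H W^\top$, so the gradient comes out as $G_L W$ and not $G_L W^\top$. The entrywise computation settles this unambiguously. I will also note that the bound holds for any loss $\mathcal L$, since nothing about $G_L$ beyond its shape enters the argument.
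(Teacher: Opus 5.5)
Your proposal is correct and matches the paper's proof, which also derives $G_H = G_L W$ from the differential $dL = dH\,W^\top$ and then applies $\operatorname{rank}(AB) \le \min\{\operatorname{rank}(A), \operatorname{rank}(B)\}$ with $\operatorname{rank}(W) \le D$. Your entrywise computation and row-space argument simply spell out the two facts the paper cites as standard.
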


\begin{proof}
Since
\[
L = H W^\top,
\]
the differential is
\[
dL = dH \, W^\top.
\]
By standard matrix backpropagation,
\[
\nabla_H \mathcal L = \nabla_L \mathcal L \, W = G_L W.
\]
Now use the rank inequality
\[
\operatorname{rank}(AB) \le \min\{\operatorname{rank}(A), \operatorname{rank}(B)\}.
\]
Applying this with \(A = G_L\) and \(B = W\) gives
\[
\operatorname{rank}(G_H)
=
\operatorname{rank}(G_L W)
\le
\min\{\operatorname{rank}(G_L), \operatorname{rank}(W)\}
\le
\operatorname{rank}(W)
\le
D.
\]
\end{proof}

Proposition~\ref{prop:grad-head} is architecture-independent. Every model
must begin backpropagation from a gradient that lies in a subspace of
dimension at most \(D\).

\subsection{Softmax attention}

We now analyze a single attention head at one query position, with keys
and values treated as fixed during the local Jacobian computation.

Let
\[
K =
\begin{bmatrix}
k_1^\top \\
\vdots \\
k_T^\top
\end{bmatrix}
\in \mathbb{R}^{T \times r},
\qquad
V =
\begin{bmatrix}
v_1^\top \\
\vdots \\
v_T^\top
\end{bmatrix}
\in \mathbb{R}^{T \times P},
\qquad
q \in \mathbb{R}^{r}.
\]
Define the logits, attention weights, and output by
\begin{align}
s(q) &\coloneqq \frac{K q}{\sqrt{d_k}} \in \mathbb{R}^{T}, \\
\alpha(q) &\coloneqq \operatorname{softmax}(s(q)) \in \mathbb{R}^{T}, \\
y_{\mathrm{attn}}(q) &\coloneqq V^\top \alpha(q) \in \mathbb{R}^{P}.
\end{align}

\begin{proposition}[Exact softmax-attention Jacobian]
\label{prop:grad-softmax-jac}
The Jacobian of a single attention head with respect to the query is
\[
J_{\mathrm{attn}}(q)
=
\frac{\partial y_{\mathrm{attn}}}{\partial q}
=
\frac{1}{\sqrt{d_k}} \, V^\top S(q) K,
\]
where
\[
S(q) \coloneqq \operatorname{Diag}(\alpha(q)) - \alpha(q)\alpha(q)^\top
\in \mathbb{R}^{T \times T}.
\]
Moreover:
\begin{align}
\operatorname{rank}(J_{\mathrm{attn}}(q))
&\le
\min\{P, r, T-1\}, \label{eq:grad-softmax-rank} \\
\|J_{\mathrm{attn}}(q)\|_2
&\le
\frac{\|V\|_2 \, \|K\|_2}{2\sqrt{d_k}}.
\label{eq:grad-softmax-op}
\end{align}
\end{proposition}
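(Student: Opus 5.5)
The plan is to prove the three claims in order: the Jacobian formula by the chain rule, the rank bound by factoring the Jacobian, and the operator-norm bound by bounding the softmax covariance matrix. Throughout, $K$ and $V$ are held fixed.

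\emph{Jacobian.} Write $y_{\mathrm{attn}} = V^\top \alpha(s(q))$ and apply the chain rule to get $J = V^\top \,\frac{\partial \alpha}{\partial s}\, \frac{\partial s}{\partial q}$. Since $s$ is linear in $q$, $\frac{\partial s}{\partial q} = K/\sqrt{d_k}$. For the softmax, differentiate $\alpha_i = e^{s_i}/\sum_j e^{s_j}$ to get $\partial \alpha_i/\partial s_j = \alpha_i(\delta_{ij} - \alpha_j)$. In matrix form this is $S(q) = \operatorname{Diag}(\alpha) - \alpha\alpha^\top$. Multiplying the three factors gives the stated formula.

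\emph{Rank bound.} The Jacobian is the product $V^\top S K$ with $V^\top \in \mathbb{R}^{P\times T}$, $S \in \mathbb{R}^{T\times T}$, $K\in\mathbb{R}^{T\times r}$. The inequality $\operatorname{rank}(AB)\le\min\{\operatorname{rank}A,\operatorname{rank}B\}$, already used in Proposition~\ref{prop:grad-head}, gives $\operatorname{rank} J \le \min\{P, \operatorname{rank} S, r\}$. To get $\operatorname{rank} S \le T-1$, I will show that $S\mathbf{1} = \alpha - \alpha(\alpha^\top\mathbf{1}) = \alpha - \alpha = 0$, using $\alpha^\top \mathbf 1 = 1$. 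Hence $\mathbf 1$ lies in the kernel of $S$ and $\operatorname{rank} S\le T-1$, which proves \eqref{eq:grad-softmax-rank}.

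\emph{Operator norm.} Submultiplicativity gives $\|J\|_2 \le \frac{1}{\sqrt{d_k}}\|V\|_2\|S\|_2\|K\|_2$, so it remains to show $\|S\|_2\le 1/2$. I expect this to be the main step. First, $S$ is symmetric positive semidefinite: $u^\top S u = \sum_i \alpha_i u_i^2 - (\sum_i\alpha_i u_i)^2$ is the variance of $u$ under the distribution $\alpha$. So $\|S\|_2 = \max_{\|u\|=1} \operatorname{Var}_\alpha(u)$.

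My first attempt will be a direct bound. Using $\operatorname{Var}_\alpha(u) \le \sum_i \alpha_i (u_i - c)^2$ for every $c$ and choosing $c=0$ only gives $\le \max_i \alpha_i \le 1$, which is too weak. The sharper route uses Gershgorin: row $i$ of $S$ has diagonal entry $\alpha_i(1-\alpha_i)$ and off-diagonal absolute sum $\alpha_i\sum_{j\ne i}\alpha_j = \alpha_i(1-\alpha_i)$. So every eigenvalue lies in $[0,\, 2\alpha_i(1-\alpha_i)]$ for some $i$, and $2\alpha_i(1-\alpha_i)\le 1/2$ because $\alpha_i(1-\alpha_i)\le 1/4$. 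This gives $\|S\|_2\le 1/2$ and hence \eqref{eq:grad-softmax-op}. The bound is attained for two tokens with equal weights, which confirms that the constant is tight.
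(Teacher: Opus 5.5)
Your proposal is correct and follows essentially the same route as the paper: the chain rule for the Jacobian, the product rank inequality combined with $S(q)\mathbf{1} = 0$, and the row-sum computation $2\alpha_i(1-\alpha_i) \le \tfrac{1}{2}$ for the norm bound. Your Gershgorin step is the eigenvalue form of the paper's inequality $\|S(q)\|_2 \le \|S(q)\|_\infty$ for symmetric $S(q)$, and your remarks on positive semidefiniteness and on tightness at $\alpha = (\tfrac12, \tfrac12)$ are correct additions.
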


\begin{proof}
We proceed step by step.

First, the logits are
\[
s(q) = \frac{Kq}{\sqrt{d_k}}.
\]
Therefore their differential is
\[
ds = \frac{K \, dq}{\sqrt{d_k}}.
\]

Second, we differentiate the softmax map. For each \(i,j\),
\[
\alpha_i = \frac{e^{s_i}}{\sum_{\ell=1}^{T} e^{s_\ell}},
\]
so
\[
\frac{\partial \alpha_i}{\partial s_j}
=
\alpha_i (\delta_{ij} - \alpha_j).
\]
Hence the Jacobian of softmax is
\[
\frac{\partial \alpha}{\partial s}
=
\operatorname{Diag}(\alpha) - \alpha \alpha^\top
=
S(q).
\]
Thus
\[
d\alpha = S(q)\, ds.
\]

Third, the attention output is
\[
y_{\mathrm{attn}}(q) = V^\top \alpha(q),
\]
so
\[
dy_{\mathrm{attn}} = V^\top \, d\alpha = V^\top S(q)\, ds.
\]
Substituting the expression for \(ds\),
\[
dy_{\mathrm{attn}}
=
V^\top S(q)\frac{K\,dq}{\sqrt{d_k}}
=
\left(\frac{1}{\sqrt{d_k}}V^\top S(q)K\right) dq.
\]
Therefore
\[
J_{\mathrm{attn}}(q)
=
\frac{1}{\sqrt{d_k}}V^\top S(q)K.
\]

We now prove the rank bound. Since
\[
J_{\mathrm{attn}}(q)
=
\frac{1}{\sqrt{d_k}}V^\top S(q)K,
\]
the rank inequality for products gives
\[
\operatorname{rank}(J_{\mathrm{attn}}(q))
\le
\min\{\operatorname{rank}(V^\top), \operatorname{rank}(S(q)), \operatorname{rank}(K)\}.
\]
Now
\[
\operatorname{rank}(V^\top) \le P,
\qquad
\operatorname{rank}(K) \le r.
\]
Also
\[
S(q)\mathbf 1
=
\operatorname{Diag}(\alpha)\mathbf 1 - \alpha \alpha^\top \mathbf 1
=
\alpha - \alpha(\mathbf 1^\top \alpha)
=
\alpha - \alpha
=
0,
\]
so \(S(q)\) has a nontrivial null vector \(\mathbf 1\), hence
\[
\operatorname{rank}(S(q)) \le T-1.
\]
Combining these inequalities gives
\[
\operatorname{rank}(J_{\mathrm{attn}}(q))
\le
\min\{P, r, T-1\}.
\]

We now prove the operator-norm bound. By submultiplicativity,
\[
\|J_{\mathrm{attn}}(q)\|_2
\le
\frac{1}{\sqrt{d_k}}
\|V^\top\|_2 \, \|S(q)\|_2 \, \|K\|_2
=
\frac{1}{\sqrt{d_k}}
\|V\|_2 \, \|S(q)\|_2 \, \|K\|_2.
\]
So it remains to bound \(\|S(q)\|_2\).

The matrix \(S(q)\) is symmetric. For row \(i\), the diagonal entry is
\[
S_{ii} = \alpha_i(1-\alpha_i),
\]
and for \(j \ne i\),
\[
S_{ij} = -\alpha_i \alpha_j.
\]
Hence the absolute row sum of row \(i\) is
\begin{align}
\sum_{j=1}^{T} |S_{ij}|
&=
\alpha_i(1-\alpha_i) + \sum_{j \ne i} \alpha_i \alpha_j \\
&=
\alpha_i(1-\alpha_i) + \alpha_i \sum_{j \ne i} \alpha_j \\
&=
\alpha_i(1-\alpha_i) + \alpha_i(1-\alpha_i) \\
&=
2\alpha_i(1-\alpha_i).
\end{align}
Since \(0 \le \alpha_i \le 1\), we have
\[
2\alpha_i(1-\alpha_i) \le \frac{1}{2},
\]
because the quadratic \(x(1-x)\) attains its maximum \(1/4\) at \(x=1/2\).
Therefore
\[
\|S(q)\|_\infty \le \frac{1}{2}.
\]
Since \(S(q)\) is symmetric,
\[
\|S(q)\|_2 \le \|S(q)\|_\infty \le \frac{1}{2}.
\]
Substituting into the previous bound yields
\[
\|J_{\mathrm{attn}}(q)\|_2
\le
\frac{\|V\|_2 \, \|K\|_2}{2\sqrt{d_k}}.
\]
This proves the proposition.
\end{proof}

\begin{corollary}[Softmax saturation suppresses the query gradient]
\label{cor:grad-softmax-saturation}
Let
\[
m(q) \coloneqq \max_{1 \le i \le T} \alpha_i(q).
\]
Then
\[
\|S(q)\|_2 \le 2(1-m(q)),
\]
and therefore
\[
\|J_{\mathrm{attn}}(q)\|_2
\le
\frac{2(1-m(q))}{\sqrt{d_k}}
\|V\|_2 \, \|K\|_2.
\]
In particular, if \(m(q) \to 1\), then \(\|J_{\mathrm{attn}}(q)\|_2 \to 0\).
If \(\alpha(q)\) is exactly one-hot, then \(J_{\mathrm{attn}}(q)=0\).
\end{corollary}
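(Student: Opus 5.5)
The plan is to reuse the row-sum argument from the proof of Proposition~\ref{prop:grad-softmax-jac}, but to bound each absolute row sum by a quantity tied to the largest attention weight $m(q)$ rather than by the uniform constant $1/2$. That proof showed that $S(q)$ is symmetric and that its $i$-th absolute row sum equals $2\alpha_i(1-\alpha_i)$. For symmetric $S$ we have $\|S\|_2 \le \sqrt{\|S\|_1\|S\|_\infty} = \|S\|_\infty$. It therefore suffices to show
\[
\alpha_i(1-\alpha_i) \le 1-m(q) \qquad \text{for every } i .
\]

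I will split into two cases according to whether $i$ attains the maximum.
\begin{enumerate}
\item If $i$ is an index with $\alpha_i = m(q)$, then $\alpha_i(1-\alpha_i) = m(1-m) \le 1-m$, since $m \le 1$.
\item If $i$ is any other index, then $\alpha_i \le \sum_{j \ne i^\star} \alpha_j = 1 - m(q)$, where $i^\star$ is a maximizing index. This uses that $\alpha$ lies on the probability simplex. Hence $\alpha_i(1-\alpha_i) \le \alpha_i \le 1-m$.
\end{enumerate}
Combining the two cases gives $\|S(q)\|_\infty \le 2(1-m(q))$, and hence $\|S(q)\|_2 \le 2(1-m(q))$.

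The Jacobian bound then follows exactly as in Proposition~\ref{prop:grad-softmax-jac}. Apply submultiplicativity to
\[
J_{\mathrm{attn}}(q) = \frac{1}{\sqrt{d_k}}\, V^\top S(q) K
\]
and substitute the new bound on $\|S(q)\|_2$ to obtain the stated inequality. The limit claim is then immediate: as $m(q)\to 1$ the factor $2(1-m(q))$ tends to $0$ while $\|V\|_2$ and $\|K\|_2$ are held fixed.

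For the one-hot case $\alpha = e_{i^\star}$, I will compute directly:
\[
S = \operatorname{Diag}(e_{i^\star}) - e_{i^\star} e_{i^\star}^\top = 0 ,
\]
so $J_{\mathrm{attn}}(q) = 0$. This also follows from the bound with $m = 1$.

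The only step needing care is the non-maximal case: the key observation is that every non-maximal weight is at most the total mass outside the maximizer. No real obstacle is expected beyond that.
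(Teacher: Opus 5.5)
Your proposal is correct and follows essentially the same argument as the paper. It uses the same case split on whether $\alpha_i$ attains the maximum to bound each absolute row sum $2\alpha_i(1-\alpha_i)$ by $2(1-m)$, followed by $\|S\|_2 \le \|S\|_\infty$ for symmetric $S$, submultiplicativity, and the direct computation $S=0$ in the one-hot case; your simplex argument for the non-maximal case and the $\sqrt{\|S\|_1\|S\|_\infty}$ step merely make explicit what the paper states without comment.
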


\begin{proof}
For row \(i\), the absolute row sum of \(S(q)\) is
\[
2\alpha_i(1-\alpha_i).
\]
Let \(m = m(q)\). If \(\alpha_i = m\), then
\[
2\alpha_i(1-\alpha_i) = 2m(1-m) \le 2(1-m).
\]
If \(\alpha_i \ne m\), then \(\alpha_i \le 1-m\), so
\[
2\alpha_i(1-\alpha_i) \le 2\alpha_i \le 2(1-m).
\]
Thus every row sum is at most \(2(1-m)\), which implies
\[
\|S(q)\|_\infty \le 2(1-m).
\]
Since \(S(q)\) is symmetric,
\[
\|S(q)\|_2 \le \|S(q)\|_\infty \le 2(1-m).
\]
Substituting this into the bound from
Proposition~\ref{prop:grad-softmax-jac} gives
\[
\|J_{\mathrm{attn}}(q)\|_2
\le
\frac{2(1-m(q))}{\sqrt{d_k}}
\|V\|_2 \, \|K\|_2.
\]
If \(m(q)\to 1\), the right-hand side tends to \(0\). If \(\alpha(q)\) is
exactly one-hot, then \(\operatorname{Diag}(\alpha)=\alpha\alpha^\top\), so
\(S(q)=0\), hence \(J_{\mathrm{attn}}(q)=0\).
\end{proof}

Corollary~\ref{cor:grad-softmax-saturation} is the precise version of the
informal statement that highly concentrated attention suppresses query-side
gradient flow. What vanishes is not a heuristic “effective rank” in general,
but the actual softmax covariance factor \(S(q)\). 

\subsection{SKA retrieval}

We now analyze the query-side Jacobian of a single SKA head, treating the
prefix statistics as fixed.

Let
\[
\tilde G = G + \varepsilon I_r = L L^\top,
\qquad
B_v = C_v \tilde G^{-1},
\qquad
\Phi_K = L \hat A_w^K L^{-1},
\]
where \(L\) is the Cholesky factor of \(\tilde G\),
\(\hat A_w\) is the spectrally normalized whitened Koopman operator, and
\(K \in \mathbb{N}\) is the power-filter order. Let the head output be
\[
y_{\mathrm{SKA}}(q) = \eta \, B_v \Phi_K q
=
\eta \, B_v L \hat A_w^K L^{-1} q.
\]

\begin{proposition}[Exact SKA Jacobian and norm bound]
\label{prop:grad-ska-jac}
The Jacobian of one SKA head with respect to the query is
\[
J_{\mathrm{SKA}}(q)
=
\frac{\partial y_{\mathrm{SKA}}}{\partial q}
=
\eta \, B_v L \hat A_w^K L^{-1}.
\]
Moreover, if
\[
\|\hat A_w\|_2 \le \gamma,
\]
then
\[
\|J_{\mathrm{SKA}}(q)\|_2
\le
\eta \, \|B_v\|_2 \, \gamma^K
\sqrt{\frac{\|G\|_2 + \varepsilon}{\varepsilon}}.
\]
\end{proposition}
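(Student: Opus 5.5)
Since the prefix statistics are held fixed, $y_{\mathrm{SKA}}(q)=\eta\,B_v L\hat A_w^K L^{-1} q$ is a fixed matrix applied to $q$. Its Jacobian is therefore that matrix itself, exactly as in the proof of Proposition~\ref{prop:clean_expressivity}. Differentiating $dy=\eta B_v L\hat A_w^K L^{-1}\,dq$ gives the stated formula for $J_{\mathrm{SKA}}(q)$ directly. Here $\eta$, $B_v$, $L$ and $\hat A_w$ do not depend on $q$; this is the modeling convention of the subsection.

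For the norm bound, we apply submultiplicativity of the operator norm:
\[
\|J_{\mathrm{SKA}}\|_2\le \eta\,\|B_v\|_2\,\|\hat A_w^K\|_2\,\|L\|_2\,\|L^{-1}\|_2 .
\]
The middle factor is controlled by $\|\hat A_w^K\|_2\le\|\hat A_w\|_2^K\le\gamma^K$, the same argument as in Propositions~\ref{prop:contractivity} and~\ref{prop:ssn}.

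The remaining step is to bound $\|L\|_2\|L^{-1}\|_2$ by $\sqrt{(\|G\|_2+\varepsilon)/\varepsilon}$. We use $\tilde G=LL^\top$, so that $\|L\|_2^2=\lambda_{\max}(LL^\top)=\lambda_{\max}(\tilde G)$. Similarly, $\|L^{-1}\|_2^2=\lambda_{\max}(L^{-\top}L^{-1})=\lambda_{\max}(\tilde G^{-1})=1/\lambda_{\min}(\tilde G)$. Since $G$ is positive semidefinite (Lemma~\ref{lem:clean_gram_pd}), we have $\lambda_{\max}(\tilde G)=\|G\|_2+\varepsilon$ and $\lambda_{\min}(\tilde G)\ge\varepsilon$. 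Taking square roots gives $\|L\|_2\|L^{-1}\|_2=\sqrt{\kappa(\tilde G)}\le\sqrt{(\|G\|_2+\varepsilon)/\varepsilon}$. Multiplying the factors together yields the claim.

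The main point needing care is this step, the identity $\|L\|_2^2=\|LL^\top\|_2$ and its analogue for $L^{-1}$. We justify it through singular values: the squared singular values of $L$ are the eigenvalues of $LL^\top$ (equivalently of $L^\top L$). This reduces the Cholesky factor's conditioning to that of $\tilde G$, with no other subtlety.
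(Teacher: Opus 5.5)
Your proposal is correct and follows the paper's proof essentially step for step. Linearity in $q$ gives the Jacobian, submultiplicativity splits the norm, and $\|\hat A_w^K\|_2\le\gamma^K$ handles the filter; the singular values of $L$ being the square roots of the eigenvalues of $\tilde G$ then give $\|L\|_2\le\sqrt{\|G\|_2+\varepsilon}$ and $\|L^{-1}\|_2\le 1/\sqrt{\varepsilon}$ (your equality $\lambda_{\max}(\tilde G)=\|G\|_2+\varepsilon$, valid since $G\succeq 0$, harmlessly sharpens the paper's inequality).
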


\begin{proof}
The map
\[
q \mapsto y_{\mathrm{SKA}}(q) = \eta \, B_v L \hat A_w^K L^{-1} q
\]
is linear in \(q\), so its Jacobian is the matrix multiplying \(q\):
\[
J_{\mathrm{SKA}}(q)
=
\eta \, B_v L \hat A_w^K L^{-1}.
\]

We now bound its operator norm. By submultiplicativity,
\[
\|J_{\mathrm{SKA}}(q)\|_2
\le
\eta \, \|B_v\|_2 \, \|L\|_2 \, \|\hat A_w^K\|_2 \, \|L^{-1}\|_2.
\]
Since \(\|\hat A_w\|_2 \le \gamma\),
\[
\|\hat A_w^K\|_2 \le \|\hat A_w\|_2^K \le \gamma^K.
\]

Next, because \(\tilde G = L L^\top\) is symmetric positive definite, the
singular values of \(L\) are the square roots of the eigenvalues of
\(\tilde G\). Therefore
\[
\|L\|_2 = \sqrt{\lambda_{\max}(\tilde G)},
\qquad
\|L^{-1}\|_2 = \frac{1}{\sqrt{\lambda_{\min}(\tilde G)}}.
\]
Also,
\[
\lambda_{\max}(\tilde G)
=
\lambda_{\max}(G + \varepsilon I_r)
\le
\|G\|_2 + \varepsilon,
\]
and
\[
\lambda_{\min}(\tilde G) \ge \varepsilon.
\]
Hence
\[
\|L\|_2 \le \sqrt{\|G\|_2 + \varepsilon},
\qquad
\|L^{-1}\|_2 \le \frac{1}{\sqrt{\varepsilon}}.
\]
Substituting these bounds gives
\[
\|J_{\mathrm{SKA}}(q)\|_2
\le
\eta \, \|B_v\|_2 \, \gamma^K
\sqrt{\frac{\|G\|_2 + \varepsilon}{\varepsilon}}.
\]
\end{proof}

\begin{proposition}[Full-rank regime for SKA]
\label{prop:grad-ska-full-rank}
Assume all of the following:
\begin{enumerate}
    \item \(P \ge r\),
    \item \(\operatorname{rank}(C_v)=r\),
    \item \(\hat A_w\) is invertible.
\end{enumerate}
Then
\[
\operatorname{rank}(J_{\mathrm{SKA}}(q)) = r.
\]
In particular, as a map from query directions to head outputs, the SKA
branch has no query-side null direction.
\end{proposition}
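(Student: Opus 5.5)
The plan is to factor the Jacobian from Proposition~\ref{prop:grad-ska-jac} into pieces whose ranks we control, and use the rank of products under multiplication by invertible matrices. Write
\[
J_{\mathrm{SKA}}(q) = \eta\, C_v \tilde G^{-1} L \hat A_w^K L^{-1}
= \eta\, C_v\, N,
\qquad N \coloneqq \tilde G^{-1} L \hat A_w^K L^{-1} \in \mathbb{R}^{r\times r}.
\]

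First I will show that $N$ is invertible.
\begin{itemize}
\item By Lemma~\ref{lem:clean_gram_pd}, $\tilde G \succ 0$, so $\tilde G^{-1}$ exists.
\item The Cholesky factor $L$ is lower triangular with positive diagonal, hence invertible.
\item By assumption (3), $\hat A_w$ is invertible, so $\hat A_w^K$ is invertible for every $K\in\mathbb{N}$, including $K=0$ where it equals $I_r$.
\end{itemize}
A product of invertible $r\times r$ matrices is invertible, and $\eta>0$, so $\eta N$ is invertible.

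Next I will use the standard fact that right-multiplication by an invertible matrix preserves rank. Since $\eta N$ is a bijection of $\mathbb{R}^r$, the column space of $C_v(\eta N)$ equals the column space of $C_v$. Hence
\[
\operatorname{rank}(J_{\mathrm{SKA}}(q)) = \operatorname{rank}(C_v) = r
\]
by assumption (2).

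Assumption (1), $P\ge r$, is what makes assumption (2) consistent: $C_v\in\mathbb{R}^{P\times r}$ can have rank $r$ only if $P\ge r$. I will note this rather than use it separately.

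For the final sentence of the statement, rank $r$ for a $P\times r$ matrix means it has full column rank, so its null space is $\{0\}$ by rank--nullity. Thus no nonzero query direction $\delta q$ gives $J_{\mathrm{SKA}}\,\delta q = 0$. Equivalently, by the linearity established in Proposition~\ref{prop:clean_expressivity}, the map $q\mapsto y_{\mathrm{SKA}}(q)$ is injective.

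There is no real obstacle. The only point needing care is whether "$\hat A_w$" already includes the $\gamma$ rescaling and the spectral normalization. Both steps multiply by positive scalars, so they preserve invertibility, and the hypothesis is stated directly on $\hat A_w$ anyway. I will also remark that $J_{\mathrm{SKA}}$ does not depend on $q$, so the result holds uniformly in $q$.
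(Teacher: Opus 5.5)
Your proposal is correct and follows essentially the same argument as the paper: both write $J_{\mathrm{SKA}}(q)$ as $C_v$ right-multiplied by an invertible $r\times r$ matrix and conclude that $\operatorname{rank}(J_{\mathrm{SKA}}(q))=\operatorname{rank}(C_v)=r$. The only difference is grouping, and it is cosmetic: the paper first gets $\operatorname{rank}(B_v)=\operatorname{rank}(C_v)$ and then uses that $\Phi_K$ is invertible, whereas you absorb $\tilde G^{-1}$ into $N$. Your observation that hypothesis (1) is implied by hypothesis (2) is also correct.
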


\begin{proof}
Since \(\tilde G\) is positive definite, \(\tilde G^{-1}\) is invertible.
Therefore
\[
B_v = C_v \tilde G^{-1}
\]
has the same rank as \(C_v\), namely
\[
\operatorname{rank}(B_v)=\operatorname{rank}(C_v)=r.
\]
Since \(P \ge r\), this means \(B_v\) has full column rank.

Next, \(L\) and \(L^{-1}\) are invertible. By assumption, \(\hat A_w\) is
invertible, so \(\hat A_w^K\) is invertible. Hence
\[
\Phi_K = L \hat A_w^K L^{-1}
\]
is invertible.

Now
\[
J_{\mathrm{SKA}}(q) = \eta \, B_v \Phi_K.
\]
Multiplication by the nonzero scalar \(\eta\) does not change rank, and
right multiplication by an invertible matrix does not change rank. Hence
\[
\operatorname{rank}(J_{\mathrm{SKA}}(q))
=
\operatorname{rank}(B_v)
=
r.
\]
\end{proof}

\begin{proposition}[Lower singular-value bound on the active query subspace]
\label{prop:grad-ska-smin}
Assume the hypotheses of Proposition~\ref{prop:grad-ska-full-rank}, and set
\[
\nu \coloneqq \sigma_{\min}(\hat A_w) > 0.
\]
Then
\[
\sigma_{\min}(J_{\mathrm{SKA}}(q))
\ge
\eta \, \sigma_{\min}(B_v) \, \nu^K
\sqrt{\frac{\varepsilon}{\|G\|_2 + \varepsilon}}.
\]
\end{proposition}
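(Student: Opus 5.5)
The plan is to work directly from the factorization established in Proposition~\ref{prop:grad-ska-jac},
\[
J_{\mathrm{SKA}}(q) = \eta\, B_v\, L\, \hat A_w^K\, L^{-1},
\]
and to lower-bound its smallest singular value factor by factor. Since $J_{\mathrm{SKA}}(q) \in \mathbb{R}^{P \times r}$ with $P \ge r$, I will use $\sigma_{\min}$ to mean the $r$-th singular value. For such a matrix it has the variational characterization $\sigma_{\min}(X) = \min_{\|u\|_2 = 1} \|Xu\|_2$.

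The key tool is the product inequality $\sigma_{\min}(XY) \ge \sigma_{\min}(X)\,\sigma_{\min}(Y)$. It holds for $X \in \mathbb{R}^{P \times r}$ and $Y \in \mathbb{R}^{r \times r}$, and I will prove it in one line from the variational form. For any unit $u$, we have $\|XYu\|_2 \ge \sigma_{\min}(X)\,\|Yu\|_2$, because $Yu/\|Yu\|_2$ is a unit vector (the case $Yu = 0$ is trivial). In addition, $\|Yu\|_2 \ge \sigma_{\min}(Y)$, and minimizing over $u$ gives the claim. I will apply this repeatedly with $X = B_v$ followed by the square factors. This uses the fact that $B_v$ has full column rank under the hypotheses of Proposition~\ref{prop:grad-ska-full-rank}, so $\sigma_{\min}(B_v) > 0$ and the bound is nontrivial (though the inequality holds regardless). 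The scalar $\eta > 0$ simply factors out.

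Next I bound the square factors one at a time.
\begin{itemize}
\item For the power filter, iterating the product inequality gives $\sigma_{\min}(\hat A_w^K) \ge \sigma_{\min}(\hat A_w)^K = \nu^K$.
\item For the Cholesky factor, the singular values of $L$ are the square roots of the eigenvalues of $\tilde G = LL^\top$, as already used in Proposition~\ref{prop:grad-ska-jac}. Hence $\sigma_{\min}(L) = \sqrt{\lambda_{\min}(\tilde G)} \ge \sqrt{\varepsilon}$ by Lemma~\ref{lem:clean_gram_pd}.
\item For the inverse, $\sigma_{\min}(L^{-1}) = 1/\|L\|_2 = 1/\sqrt{\lambda_{\max}(\tilde G)} \ge 1/\sqrt{\|G\|_2 + \varepsilon}$.
\end{itemize}
Multiplying these together with $\eta\,\sigma_{\min}(B_v)$ yields
\[
\sigma_{\min}(J_{\mathrm{SKA}}(q)) \ge \eta\,\sigma_{\min}(B_v)\,\nu^K \sqrt{\frac{\varepsilon}{\|G\|_2 + \varepsilon}}.
\]

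The only step needing care is the product inequality for a rectangular left factor. The inequality can fail when the roles are reversed, that is, with a wide matrix on the right. Here, however, all right factors are square and invertible, so the variational argument goes through cleanly. I expect the rest to be routine bookkeeping.
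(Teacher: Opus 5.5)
Your proof is correct and follows the paper's argument essentially step for step: factor $J_{\mathrm{SKA}}(q)=\eta B_v L\hat A_w^K L^{-1}$, apply $\sigma_{\min}(XY)\ge\sigma_{\min}(X)\sigma_{\min}(Y)$ repeatedly, and use $\sigma_{\min}(\hat A_w^K)\ge\nu^K$, $\sigma_{\min}(L)\ge\sqrt{\varepsilon}$, and $\sigma_{\min}(L^{-1})\ge 1/\sqrt{\|G\|_2+\varepsilon}$. The only addition is your variational proof of the product inequality for the tall left factor $B_v$, which the paper simply states without proof.
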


\begin{proof}
We start from
\[
J_{\mathrm{SKA}}(q)
=
\eta \, B_v L \hat A_w^K L^{-1}.
\]
For matrices of compatible dimensions,
\[
\sigma_{\min}(AB) \ge \sigma_{\min}(A)\sigma_{\min}(B).
\]
Applying this repeatedly,
\[
\sigma_{\min}(J_{\mathrm{SKA}}(q))
\ge
\eta \, \sigma_{\min}(B_v)\sigma_{\min}(L)\sigma_{\min}(\hat A_w^K)\sigma_{\min}(L^{-1}).
\]

Since \(\hat A_w\) is invertible,
\[
\sigma_{\min}(\hat A_w^K) \ge \sigma_{\min}(\hat A_w)^K = \nu^K.
\]

Next,
\[
\sigma_{\min}(L) = \sqrt{\lambda_{\min}(\tilde G)} \ge \sqrt{\varepsilon}.
\]
Also,
\[
\sigma_{\min}(L^{-1})
=
\frac{1}{\sigma_{\max}(L)}
=
\frac{1}{\sqrt{\lambda_{\max}(\tilde G)}}
\ge
\frac{1}{\sqrt{\|G\|_2 + \varepsilon}}.
\]
Substituting these bounds gives
\[
\sigma_{\min}(J_{\mathrm{SKA}}(q))
\ge
\eta \, \sigma_{\min}(B_v) \, \nu^K
\sqrt{\frac{\varepsilon}{\|G\|_2 + \varepsilon}}.
\]
\end{proof}

Propositions~\ref{prop:grad-ska-full-rank} and
\ref{prop:grad-ska-smin} are the strongest correct forms of the
“no extra projection” intuition. The unconditional statement is false in
general, because \(B_v\) can be rank-deficient and \(\hat A_w\) can be
singular. The correct statement is that, in the full-rank regime, the SKA
query map remains full rank and its nonzero singular values are controlled
by conditioning parameters, not by a softmax simplex factor. :contentReference[oaicite:2]{index=2}

\subsection{Koopman MLP}

We now analyze the spectral core of the Koopman MLP. For the nonlinear branch,
write
\begin{align}
a &= W_{\mathrm{lift}} h \in \mathbb{R}^{d_k}, \\
u &= \operatorname{SiLU}(a) \in \mathbb{R}^{d_k}, \\
z &= R u \in \mathbb{R}^{d_k}, \\
y_{\mathrm{KMLP}}(h) &= W_{\mathrm{read}} z \in \mathbb{R}^{d},
\end{align}
where
\[
R = \operatorname{blockdiag}(R_1,\dots,R_{d_k/2}),
\qquad
R_i =
\begin{bmatrix}
\gamma_i & \omega_i \\
-\omega_i & \gamma_i
\end{bmatrix}.
\]
Define
\[
\lambda_i = \gamma_i + i\omega_i,
\qquad
\rho_i = |\lambda_i| = \sqrt{\gamma_i^2 + \omega_i^2}.
\]
By construction, \(\rho_i \le 1\).

\begin{proposition}[Each spectral block is isotropic scaling times rotation]
\label{prop:grad-kmlp-block}
For each \(i\), if \(\rho_i > 0\), then
\[
R_i
=
\rho_i
\begin{bmatrix}
\cos \theta_i & \sin \theta_i \\
-\sin \theta_i & \cos \theta_i
\end{bmatrix}
\]
for some angle \(\theta_i\). Consequently,
\[
R_i^\top R_i = \rho_i^2 I_2,
\]
so both singular values of \(R_i\) are equal to \(\rho_i\).
Therefore
\[
\|R\|_2 = \max_i \rho_i \le 1.
\]
\end{proposition}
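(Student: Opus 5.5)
The plan is a direct computation in three steps: put each block in polar form, compute $R_i^\top R_i$, then lift the result to the block-diagonal $R$.

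\textbf{Polar form of a block.} Since $\rho_i>0$, the point $(\gamma_i/\rho_i,\,\omega_i/\rho_i)$ lies on the unit circle. Hence there is an angle $\theta_i$ (e.g.\ $\theta_i=\operatorname{atan2}(\omega_i,\gamma_i)$) with
\[
\cos\theta_i=\frac{\gamma_i}{\rho_i},\qquad \sin\theta_i=\frac{\omega_i}{\rho_i}.
\]
Factoring $\rho_i$ out of $R_i$ gives the claimed form $R_i=\rho_i\begin{bmatrix}\cos\theta_i & \sin\theta_i\\ -\sin\theta_i & \cos\theta_i\end{bmatrix}$.

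\textbf{Singular values of a block.} I would compute $R_i^\top R_i$ directly from the entries $\gamma_i,\omega_i$. The diagonal entries are $\gamma_i^2+\omega_i^2=\rho_i^2$, and the off-diagonal entries are $\gamma_i\omega_i-\omega_i\gamma_i=0$, so $R_i^\top R_i=\rho_i^2 I_2$. This identity also holds when $\rho_i=0$, trivially. The eigenvalues of $R_i^\top R_i$ are therefore both $\rho_i^2$, so both singular values of $R_i$ equal $\rho_i$.

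\textbf{Lifting to $R$.} Because $R$ is block diagonal, $R^\top R=\operatorname{blockdiag}(R_1^\top R_1,\dots,R_{d_k/2}^\top R_{d_k/2})=\operatorname{blockdiag}(\rho_1^2 I_2,\dots,\rho_{d_k/2}^2 I_2)$. This is a diagonal matrix with entries $\rho_i^2$, each appearing twice. Taking the square root of its largest eigenvalue gives $\|R\|_2=\max_i\rho_i$. The final inequality $\max_i\rho_i\le 1$ follows from the clamp to the unit disk stated in the construction.

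None of these steps is hard. The only care needed is to keep the sign convention of the rotation consistent with $R_i$, which has $+\omega_i$ in the upper-right entry, and to note that the case $\rho_i=0$ is excluded from the polar form but not from the norm identity.
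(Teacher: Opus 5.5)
Your proof is correct and follows essentially the same route as the paper: write $R_i$ in polar form using $\cos\theta_i=\gamma_i/\rho_i$ and $\sin\theta_i=\omega_i/\rho_i$, show $R_i^\top R_i=\rho_i^2 I_2$, then use the block-diagonal structure to get $\|R\|_2=\max_i\rho_i$. The one small difference is that you compute $R_i^\top R_i$ entrywise instead of through orthogonality of the rotation factor, which also covers blocks with $\rho_i=0$; the paper's argument does not treat that case explicitly.
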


\begin{proof}
If \(\rho_i > 0\), define
\[
\cos \theta_i = \frac{\gamma_i}{\rho_i},
\qquad
\sin \theta_i = \frac{\omega_i}{\rho_i}.
\]
Then
\[
R_i
=
\begin{bmatrix}
\gamma_i & \omega_i \\
-\omega_i & \gamma_i
\end{bmatrix}
=
\rho_i
\begin{bmatrix}
\cos \theta_i & \sin \theta_i \\
-\sin \theta_i & \cos \theta_i
\end{bmatrix}.
\]
The second factor is an orthogonal rotation matrix. Hence
\[
R_i^\top R_i
=
\rho_i^2 I_2.
\]
The singular values of \(R_i\) are the square roots of the eigenvalues of
\(R_i^\top R_i\), so both singular values equal \(\rho_i\).

Since \(R\) is block diagonal, its singular values are the union of the
singular values of the blocks. Therefore
\[
\|R\|_2 = \max_i \rho_i \le 1.
\]
\end{proof}

\begin{proposition}[Koopman MLP Jacobian]
\label{prop:grad-kmlp-jac}
Let
\[
D_{\mathrm{SiLU}}(a) \coloneqq \operatorname{Diag}(\operatorname{SiLU}'(a)).
\]
Then the Jacobian of the Koopman MLP nonlinear branch is
\[
J_{\mathrm{KMLP}}(h)
=
W_{\mathrm{read}} R D_{\mathrm{SiLU}}(a) W_{\mathrm{lift}}.
\]
Moreover, if
\[
c_{\mathrm{SiLU}} \coloneqq \sup_{x \in \mathbb{R}} |\operatorname{SiLU}'(x)| < \infty,
\]
then
\[
\|J_{\mathrm{KMLP}}(h)\|_2
\le
c_{\mathrm{SiLU}}
\|W_{\mathrm{read}}\|_2
\|W_{\mathrm{lift}}\|_2.
\]
\end{proposition}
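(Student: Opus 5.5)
The proof is a chain-rule computation followed by submultiplicativity. I will treat the branch as the composition $h \mapsto a = W_{\mathrm{lift}} h \mapsto u = \operatorname{SiLU}(a) \mapsto z = R u \mapsto y = W_{\mathrm{read}} z$ and differentiate each stage in the column-vector convention fixed at the start of this section.

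The three linear stages have constant Jacobians, namely $W_{\mathrm{lift}}$, $R$ and $W_{\mathrm{read}}$. The SiLU stage acts coordinatewise, so its Jacobian is diagonal: $\partial u_i/\partial a_j = \delta_{ij}\operatorname{SiLU}'(a_i)$, which is exactly $D_{\mathrm{SiLU}}(a)$. Composing these in the order they are applied gives $J_{\mathrm{KMLP}}(h) = W_{\mathrm{read}} R\, D_{\mathrm{SiLU}}(a)\, W_{\mathrm{lift}}$, which is the claimed formula. Here $R$ is constant in $h$, since the eigenvalue parameters $\gamma_i,\omega_i$ are learned weights rather than functions of the input, so no extra terms appear.

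For the norm bound I apply submultiplicativity of the operator norm to get
\[
\|J_{\mathrm{KMLP}}(h)\|_2 \le \|W_{\mathrm{read}}\|_2\,\|R\|_2\,\|D_{\mathrm{SiLU}}(a)\|_2\,\|W_{\mathrm{lift}}\|_2 .
\]
Proposition~\ref{prop:grad-kmlp-block} gives $\|R\|_2 = \max_i \rho_i \le 1$. The factor $D_{\mathrm{SiLU}}(a)$ is diagonal, so its spectral norm is the largest absolute diagonal entry, namely $\max_i |\operatorname{SiLU}'(a_i)| \le c_{\mathrm{SiLU}}$. Substituting these two bounds yields the stated inequality.

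There is essentially no obstacle. The only point worth a remark is that $c_{\mathrm{SiLU}}$ is indeed finite, as the statement assumes. From $\operatorname{SiLU}'(x) = \sigma(x)\bigl(1 + x(1-\sigma(x))\bigr)$, the derivative is continuous and tends to $1$ as $x\to+\infty$ and to $0$ as $x\to-\infty$, so it is bounded; its supremum is about $1.1$. The case $\rho_i = 0$, which Proposition~\ref{prop:grad-kmlp-block} excludes, causes no difficulty: such a block is the zero matrix and only lowers $\|R\|_2$, so $\|R\|_2 \le 1$ still holds.
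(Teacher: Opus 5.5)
Your proposal is correct and follows the paper's proof essentially step for step: the same four-stage chain rule producing $W_{\mathrm{read}} R\, D_{\mathrm{SiLU}}(a)\, W_{\mathrm{lift}}$, followed by submultiplicativity with $\|R\|_2 \le 1$ from Proposition~\ref{prop:grad-kmlp-block} and $\|D_{\mathrm{SiLU}}(a)\|_2 = \max_j |\operatorname{SiLU}'(a_j)| \le c_{\mathrm{SiLU}}$. Your extra remarks go slightly beyond the paper's argument and are accurate: $c_{\mathrm{SiLU}}$ is indeed finite, with supremum about $1.1$, and the $\rho_i = 0$ block is the zero matrix, so it does not affect $\|R\|_2 \le 1$.
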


\begin{proof}
We differentiate the branch step by step.

First,
\[
a = W_{\mathrm{lift}} h,
\]
so
\[
da = W_{\mathrm{lift}} \, dh.
\]

Second,
\[
u = \operatorname{SiLU}(a),
\]
so
\[
du = D_{\mathrm{SiLU}}(a) \, da.
\]

Third,
\[
z = R u,
\]
so
\[
dz = R \, du.
\]

Fourth,
\[
y_{\mathrm{KMLP}} = W_{\mathrm{read}} z,
\]
so
\[
dy_{\mathrm{KMLP}} = W_{\mathrm{read}} \, dz.
\]

Combining these relations,
\begin{align}
dy_{\mathrm{KMLP}}
&=
W_{\mathrm{read}} R D_{\mathrm{SiLU}}(a) W_{\mathrm{lift}} \, dh.
\end{align}
Therefore
\[
J_{\mathrm{KMLP}}(h)
=
W_{\mathrm{read}} R D_{\mathrm{SiLU}}(a) W_{\mathrm{lift}}.
\]

For the operator-norm bound, apply submultiplicativity:
\[
\|J_{\mathrm{KMLP}}(h)\|_2
\le
\|W_{\mathrm{read}}\|_2
\|R\|_2
\|D_{\mathrm{SiLU}}(a)\|_2
\|W_{\mathrm{lift}}\|_2.
\]
By Proposition~\ref{prop:grad-kmlp-block},
\[
\|R\|_2 \le 1.
\]
Also, since \(D_{\mathrm{SiLU}}(a)\) is diagonal,
\[
\|D_{\mathrm{SiLU}}(a)\|_2
=
\max_j |\operatorname{SiLU}'(a_j)|
\le
c_{\mathrm{SiLU}}.
\]
Hence
\[
\|J_{\mathrm{KMLP}}(h)\|_2
\le
c_{\mathrm{SiLU}}
\|W_{\mathrm{read}}\|_2
\|W_{\mathrm{lift}}\|_2.
\]
\end{proof}

The important correction is that the spectral core is not orthogonal unless
\(\rho_i=1\) for every block. The correct statement is that each block is an
isotropic scaling by \(\rho_i\) followed by a rotation, with \(\rho_i \le 1\).
So the Koopman block does not introduce anisotropy within a \(2 \times 2\)
mode pair, but it can contract that pair uniformly. :contentReference[oaicite:3]{index=3}

\subsection{Layerwise comparison}

We now collect the previous bounds into one depth-wise statement.

Let the gradient immediately after the LM head be
\[
g_{\mathrm{head}}.
\]
Suppose the gradient from layer \(m\) down to layer \(\ell\) passes through
branch Jacobians
\[
J_m, J_{m-1}, \dots, J_{\ell+1}.
\]
Then
\[
g_\ell = J_{\ell+1}^\top J_{\ell+2}^\top \cdots J_m^\top g_{\mathrm{head}}.
\]

\begin{theorem}[Depth-wise gradient upper bounds]
\label{thm:grad-depth}
For any such sequence of Jacobians,
\[
\|g_\ell\|_2
\le
\left(\prod_{j=\ell+1}^{m} \|J_j\|_2\right)\|g_{\mathrm{head}}\|_2.
\]
Consequently:

\begin{enumerate}
    \item For a standard architecture, if \(\mathcal A_{\ell}\) is the set of
    attention branches above layer \(\ell\), and \(\mathcal R_{\ell}\) is the
    set of all other branch Jacobians above layer \(\ell\), then
    \[
    \|g_\ell\|_2
    \le
    \left(
    \prod_{j \in \mathcal A_{\ell}}
    \frac{\|V_j\|_2 \|K_j\|_2}{2\sqrt{d_{k,j}}}
    \right)
    \left(
    \prod_{j \in \mathcal R_{\ell}}
    \|J_j\|_2
    \right)
    \|g_{\mathrm{head}}\|_2.
    \]

    \item More sharply, if \(m_j = \max_i \alpha_{j,i}\) for the \(j\)-th
    attention branch, then
    \[
    \|g_\ell\|_2
    \le
    \left(
    \prod_{j \in \mathcal A_{\ell}}
    \frac{2(1-m_j)\|V_j\|_2 \|K_j\|_2}{\sqrt{d_{k,j}}}
    \right)
    \left(
    \prod_{j \in \mathcal R_{\ell}}
    \|J_j\|_2
    \right)
    \|g_{\mathrm{head}}\|_2.
    \]

    \item For Koopman-LM, if \(\mathcal S_{\ell}\) is the set of SKA branches
    above layer \(\ell\), \(\mathcal M_{\ell}\) is the set of Koopman MLP
    branches above layer \(\ell\), and \(\mathcal R_{\ell}\) is the set of all
    remaining branch Jacobians above layer \(\ell\), then
    \[
    \|g_\ell\|_2
    \le
    \left(
    \prod_{j \in \mathcal S_{\ell}}
    \eta_j \|B_{v,j}\|_2 \gamma_j^{K_j}
    \sqrt{\frac{\|G_j\|_2+\varepsilon_j}{\varepsilon_j}}
    \right)
    \left(
    \prod_{j \in \mathcal M_{\ell}}
    c_{\mathrm{SiLU}}
    \|W_{\mathrm{read},j}\|_2
    \|W_{\mathrm{lift},j}\|_2
    \right)
    \left(
    \prod_{j \in \mathcal R_{\ell}}
    \|J_j\|_2
    \right)
    \|g_{\mathrm{head}}\|_2.
    \]
\end{enumerate}
\end{theorem}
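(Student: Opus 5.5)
The plan is to prove the general inequality first and then get the three enumerated items by plugging in the per-layer operator-norm bounds already established. Start from the chain-rule identity
\[
g_\ell = J_{\ell+1}^\top J_{\ell+2}^\top \cdots J_m^\top g_{\mathrm{head}}.
\]
Apply submultiplicativity of the operator norm repeatedly, together with the fact that \(\|J^\top\|_2 = \|J\|_2\): the largest singular value of a matrix equals that of its transpose. This gives \(\|g_\ell\|_2 \le \bigl(\prod_{j=\ell+1}^m \|J_j\|_2\bigr)\|g_{\mathrm{head}}\|_2\) by a one-line induction on \(m-\ell\). The step \(m-\ell=0\) is trivial, and each further factor multiplies the bound by one operator norm.

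For the three consequences, partition the index set \(\{\ell+1,\dots,m\}\) into the relevant classes, which are disjoint. The product of norms then factors as a product over each class. Within each class, replace \(\|J_j\|_2\) by the appropriate upper bound.

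For item 1, use the operator-norm bound of Proposition~\ref{prop:grad-softmax-jac} on each attention branch, namely \(\|J_j\|_2 \le \|V_j\|_2\|K_j\|_2/(2\sqrt{d_{k,j}})\). For item 2, use instead Corollary~\ref{cor:grad-softmax-saturation} with \(m_j = \max_i \alpha_{j,i}\). For item 3, use Proposition~\ref{prop:grad-ska-jac} on SKA branches, with \(\gamma_j\) the bound on \(\|\hat A_{w,j}\|_2\), and Proposition~\ref{prop:grad-kmlp-jac} on Koopman MLP branches. Leave the remaining branches as \(\|J_j\|_2\). Since every factor is nonnegative, replacing each factor by a larger nonnegative number preserves the inequality, which is the only monotonicity fact needed.

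The main point requiring care is bookkeeping rather than analysis. The per-layer propositions bound Jacobians with respect to the query only, with keys, values and statistics held fixed. The statement treats these as the branch Jacobians \(J_j\). I would state explicitly that the theorem is read under that same local convention, so that the attention, SKA and Koopman MLP bounds apply verbatim to \(J_j\) and to \(J_j^\top\).

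I would also note two smaller points. First, item 3 needs the hypothesis \(\|\hat A_{w,j}\|_2 \le \gamma_j\) for each SKA branch; this holds by Proposition~\ref{prop:ssn} with \(\gamma_j\) the learned scale. Second, the residual identity terms are excluded, since the theorem concerns branch Jacobians only.
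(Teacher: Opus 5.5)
Your proposal is correct and follows the paper's proof essentially line for line. The paper also bounds the chain $g_\ell = J_{\ell+1}^\top\cdots J_m^\top g_{\mathrm{head}}$ by repeatedly applying $\|A^\top x\|_2 \le \|A\|_2\|x\|_2$, and then obtains the three items by substituting Proposition~\ref{prop:grad-softmax-jac}, Corollary~\ref{cor:grad-softmax-saturation}, Proposition~\ref{prop:grad-ska-jac} and Proposition~\ref{prop:grad-kmlp-jac} factor by factor. Your added caveats are assumptions the paper leaves implicit, so stating them simply makes the scope of the result explicit: $J_j$ is the query-side branch Jacobian with the statistics held fixed, each SKA factor needs $\|\hat A_{w,j}\|_2 \le \gamma_j$, and the residual identity terms are excluded.
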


\begin{proof}
The general inequality follows from repeated use of
\[
\|A^\top x\|_2 \le \|A^\top\|_2 \|x\|_2 = \|A\|_2 \|x\|_2.
\]
Indeed,
\begin{align}
\|g_\ell\|_2
&=
\|J_{\ell+1}^\top J_{\ell+2}^\top \cdots J_m^\top g_{\mathrm{head}}\|_2 \\
&\le
\|J_{\ell+1}\|_2
\|J_{\ell+2}\|_2
\cdots
\|J_m\|_2
\|g_{\mathrm{head}}\|_2 \\
&=
\left(\prod_{j=\ell+1}^{m} \|J_j\|_2\right)\|g_{\mathrm{head}}\|_2.
\end{align}

The standard-architecture bounds follow by substituting the softmax bound
from Proposition~\ref{prop:grad-softmax-jac} and the sharper saturation
bound from Corollary~\ref{cor:grad-softmax-saturation} for every attention
branch.

The Koopman-LM bound follows by substituting the SKA bound from
Proposition~\ref{prop:grad-ska-jac} for every SKA branch and the Koopman MLP
bound from Proposition~\ref{prop:grad-kmlp-jac} for every Koopman MLP branch.
\end{proof}

\subsection{Interpretation}

The conclusion is the following.

After the LM head, every architecture already starts from a gradient of
rank at most \(D\), by Proposition~\ref{prop:grad-head}. Beyond that shared
bottleneck, standard attention introduces an additional softmax covariance
factor
\[
S(q) = \operatorname{Diag}(\alpha(q)) - \alpha(q)\alpha(q)^\top
\]
at each attention branch. This factor always has rank at most \(T-1\), and
its operator norm goes to zero as attention saturates by
Corollary~\ref{cor:grad-softmax-saturation}.

By contrast, an SKA branch contributes a fixed linear operator
\[
J_{\mathrm{SKA}}(q) = \eta B_v L \hat A_w^K L^{-1},
\]
whose conditioning is controlled by the ridge parameter \(\varepsilon\), the
Gram spectrum, the value map \(B_v\), and the spectral radius bound on
\(\hat A_w\). Under the full-rank conditions in
Propositions~\ref{prop:grad-ska-full-rank} and \ref{prop:grad-ska-smin},
this operator remains full rank as a query-to-output map.

The Koopman MLP spectral core contributes a blockwise isotropic
scaling-rotation operator, not a softmax-type simplex projection. Its norm
is controlled by the spectral radii \(\rho_i\) and the lift/readout weights.
Afterr the LM-head bottleneck, Koopman-LM replaces repeated softmax covariance
factors, which can collapse under attention saturation, with linear operators
whose singular values are governed by explicit conditioning parameters.
This provides a plausible mechanism for better gradient utilization. 
\newpage
\section{Figures}
\begin{figure}
    \centering   \includegraphics[width=1.0\linewidth]{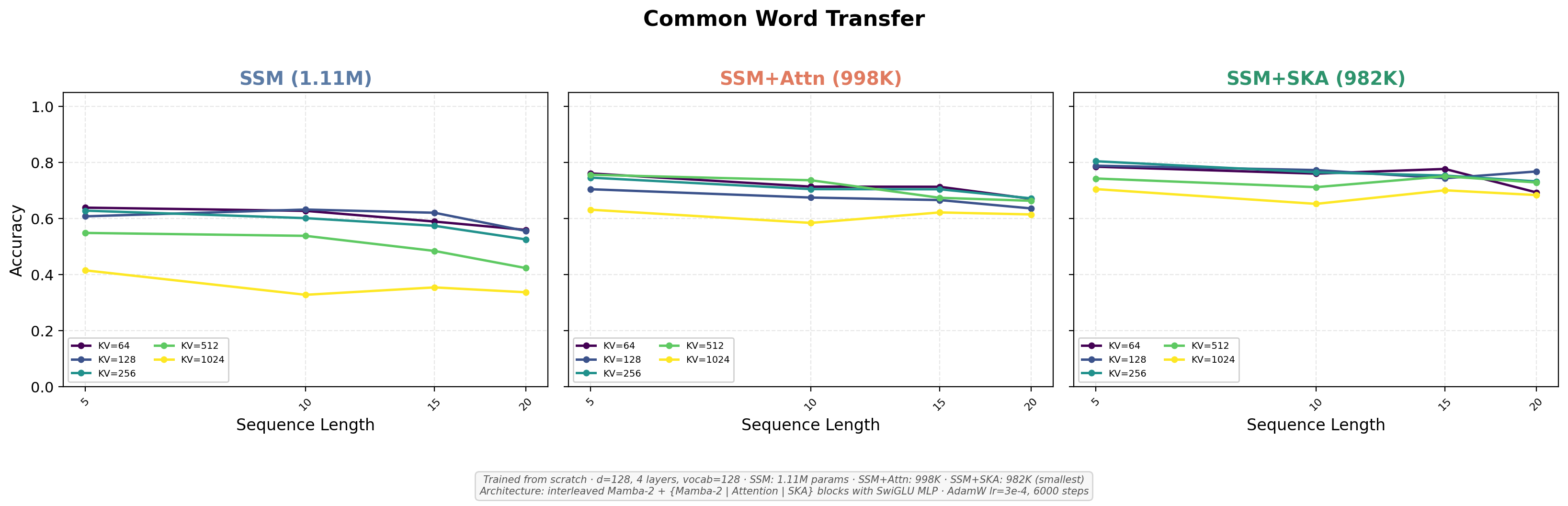}
    \caption{Common word transfer results}
    \label{fig:Common_word}
\end{figure}

\begin{figure}
    \centering
    \includegraphics[width=1.0\linewidth]{assets/cot_experiment_results.png}
    \caption{Cot Experiment, synthetic}
    \label{fig:COT_exp}
\end{figure}
\begin{figure}
    \centering
    \includegraphics[width=1.0\linewidth]{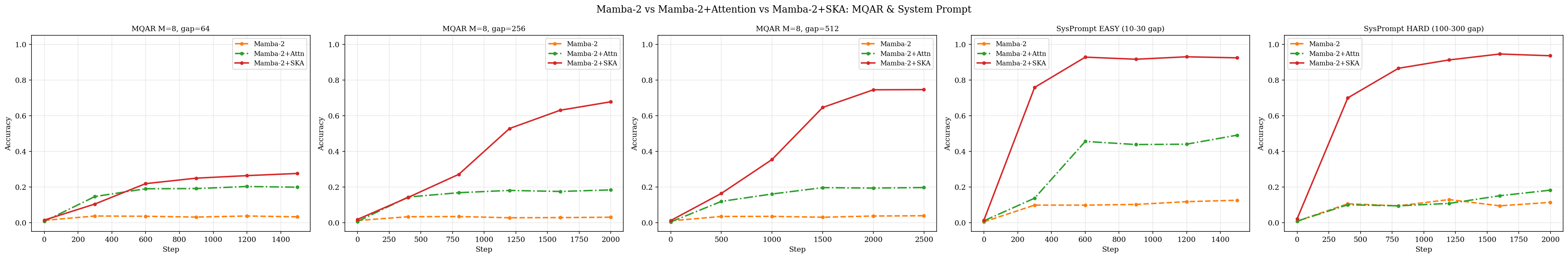}
    \caption{Enter Caption}
    \label{fig:Mamba_2_benchmark}
\end{figure}
\begin{figure}
    \centering
    \includegraphics[width=1.0\linewidth]{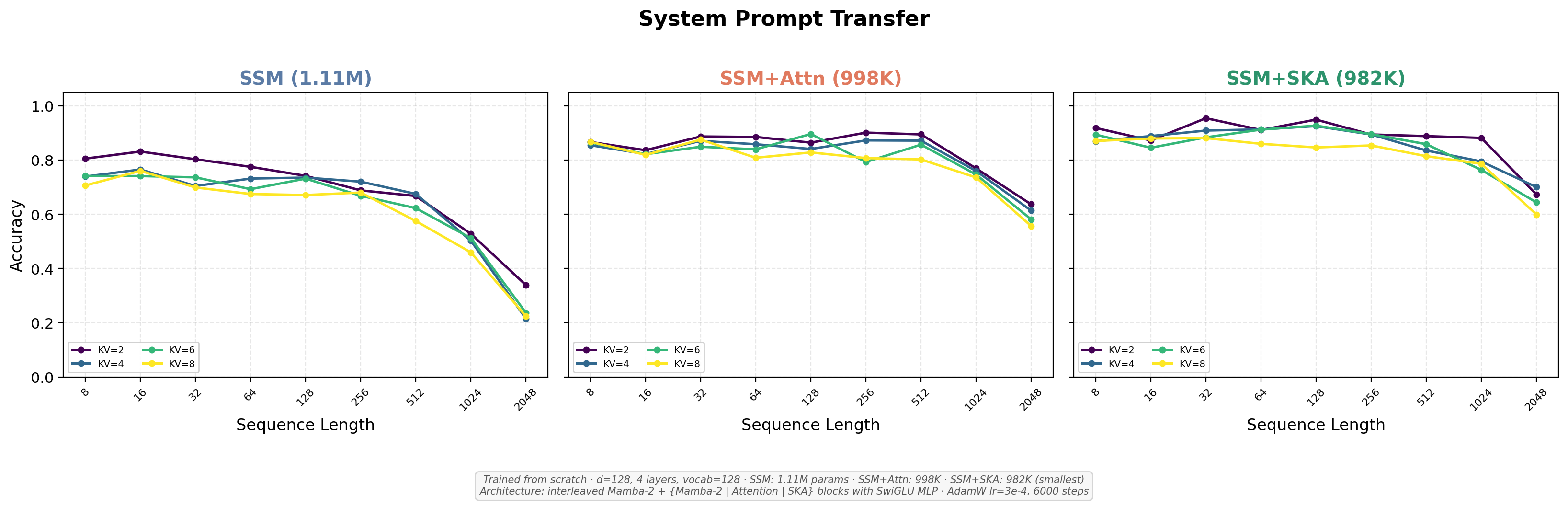}
    \caption{Sysprompt transfer}
    \label{fig:Sysprompt}
\end{figure}

\end{document}